\documentclass[11pt]{article}

\usepackage[letterpaper,margin=1in]{geometry}
\usepackage[T1]{fontenc}
\usepackage{lmodern}
\usepackage{amsmath,amssymb,amsthm,mathtools}
\usepackage{booktabs}
\usepackage{graphicx}
\usepackage{microtype}
\usepackage{placeins}
\usepackage{float}
\usepackage{xcolor}
\usepackage[numbers,sort&compress]{natbib}
\usepackage{hyperref}
\usepackage{url}
\hypersetup{
  hidelinks,
  pdftitle={Recovering Weighted Tangent Geometry from a Single-Scale Score Field},
  pdfauthor={Ziqi Zhao and Qingjian Ni}
}

\title{Recovering Weighted Tangent Geometry from a Single-Scale Score Field}

\author{
  Ziqi Zhao\\
  School of Computer Science and Engineering\\
  Southeast University\\
  Nanjing, China\\
  \texttt{ziqizhao@seu.edu.cn}
  \and
  Qingjian Ni\thanks{Corresponding author.}\\
  School of Computer Science and Engineering\\
  Southeast University\\
  Nanjing, China\\
  \texttt{nqj@seu.edu.cn}
}
\date{}

\newtheorem{theorem}{Theorem}
\newtheorem{proposition}{Proposition}
\newtheorem{lemma}{Lemma}

\newtheorem{appendixtheorem}{Theorem}[section]
\theoremstyle{definition}

\newtheorem{algorithm}{Algorithm}

\newcommand{\R}{\mathbb{R}}
\newcommand{\Sph}{\mathbb{S}}
\newcommand{\T}{\mathsf{T}}
\newcommand{\E}{\mathbb{E}}
\newcommand{\op}{\mathrm{op}}
\newcommand{\dd}{\mathrm{d}}
\newcommand{\eps}{\varepsilon}
\providecommand{\ArcCriticalRatio}{1.5016756510}

\providecommand{\ArcDerivativeLower}{0.89335}
\providecommand{\FarFieldLogDensity}{-435.40}
\providecommand{\FarFieldScoreNorm}{800.99}
\providecommand{\FiniteQueryTrials}{104,352}
\providecommand{\FiniteConditioningTrials}{103,680}
\providecommand{\FiniteLocalizationTrials}{672}
\providecommand{\CertificateExecutionTrials}{336}
\providecommand{\EmpiricalKDETrials}{6,528}
\providecommand{\CoverageTrials}{4,480}
\providecommand{\PerturbationTrials}{2,048}
\providecommand{\CoverageNonAbstaining}{810}
\providecommand{\CoverageCorrectReported}{791}
\providecommand{\PerturbationNonAbstaining}{413}
\providecommand{\PerturbationCorrectReported}{413}
\providecommand{\ScoreCoverageSlope}{-0.490}
\providecommand{\MomentCoverageSlope}{-0.499}
\providecommand{\HeldoutSuccessLow}{98.96\%}
\providecommand{\HeldoutSuccessMedium}{98.96\%}
\providecommand{\HeldoutSuccessHigh}{99.74\%}
\providecommand{\PerturbationBlindMin}{83\%}
\providecommand{\PerturbationBlindMax}{93\%}
\providecommand{\CenterPerfectMaxOffset}{0.1}
\providecommand{\CenterPerfectSuccess}{100\%}
\providecommand{\CenterFirstDegradedOffset}{0.2}
\providecommand{\CenterFirstDegradedSuccess}{68.75\%}
\providecommand{\LearnedSeedCount}{5}
\providecommand{\LearnedPlainParameters}{33,794}
\providecommand{\LearnedResidualParameters}{33,666}
\providecommand{\LearnedStandardModels}{80}
\providecommand{\LearnedLongModels}{40}
\providecommand{\LearnedEvaluationRows}{520}
\providecommand{\LearnedOptimizerUpdates}{280,000}
\providecommand{\LearnedTotalModels}{260}
\providecommand{\LearnedTotalUpdates}{500,000}
\providecommand{\SensitivityWallMinutes}{8.16}
\providecommand{\LearnedEvaluationWallMinutes}{15.51}
\providecommand{\LearnedTotalWallMinutes}{23.67}
\providecommand{\LearnedPeakCudaReservedMiB}{24.0}
\providecommand{\SensitivityCpuPilotSeconds}{18.5}
\providecommand{\LearnedEvaluationCpuPilotSeconds}{12.0}
\providecommand{\SensitivityCpuPilotUpdates}{12,000}
\providecommand{\LearnedEvaluationCpuPilotUpdates}{3,200}

\providecommand{\LearnedStagewiseCountFloor}{4/5}
\providecommand{\LearnedStagewiseFullCeiling}{2/5}
\providecommand{\ClusterPlainLongCount}{5/5}
\providecommand{\ClusterResidualLongCount}{5/5}
\providecommand{\WeakFourPlainLongCount}{5/5}
\providecommand{\WeakFourResidualLongCount}{4/5}
\providecommand{\WeakYPlainLongCount}{3/5}
\providecommand{\WeakYResidualLongCount}{2/5}
\providecommand{\SensitivityUndercountSettings}{3}
\providecommand{\SensitivityUndercountSeeds}{3/5}
\providecommand{\SensitivityLongControls}{4}
\providecommand{\SensitivityLongFailureMin}{80\%}
\providecommand{\SensitivityLongFailureMax}{100\%}

\providecommand{\ConvergenceModels}{110}

\providecommand{\ConvergenceScoreImprovementMin}{5.4\%}
\providecommand{\ConvergenceScoreImprovementMax}{24.0\%}
\providecommand{\ConvergenceMomentIncreaseMin}{3.6\%}
\providecommand{\ConvergenceMomentIncreaseMax}{20.9\%}

\begin{document}

\maketitle

\begin{abstract}
Near a smooth data manifold, one tangent space summarizes local geometry.  At a
branch point, the corresponding first-order object is instead a measure over
tangent directions, whose normalized masses record the local share of each
branch under the chosen data measure.  We ask whether a score field at one noise
level determines this weighted tangent geometry when the branch center and
homogeneity degree $d$ are unknown.  In this tangent-measure model, $d$ is the
local measure dimension.  Gaussian smoothing of a homogeneous tangent measure
satisfies an Ornstein--Uhlenbeck eigenfunction equation.  Its weak form turns
score values---without score derivatives---into a linear system for the center
and homogeneity degree, with an explicit rank condition and perturbation bound.
After this calibration, the tangential score on one sphere is the spherical
log-gradient of a scalar Gaussian--cone transform.  Integration recovers that
transform up to scale, and all its spherical-harmonic multipliers are positive.
Thus one exact shell identifies the normalized angular measure in every ambient
dimension $D\geq2$.  For at most $K$ positive rays, moments through degree
$2K-1$ constructively recover count, directions, and weights in arbitrary
dimension.  Any fixed observation scheme needs at least $KD-1$ scalar
tangential components.  In the plane, degree $K$ is both sufficient and
necessary, and we give quantitative finite-query certificates.  For finite
planar $C^{1,\beta}$ branches with positive $C^{0,\beta}$ densities, we prove
$O(\sigma^\beta)$ convergence from the finite-noise score to its tangent model.
In controlled experiments, 50k-step training lowers validation normalized-score
error across four geometries yet raises angular-moment error, separating
ordinary score fit from geometry recovery.
\end{abstract}

\section{Introduction}
\label{sec:intro}

The score $s_\sigma(x)=\nabla_x\log(\mu*\varphi_\sigma)(x)$ is a local vector
field used by score-based generative models \citep{song2021sde}.  Near a smooth
manifold, its normal component and Jacobian reveal tangent and normal spaces
\citep{stanczuk2024dimension,ventura2025geometric}.  Many supports are not
smooth everywhere: trajectories merge, road segments intersect, and stratified
spaces contain boundaries and junctions.  No single tangent space describes
such a point.  The first-order object is instead a normalized angular tangent
measure.  In a finite-ray model, its atoms record branch directions and their
relative shares of local mass under the data measure.

We call the local scaling exponent the homogeneity degree; in our tangent model,
it is the local measure dimension.  We study an inverse question at one known
noise level:
\begin{quote}
Can local score queries recover a branch point, its homogeneity degree, and the
weighted tangent geometry represented by the field?
\end{quote}
This differs from detecting a singularity in raw samples
\citep{vonrohrscheidt2023topological,lim2025hades}.  A point-cloud method can
identify a neighborhood worth auditing and estimate sample geometry, but it
does not establish what a trained score field represents there.  Conversely, a
low denoising objective does not certify a center, homogeneity degree, branch
count, directions, or mass.  We therefore start from a local query window, not
from an exactly supplied center; global singularity search remains a separate
problem.  Here one noise level refers to the available field used by this local
inverse problem; training and sampling may still use a full noise schedule.

A one-point score Hessian is insufficient for branch geometry: at a conic
vertex it depends only on the first two angular moments.  Every uniform regular
planar $q$-ray junction with $q\geq3$ therefore has the same vertex Hessian,
although different counts produce different score patterns away from the
vertex.  A separate trace identity does encode homogeneity, and its weak form
lets a spatial score pattern first calibrate the unknown center and homogeneity
degree.
Once centered, one normalized shell contains the remaining angular information.

The recovery chain begins with weak score averages that calibrate the center and
homogeneity degree.  Tangential queries on the calibrated shell are then
integrated into a normalized scalar density, whose harmonic coefficients yield
angular moments.  A finite-rank moment pencil reconstructs the branches.  The
procedure finally reports a numerical candidate, a certified stage, or
abstention.  The calibration and integration steps convert vector log-gradient
observations into the centered scalar-moment interface assumed by spectral
estimators.

For an exact homogeneous tangent model, the center, homogeneity, and normalized
angular measure are recoverable in any $D\geq2$ under explicit identifiability
conditions.  Quantitative discrete-query certificates are specialized to the
plane, but exact finite-branch reconstruction is not.  Our contributions are:
\begin{enumerate}
\item \textbf{Score-only center and homogeneity calibration.}  Homogeneity makes
the smoothed conic density an Ornstein--Uhlenbeck eigenfunction.  Testing its
score equation against localized functions gives a linear system for the
unknown center and degree $d$, without differentiating the score.  Full column
rank is sufficient, its failure captures translational symmetries, and a direct
least-squares perturbation bound quantifies approximate fields.
\item \textbf{Constructive recovery in arbitrary dimension.}  In every
$D\geq2$, a centered exact shell identifies any finite positive angular
measure.  If it has at most $K$ atoms, harmonic moments through degree $2K-1$
form a finite-rank multivariate moment pencil whose rank, joint eigenvalues, and
linear coefficients recover count, directions, and weights.  A fixed scheme
requires at least $KD-1$ scalar tangential observations.
\item \textbf{Sharp planar recovery and conditional stability.}  For finite
$C^{1,\beta}$ planar branches with positive
$C^{0,\beta}$ densities, the tangent-score bias is $O(\sigma^\beta)$ on fixed
normalized query sets.  For positive planar rays, moments through degree $K$
suffice, whereas degree $K-1$ does not.  Explicit finite-sampling aliasing and
declared minimum weight, angular separation, and conditioning bounds yield
sufficient count, direction, and weight tests.
\item \textbf{Finite-data and learned-score diagnostics.}  The error chain
exposes tangent bias, local coverage, calibration error, harmonic amplification,
network error, and query aliasing.  Population, empirical kernel-density
estimate (KDE), and learned fields pass through matched recovery stages.
Long-horizon controls show that lower validation normalized-score error can
coexist with higher angular-moment error and incomplete geometry recovery.
\end{enumerate}
Classical spectral methods start from a scalar convolution or its moments.  Our
observation is instead an uncentered vector-valued log-gradient.  The
score-specific part of the pipeline removes the unknown center and homogeneity,
recovers the scalar shell transform up to its irrelevant multiplicative
constant, proves that the Gaussian--cone kernel has no harmonic nullspace, and
propagates score-side errors to moments.  The subsequent multivariate Prony,
matrix-pencil, Toeplitz, and Vandermonde steps are classical
\citep{kunis2016multivariate,kunis2019prony}.
Our results concern local identifiability from an available score slice;
global localization, multiscale necessity, and universality across diffusion
architectures are outside scope.  Appendix~\ref{app:counterexamples} and
Section~\ref{sec:limitations} discuss related boundary cases and limitations.

\section{Self-calibration and one-shell tangent geometry}
\label{sec:shell}

Let an unknown branch point $x_0\in\operatorname{supp}\mu\subset\R^D$ lie in
the local query window.  Tangent measures formalize blow-up limits of measures
\citep{preiss1987geometry}.  We assume the rescaled local measures converge, in
the Gaussian-weighted moments used below, to a nonzero $d$-homogeneous tangent
measure
\begin{equation}
  \dd\nu_\Lambda(r,\theta)=r^{d-1}\dd r\,\dd\Lambda(\theta),
  \qquad \theta\in\Sph^{D-1}.
  \label{eq:cone}
\end{equation}
The noise scale $\sigma$ is known from the score model.  The center $x_0$ and
homogeneity degree $d$ will be recovered rather than supplied.
The score cannot recover the total mass of the finite positive angular measure
$\Lambda$.  Its identifiable target is the normalized measure
\begin{equation*}
 \bar\Lambda(A):=\frac{\Lambda(A)}{\Lambda(\Sph^{D-1})}
 =\frac{\nu_\Lambda(\{r\theta:0<r\leq1,\ \theta\in A\})}
 {\nu_\Lambda(B(0,1))},
 \qquad A\subseteq\Sph^{D-1}.
\end{equation*}
Thus a branch weight is a sector-wise fraction of tangent mass under the
reference measure defining $\mu$.  Overall rescaling and reparameterizations
that preserve $\mu$ leave it unchanged; replacing parameter measure by
Hausdorff or arc-length measure can change it.  For a data-distribution measure,
the weights describe local probability shares, while for a geometric reference
measure they describe relative branch density or multiplicity.  We henceforth
normalize $\Lambda$ to unit mass when discussing weights.  In the planar
finite-ray case, $\Lambda=\sum_{j=1}^{s}w_j\delta_{\theta_j}$ with $w_j>0$ and
$\sum_jw_j=1$ records branch count, directions, and relative mass.

The tangent-measure change of variables gives
\begin{equation}
  \sigma s_\sigma(x_0+\sigma z)\longrightarrow
  F_\Lambda(z):=\nabla_z\log q_\Lambda(z),\qquad
  q_\Lambda(z)=\int e^{-\|z-u\|^2/2}\dd\nu_\Lambda(u).
  \label{eq:tangentlimit}
\end{equation}

For finite planar $C^{1,\beta}$ branches with $C^{0,\beta}$ positive densities,
\eqref{eq:tangentlimit} holds uniformly at rate $O(\sigma^\beta)$ on compact
normalized query sets.  Appendix Theorem~\ref{thm:planar-tangent-bias} gives the
conditions and proof; Appendix Figure~\ref{fig:tangent-bias-rate} checks the
predicted slopes but is not used in the proof.

\subsection{Center and homogeneity from score values}

Choose any reference $c$ in the local query window and write
$y=(x-c)/\sigma$, $b=(x_0-c)/\sigma$, and
$u(y)=\sigma s_\sigma(c+\sigma y)$.  In the exact conic model,
$u(y)=F_\Lambda(y-b)$.  Homogeneity and the Gaussian heat equation imply
\begin{equation}
 \nabla\!\cdot u(y)+\|u(y)\|^2+(y-b)^\top u(y)+D-d=0.
 \label{eq:ou-score}
\end{equation}
\paragraph{Idea.}
Homogeneity balances score energy, radial score, and local dimension.  A wrong
center enters this balance linearly; localized averages therefore produce a
small linear system without differentiating the score.  For a differentiable
localized $\psi$, define
\begin{align}
 a_\psi&=\left(\int\psi(y)u(y)\dd y,\ \int\psi(y)\dd y\right),\\
 r_\psi&=\int\left[-\nabla\psi(y)^\top u(y)
 +\psi(y)\{\|u(y)\|^2+y^\top u(y)+D\}\right]\dd y.
 \label{eq:weak-calibration}
\end{align}

\begin{theorem}[Single-scale center and homogeneity calibration]
\label{thm:calibration}
Stack the rows $a_{\psi_j}$ into $A$ and the values $r_{\psi_j}$ into $r$.
For every exact translated $d$-homogeneous cone,
\begin{equation}
 A\binom{b}{d}=r.
 \label{eq:calibration-system}
\end{equation}
If $A$ has full column rank, score values at this one scale uniquely recover
$b$ and $d$.  For perturbed quantities $\widehat A=A+E$ and
$\widehat r=r+e$, least squares obeys
\begin{equation}
 \left\|\binom{\widehat b}{\widehat d}-\binom{b}{d}\right\|
 \leq\frac{\|e\|+\|E\|\left\|\binom{b}{d}\right\|}
 {\sigma_{\min}(A)-\|E\|},\qquad \|E\|<\sigma_{\min}(A).
 \label{eq:calibration-bound}
\end{equation}
\end{theorem}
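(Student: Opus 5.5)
The plan has three parts: derive the pointwise identity \eqref{eq:ou-score}, integrate it against $\psi$ to obtain \eqref{eq:calibration-system}, and then apply a standard least-squares perturbation argument.

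\textbf{Step 1: the pointwise identity.} Write $p=q_\Lambda(\cdot-b)$, so that $u=\nabla\log p$. The function $q_\Lambda$ is the heat semigroup at time $1$ applied to $\nu_\Lambda$, and since $\nu_\Lambda$ is $d$-homogeneous we have $q_\Lambda(z)=\int e^{-\|z-u\|^2/2}\,\dd\nu_\Lambda(u)$. I will differentiate under the integral:
\[
\Delta q=\int\bigl(\|z-u\|^2-D\bigr)e^{-\|z-u\|^2/2}\,\dd\nu_\Lambda,
\qquad
z^\top\nabla q=-\int z^\top(z-u)\,e^{-\|z-u\|^2/2}\,\dd\nu_\Lambda .
\]
Adding these gives $\Delta q+z^\top\nabla q+Dq=\int\bigl(\|u\|^2-u^\top z\bigr)e^{-\|z-u\|^2/2}\,\dd\nu_\Lambda$.

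The homogeneity relation comes from the scaling $\nu_\Lambda(\lambda\,\cdot)=\lambda^d\nu_\Lambda$. Differentiating $\int g(\lambda u)\,\dd\nu_\Lambda(u)=\lambda^{-d}\int g\,\dd\nu_\Lambda$ at $\lambda=1$ yields $\int u\cdot\nabla_u g\,\dd\nu_\Lambda=-d\int g\,\dd\nu_\Lambda$. Taking $g(u)=e^{-\|z-u\|^2/2}$, we have $u\cdot\nabla_u g=(z^\top u-\|u\|^2)g$. Therefore the right-hand side above equals $dq$, and we obtain
\[
\Delta q+z^\top\nabla q+(D-d)q=0 .
\]
Dividing by $q$ and using $\Delta q/q=\nabla\!\cdot F+\|F\|^2$ gives the identity in the variable $z$. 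Translating by $z=y-b$ then gives \eqref{eq:ou-score}. To justify differentiation under the integral, I will note that Gaussian integrability against $r^{d-1}\,\dd r$ requires $d>0$, and that $q>0$ because $\nu_\Lambda\neq0$.

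\textbf{Step 2: the weak form.} Multiply \eqref{eq:ou-score} by $\psi$ and integrate. For the divergence term, I will integrate by parts, $\int\psi\,\nabla\!\cdot u=-\int\nabla\psi^\top u$, using that $\psi$ is localized (compactly supported, or decaying fast enough against the at most linear growth of $u$) so that there are no boundary terms. Moving the unknown terms $b^\top\!\int\psi u$ and $d\!\int\psi$ to one side leaves exactly $a_\psi\binom{b}{d}=r_\psi$. Stacking these equations over the test functions $\psi_j$ gives \eqref{eq:calibration-system}. When $A$ has full column rank, the solution is unique, and by construction it is determined only by score values at the single scale.

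\textbf{Step 3: the perturbation bound.} Let $\theta=\binom{b}{d}$ and let $\widehat\theta$ minimize $\|\widehat A\theta'-\widehat r\|$. Since $\|E\|<\sigma_{\min}(A)$, Weyl's inequality gives $\sigma_{\min}(\widehat A)\ge\sigma_{\min}(A)-\|E\|>0$. By optimality of $\widehat\theta$,
\[
\|\widehat A(\widehat\theta-\theta)\|\le\|\widehat A\widehat\theta-\widehat r\|+\|\widehat r-\widehat A\theta\|\le2\|e-E\theta\|.
\]
This route loses a factor of $2$. To avoid it, I will use $\widehat\theta=\widehat A^+\widehat r$ and $\widehat A^+\widehat A=I$, which give $\widehat\theta-\theta=\widehat A^+(\widehat r-\widehat A\theta)=\widehat A^+(e-E\theta)$. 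Then $\|\widehat A^+\|\le1/\sigma_{\min}(\widehat A)$ yields exactly \eqref{eq:calibration-bound}.

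The main obstacle is the analytic care in Steps 1–2: justifying differentiation under the integral and integration by parts given that $u$ grows linearly and $\nu_\Lambda$ has infinite mass. I expect Gaussian tails to handle all of these issues.
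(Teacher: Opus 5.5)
Your proposal is correct and follows the paper's proof: the same strong Ornstein--Uhlenbeck score identity, the same integration by parts against $\psi$, and the same pseudoinverse identity $\widehat\theta-\theta=\widehat A^{+}(e-E\theta)$ combined with Weyl's inequality for $\sigma_{\min}(\widehat A)$. The one difference is in how you get $\Delta q+z^\top\nabla q+(D-d)q=0$: you use the infinitesimal Euler identity $\int u\cdot\nabla g\,\dd\nu_\Lambda=-d\int g\,\dd\nu_\Lambda$, whereas the paper differentiates the scaling relation $q_t(z)=t^{(d-D)/2}q_1(z/\sqrt t)$ at $t=1$ and equates it with $\partial_t q_t=\Delta q_t/2$; both encode the same dilation homogeneity.
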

Here $E,e$ collect score, integration, and tangent-model errors; for a stochastic
estimator, the bound holds on any event controlling their norms.  Gaussian tests
make the integrals expectations of score values.  If the cone is translation invariant in a direction
$v$, then $u(y)^\top v=0$, the column for that center direction is
unidentifiable, and shifting the nominal center along $v$ changes no geometry.
Thus the rank condition excludes a genuine symmetry rather than merely a
numerical inconvenience.  Appendix~\ref{app:calibration} proves the strong and
weak identities, the perturbation bound, and the corresponding approximate
tangent statement.  We henceforth recenter at the recovered $x_0$ and use the
recovered $d$.

\subsection{One-shell angular recovery}

Related boundary-layer analysis derives forward asymptotics for tangent cones
and corners \citep{brosse2026boundary}.  Our inverse problem begins with the
limiting field and asks whether it determines the angular measure.

Fix $R>0$, write $z=R\omega$, and define
\begin{equation}
 K_{d,R}(t)=\int_0^\infty r^{d-1}e^{-r^2/2+Rrt}\dd r,
 \qquad
 \T_{d,R}\Lambda(\omega)=\int K_{d,R}(\omega^\top\theta)\dd\Lambda(\theta).
 \label{eq:transform}
\end{equation}
Polar integration and tangential differentiation yield the central identity
\begin{equation}
 q_\Lambda(R\omega)=e^{-R^2/2}\T_{d,R}\Lambda(\omega),\qquad
 \boxed{g_R(\omega):=R P_{\omega^\perp}F_\Lambda(R\omega)
 =\nabla_{\Sph^{D-1}}\log\T_{d,R}\Lambda(\omega).}
 \label{eq:shellidentity}
\end{equation}
We call $g_R$ the \emph{shell tangential score}.  It reveals the positive scalar
shell transform up to one multiplicative constant.  The physical query radius
is $R\sigma$.

\paragraph{Idea.}
The tangential score gives directional changes of a blurred angular density.
Integrating these changes on the connected shell recovers that density up to
scale, normalization removes the scale, and Gaussian--cone smoothing preserves
every harmonic mode.

\begin{theorem}[One-shell injectivity]
\label{thm:injective}
Fix $D\geq2$, $d>0$, and $R>0$.  If two nonzero finite positive angular
measures have identical shell tangential scores $g_R$ on $R\Sph^{D-1}$, then
they are proportional.  Unit-mass angular measures are therefore equal.
\end{theorem}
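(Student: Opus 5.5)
The proof has three steps: integrate the shell tangential score to recover the transform up to a constant, expand the kernel in spherical harmonics, and show that every harmonic multiplier is positive.

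\textbf{Step 1: recover the transform up to scale.} By \eqref{eq:shellidentity}, $g_R=\nabla_{\Sph^{D-1}}\log\T_{d,R}\Lambda$. The function $\T_{d,R}\Lambda$ is smooth and strictly positive, since $K_{d,R}>0$ and $\Lambda\neq0$ is positive. Suppose $\Lambda_1,\Lambda_2$ have the same $g_R$. Then $\log\T_{d,R}\Lambda_1-\log\T_{d,R}\Lambda_2$ has zero spherical gradient. Because $\Sph^{D-1}$ is connected for $D\geq2$, this difference is constant. Hence $\T_{d,R}\Lambda_1=c\,\T_{d,R}\Lambda_2=\T_{d,R}(c\Lambda_2)$ for some $c>0$. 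By linearity, $\T_{d,R}(\Lambda_1-c\Lambda_2)=0$. It then suffices to show that $\T_{d,R}$ is injective on finite signed measures.

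\textbf{Step 2: Funk--Hecke expansion.} The map $\T_{d,R}$ is a zonal convolution on the sphere with kernel $K_{d,R}(\omega^\top\theta)$. Expanding the exponential gives
\[
K_{d,R}(t)=\sum_{m\geq0}\frac{(Rt)^m}{m!}M_{m},\qquad M_m=\int_0^\infty r^{d-1+m}e^{-r^2/2}\,\dd r>0 .
\]
This series converges absolutely and uniformly on $[-1,1]$, because $M_m$ grows only like $\Gamma((d+m)/2)$ while the denominator is $m!$. By the Funk--Hecke theorem, $\T_{d,R}$ acts on degree-$\ell$ spherical harmonics as multiplication by
\[
\lambda_\ell=|\Sph^{D-2}|\int_{-1}^1 K_{d,R}(t)\,P_\ell^{(D)}(t)\,(1-t^2)^{(D-3)/2}\,\dd t ,
\]
where $P_\ell^{(D)}$ is the Gegenbauer/Legendre polynomial normalized by $P_\ell^{(D)}(1)=1$. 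For a signed measure $\eta$, the harmonic projections satisfy $(\T\eta)_\ell=\lambda_\ell\,\eta_\ell$, with $\eta_\ell$ defined by pairing $\eta$ against the degree-$\ell$ reproducing kernel. A finite measure is determined by its projections onto all harmonic degrees, because spherical polynomials are dense in $C(\Sph^{D-1})$. So $\lambda_\ell\neq0$ for all $\ell$ forces $\eta=0$, and therefore $\Lambda_1=c\Lambda_2$. Under unit mass, $c=1$.

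\textbf{Step 3: positivity of every multiplier (main obstacle).} I would show that each monomial $t^m$ has nonnegative Funk--Hecke coefficients, and that the coefficient for degree $\ell$ is strictly positive whenever $m\geq\ell$ and $m\equiv\ell \pmod 2$. This is the classical fact that $t^m$ expands in Gegenbauer polynomials with nonnegative coefficients. A concrete route is the Rodrigues formula: integrating by parts $\ell$ times gives
\[
\int_{-1}^1 t^m P_\ell(t)(1-t^2)^{(D-3)/2}\,\dd t \;\propto\; \frac{m!}{(m-\ell)!}\int_{-1}^1 t^{m-\ell}(1-t^2)^{\ell+(D-3)/2}\,\dd t ,
\]
with a positive proportionality constant. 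The right side is zero if $m-\ell$ is odd and strictly positive if $m-\ell$ is even. Then $\lambda_\ell=\sum_{m}\frac{R^mM_m}{m!}c_{m,\ell}$ is a sum of nonnegative terms, and the term $m=\ell$ is strictly positive because $R>0$ and $M_\ell>0$. Uniform convergence justifies exchanging the sum with the integral. Hence $\lambda_\ell>0$ for every $\ell$.

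The delicate points are checking the sign conventions in the Rodrigues step and handling $D=2$ separately. For $D=2$ the weight $(1-t^2)^{-1/2}$ is singular, so I would instead use Fourier series directly: $\lambda_\ell=\int_0^{2\pi}K_{d,R}(\cos\phi)\cos(\ell\phi)\,\dd\phi$. Expanding $\cos^m\phi$ into cosines of multiple angles gives only nonnegative coefficients, which yields the same conclusion.
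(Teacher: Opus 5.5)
Your proof is correct and follows the same route as the paper. Equal tangential log-gradients on the connected sphere force $\T_{d,R}\Lambda_1=c\,\T_{d,R}\Lambda_2$, Funk--Hecke diagonalizes $\T_{d,R}$, and strictly positive multipliers together with density of spherical polynomials force the signed measure $\Lambda_1-c\Lambda_2$ to vanish.

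The one difference is how you verify positivity of the multipliers. The paper applies Funk--Hecke to $e^{Rr\omega^\top\theta}$ at fixed $r$, obtaining a positive multiple of $(Rr)^{1-D/2}I_{\ell+D/2-1}(Rr)>0$, and then integrates against $r^{d-1}e^{-r^2/2}\,\dd r$. You instead integrate in $r$ first and use the nonnegative Gegenbauer coefficients of the monomials $t^m$. This is an equivalent and more self-contained check, and your Rodrigues sign bookkeeping works out because the two factors of $(-1)^\ell$ cancel.
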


The proof has two steps.  Equal tangential log-gradients imply proportional
scalar transforms because the sphere is connected.  The Funk--Hecke formula,
the spherical analogue of Fourier diagonalization for a rotationally symmetric
kernel, diagonalizes the transform.  Its degree-$\ell$ multiplier is a positive radial mixture
of $(Rr)^{1-D/2}I_{\ell+D/2-1}(Rr)$; hence no harmonic mode vanishes.  Full
details are in Appendix~\ref{app:injectivity}.

\paragraph{Idea.}
The shell has already converted score queries into polynomial moments.
Coordinate multiplication then becomes a family of commuting small matrices;
their joint eigenvalues are the branch directions, and a linear solve gives the
weights.
\begin{theorem}[Finite positive rays in arbitrary dimension]
\label{thm:finite-spherical}
Let $D\geq2$ and let
$\Lambda=\sum_{j=1}^{s}w_j\delta_{\theta_j}$ contain $s\leq K$ distinct
directions with positive weights.  From one exact shell, spherical-harmonic
moments through degree $2K-1$ constructively determine $s$, every
$\theta_j\in\Sph^{D-1}$, and every normalized weight $w_j$.
\end{theorem}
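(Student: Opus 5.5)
The plan has two stages: first extract the moments of $\Lambda$ from the shell, then run a multivariate Prony/moment-pencil reconstruction on them.

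\textbf{From shell to moments.} By the first step of Theorem~\ref{thm:injective}, integrating $g_R$ along paths on the connected sphere $\Sph^{D-1}$ recovers $\T_{d,R}\Lambda$ up to a positive constant $c$. I would expand $c\,\T_{d,R}\Lambda$ in spherical harmonics. By Funk--Hecke, its degree-$\ell$ component equals $c\,\lambda_\ell$ times the degree-$\ell$ harmonic projection of $\Lambda$. The multipliers $\lambda_\ell>0$ are explicit, because they are determined by $d,R,D$ alone, and positivity is the content of Theorem~\ref{thm:injective}. Dividing by $\lambda_\ell$ therefore gives $c\int Y\,\dd\Lambda$ for every harmonic $Y$ of degree at most $2K-1$.

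Since every polynomial of degree $\le 2K-1$, restricted to the sphere, is a combination of harmonics of degree $\le 2K-1$, this yields $m_\alpha:=c\int\theta^\alpha\dd\Lambda$ for all $|\alpha|\le 2K-1$. The constant $c$ is removed by dividing by $m_0=c\,\Lambda(\Sph^{D-1})>0$. This produces the moments of the normalized measure $\bar\Lambda$.

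\textbf{Rank and directions.} Let $\Pi_{k}$ denote polynomials of degree $\le k$. I would form the moment (Hankel) matrix $H$ indexed by monomials in $\Pi_{K-1}$, with entries $H_{\alpha\beta}=m_{\alpha+\beta}$. This uses degrees up to $2K-2$ and factors as $H=V^\top W V$, where $V$ is the evaluation matrix of $\Pi_{K-1}$ at $\theta_1,\dots,\theta_s$ and $W=\mathrm{diag}(w_j)>0$. Because there are $s\le K$ distinct points, Lagrange interpolation with products of $s-1\le K-1$ linear forms shows that $V$ has full row rank $s$. Hence $\operatorname{rank}H=s$, and this gives the count.

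Next, for each coordinate $i$ I would form the shifted matrix $H_i$ with entries $m_{\alpha+\beta+e_i}$, using degree up to $2K-1$. It factors as $V^\top W\,\mathrm{diag}(\theta_{j,i})V$. Fix a basis $U$ of the column space of $H$, or equivalently take an SVD $H=U\Sigma U^\top$ with rank $s$. The compressed matrices $M_i=\Sigma^{-1}U^\top H_iU$ commute and are simultaneously diagonalizable. Their joint eigenvalues are the coordinate tuples $(\theta_{j,1},\dots,\theta_{j,D})$; the conjugation has the form $(V U)^{-1}$, where $VU$ is an invertible $s\times s$ matrix.

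To extract the joint eigenvalues constructively, I would diagonalize a generic combination $\sum_i\gamma_iM_i$, which separates the distinct points for almost every $\gamma$, and then read off each coordinate via Rayleigh quotients. This is the standard Kunis--Peter--R\"omer--von der Ohe argument \citep{kunis2016multivariate}.

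\textbf{Weights, and the main obstacle.} The weights come from the linear system $\sum_j w_j\theta_j^\alpha=m_\alpha/m_0$ for $|\alpha|\le K-1$. Its matrix is $V^\top$, which has full column rank, so the solution is unique. Positivity and the normalization $\sum_jw_j=1$ then hold automatically.

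I expect the main care to be needed at the first step: converting harmonic coefficients into monomial moments through degree $2K-1$ correctly. On the sphere the monomials are linearly dependent, since $\|\theta\|^2=1$, so the Hankel matrix must be indexed by $\Pi_{K-1}$ restricted to the sphere, not by all monomials. With $H$ interpreted as a Gram matrix on that quotient space, the rank and interpolation arguments above go through unchanged.
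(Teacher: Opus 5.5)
Your proposal is correct and follows the paper's proof essentially step for step. You integrate the shell score to get $\T_{d,R}\Lambda$ up to scale and divide the harmonic coefficients by the positive Funk--Hecke multipliers to get the normalized moment functional through degree $2K-1$; then the rank-$s$ moment matrix (degree $\le 2K-2$), the shifted matrices (degree $\le 2K-1$), the commuting multiplication matrices, and a linear solve give count, directions, and weights. The differences are cosmetic: you compress with $H=U\Sigma U^\top$ where the paper inverts a nonsingular principal block $H_B=V_B\operatorname{diag}(w)V_B^\top$, you solve the overdetermined system with $V^\top$ instead of $L(\Phi_B)=V_Bw$, and your worry about monomial dependence on the sphere is unnecessary, since the factorization and rank argument hold for any spanning family of $\mathcal P_{K-1}$, redundant or not, which is how the paper sets it up.
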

Once the shell moments are available, the remaining atom-recovery step is
classical spherical multivariate Prony
\citep{kunis2016multivariate,kunis2019prony}: moment-matrix rank gives the count,
commuting multiplication matrices give the directions, and a linear solve gives
the weights.  Appendix~\ref{app:finite-spherical} verifies the interpolation
rank and degree-$2K-1$ bound.  The paper-specific step is the preceding
conversion from an uncentered vector score to moments.

\paragraph{Idea.}
On the circle, spherical harmonics become Fourier modes.  Positivity turns these
moments into a positive semidefinite Toeplitz matrix whose rank gives the count
and whose nullspace polynomials vanish at the branch directions.
\begin{theorem}[Finite positive planar rays]
\label{thm:planar}
Let $D=2,d=1$, and suppose $\Lambda$ contains $s\leq K$ distinct positive
rays.  The reconstructed unit-mean shell density $h$ has Fourier modes
\begin{equation}
 h_k=\rho_{|k|}(R)m_k,\qquad
 m_k=\sum_{j=1}^{s}w_je^{-ik\theta_j},\qquad
 \rho_k(R)=\frac{\kappa_k(1,R)}{\kappa_0(1,R)}>0.
 \label{eq:moments}
\end{equation}
Modes $|k|\leq K$ determine $s$, all directions, and all weights.
\end{theorem}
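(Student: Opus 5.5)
The plan has two parts: first derive the Fourier identity \eqref{eq:moments} from the shell transform, then invert it with the positive Toeplitz (Carathéodory--Fejér) structure.

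\textbf{Step 1: the Fourier identity.}
With $D=2$ and $d=1$, the shell identity \eqref{eq:shellidentity} shows that integrating $g_R$ around the connected circle recovers $\T_{1,R}\Lambda$ up to a positive constant. This is the same step used in Theorem~\ref{thm:injective}. Normalizing to unit mean fixes that constant and gives
\[
h(\omega)=\frac{\T_{1,R}\Lambda(\omega)}{\frac{1}{2\pi}\int\T_{1,R}\Lambda}.
\]
Since $K_{1,R}(\cos(\phi-\theta))$ is a convolution kernel on the circle, its Fourier expansion is
\[
K_{1,R}(\cos\alpha)=\sum_k\kappa_{|k|}(1,R)e^{ik\alpha},
\qquad
\kappa_k(1,R)=\int_0^\infty e^{-r^2/2}I_k(Rr)\,\dd r>0 .
\]
This follows from the generating function $e^{x\cos\alpha}=\sum_k I_k(x)e^{ik\alpha}$, which is the planar Funk--Hecke case, and $I_k(x)>0$ for $x>0$.

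Integrating the expansion against $\Lambda$ gives $\T\Lambda$ with modes $\kappa_{|k|}m_k$. The zero mode is $\kappa_0 m_0=\kappa_0$, because $\Lambda$ has unit mass. This yields \eqref{eq:moments}, and $\rho_k>0$ lets us divide: $m_k=h_k/\rho_{|k|}(R)$ for $|k|\leq K$.

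\textbf{Step 2: count and directions.}
We have $m_0=1$ and $m_{-k}=\overline{m_k}$. I would form the $(K+1)\times(K+1)$ Toeplitz matrix
\[
T=(m_{j-l})_{j,l=0}^{K}=V^*WV,
\qquad
V_{jl}=e^{-il\theta_j}.
\]
Here $W=\operatorname{diag}(w_j)\succ0$ and $V$ is $s\times(K+1)$. The columns are the vectors $(1,z_j,\dots,z_j^K)$ with $z_j=e^{i\theta_j}$ up to conjugation conventions. Because $s\leq K<K+1$ and the $z_j$ are distinct, $V$ has full row rank $s$ by Vandermonde. Hence $T\succeq0$ and $\operatorname{rank}T=s$, which determines $s$.

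Next I would take the leading $(s+1)\times(s+1)$ block $T_s$, which also has rank $s$. Its one-dimensional kernel gives a vector $c$ with $V_s c=0$, meaning $p(z_j)=\sum_l c_l z_j^l=0$ for all $j$. This $p$ has degree at most $s$ and is nonzero. It has exactly $s$ roots, since $\deg p<s$ would force a nonzero polynomial of degree $<s$ to vanish at $s$ points. So the roots of $p$ are exactly the $z_j$, which gives the directions.

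\textbf{Step 3: weights.}
The weights solve the Vandermonde system $\sum_j w_j z_j^{-k}=m_k$ for $k=0,\dots,s-1$. Distinct nodes make this system invertible, so the weights are uniquely determined.

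\textbf{Main obstacle.}
The delicate point is the count/rank argument with only $|k|\leq K$, rather than $2K$ as in generic Prony. Positivity is what makes this work: $V^*WV$ with $W\succ0$ has rank exactly $s$ without any cancellation. The kernel argument needs only the $(s+1)$-block, i.e.\ moments up to degree $s\leq K$. I would check carefully that the kernel of $T_s$ is one-dimensional and that $p$ has full degree. This follows from the full rank of $V_s$, an $s\times(s+1)$ Vandermonde matrix with distinct nodes.
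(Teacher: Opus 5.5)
Your proposal is correct and follows essentially the paper's proof: the Bessel generating-function expansion of $K_{1,R}$ gives $h_k=\rho_{|k|}(R)m_k$, positivity makes $T_K=V\operatorname{diag}(w)V^*$ have rank exactly $s$, and a nonsingular Vandermonde system in $m_0,\ldots,m_{s-1}$ gives the weights. The only difference is cosmetic: the paper's proof identifies the directions as the common unit-circle zeros of all kernel polynomials of $T_K$, whereas you use the one-dimensional kernel of the $(s+1)\times(s+1)$ block $T_s$, which is an annihilating polynomial of exact degree $s$; that is the same construction the paper's algorithm and appendix stability analysis use.
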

\begin{proposition}[Why degree $K$ is necessary in the plane]
\label{prop:sharp-moments}
For every $K\geq1$, normalized positive measures supported on at most $K$
planar rays are not identified by the consecutive moments
$m_0,\ldots,m_{K-1}$.  Hence the largest order $K$ in
Theorem~\ref{thm:planar} is necessary for this moment description.
\end{proposition}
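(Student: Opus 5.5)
We prove the proposition by constructing, for each $K\geq1$, two distinct normalized positive measures, each supported on at most $K$ planar rays, whose moments $m_0,\ldots,m_{K-1}$ coincide. Since $m_{-k}=\overline{m_k}$ for a real measure, agreement for $0\leq k\leq K-1$ is agreement for all $|k|\leq K-1$. The natural candidates are the two rotated copies of the uniform $K$-ray junction:
\begin{equation*}
 \Lambda_1=\frac1K\sum_{j=0}^{K-1}\delta_{2\pi j/K},\qquad
 \Lambda_2=\frac1K\sum_{j=0}^{K-1}\delta_{2\pi j/K+\pi/K}.
\end{equation*}
Both are positive, have unit mass, and are supported on exactly $K$ rays. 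They are distinct, because a rotation by $\pi/K$ is not a multiple of $2\pi/K$.

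The key step is a roots-of-unity computation. For $\Lambda_1$,
\begin{equation*}
 m_k=\frac1K\sum_{j=0}^{K-1}e^{-2\pi i jk/K},
\end{equation*}
which equals $1$ when $K\mid k$ and $0$ otherwise. For $\Lambda_2$ the same sum acquires the factor $e^{-ik\pi/K}$. Hence for $k=0$ both moments equal $1$, and for $1\leq k\leq K-1$ both vanish. The sequences $m_0,\ldots,m_{K-1}$ therefore agree.

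To confirm the construction is sharp, look at $k=K$. There $\Lambda_1$ gives $m_K=1$, while $\Lambda_2$ gives $e^{-i\pi}=-1$. So degree $K$ separates the two measures, consistent with Theorem~\ref{thm:planar}. Because the shell modes $h_k=\rho_{|k|}(R)m_k$ are fixed multiples of the $m_k$, equal moments up to order $K-1$ also give equal truncated shell data. Thus no procedure using only modes $|k|\leq K-1$ can distinguish the two measures.

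The edge case $K=1$ has to be handled explicitly. The list then contains only $m_0=1$, and the construction above gives two distinct single rays $\delta_0$ and $\delta_\pi$, which suffices. No real obstacle is expected anywhere in the argument. The only point needing care is the phrasing: the claim is non-identifiability for the class of measures with at most $K$ atoms, so a single distinct pair inside that class already proves it.
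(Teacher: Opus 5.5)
Your proposal is correct and is essentially the paper's proof: the paper uses the same rotated regular $K$-ray measures $\Lambda_\alpha,\Lambda_\beta$, with any offset $\alpha-\beta\notin(2\pi/K)\mathbb Z$. It uses the same roots-of-unity cancellation for $1\leq k<K$ and the same check that $m_K$ separates the two measures. Your choice $\alpha=0$, $\beta=\pi/K$ is a special case, and your explicit $K=1$ remark and the note about the truncated shell modes $h_k$ are harmless additions.
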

After deconvolution, the Toeplitz/Vandermonde step is standard in
finite-rate-of-innovation, subspace, and super-resolution methods
\citep{vetterli2002fri,schmidt1986music,roy1989esprit,
candes2014superresolution,moitra2015superresolution,yang2016vandermonde}:
Toeplitz rank gives the count, common nullspace roots give directions, and a
linear solve gives weights.  Our contribution is to supply these moments from a
local score shell and carry score error to stability margins.
Proposition~\ref{prop:sharp-moments} follows from rotated regular $K$-ray
measures, whose first $K-1$ nonconstant moments all vanish; the proof is in
Appendix~\ref{app:planar}.  This is a lower bound for the consecutive-moment
interface, because one score query is a nonlinear combination of all angular
modes.  The next result gives a separate lower bound directly at the
score-query interface.

\begin{proposition}[Fixed scalar-query lower bound]
\label{prop:query-lower}
Fix $K$ pairwise disjoint open spherical patches, one for each ray, and allow
all positive normalized weights.  If $M$ scalar tangential-score components
are observed through a fixed continuous scheme, exact identification of every
measure in this class requires
\begin{equation}
 M\geq KD-1.
\end{equation}
This holds even when $K$ is known and the observations are noiseless.
\end{proposition}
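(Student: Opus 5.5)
The plan is a dimension count via invariance of domain (or Sard-type rank reasoning). The parameter set $\mathcal P$ is the class of measures $\Lambda=\sum_{j=1}^K w_j\delta_{\theta_j}$ with $\theta_j$ in the $j$-th fixed open patch $U_j\subset\Sph^{D-1}$, $w_j>0$, and $\sum_j w_j=1$. Because the patches are pairwise disjoint, the labeling of atoms is canonical: each measure in the class has exactly one atom per patch. The map $(\theta_1,\dots,\theta_K,w)\mapsto\Lambda$ is therefore injective on
\[
U_1\times\cdots\times U_K\times\Delta^{\circ}_{K-1}.
\]
This is an open manifold of dimension $K(D-1)+(K-1)=KD-1$. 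A fixed continuous observation scheme of $M$ scalar tangential-score components defines a map $\Phi:\mathcal P\to\R^M$. Each component has the form $\Lambda\mapsto v_i^\top g_{R_i}(\omega_i)$, or more generally is some continuous functional of the field. Exact identification of every measure in the class means precisely that $\Phi$ is injective.

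Continuity of $\Phi$ in the parameters has to be verified for concrete point queries. Write $F_\Lambda=\nabla\log q_\Lambda$ with $q_\Lambda(z)=\sum_j w_j\int_0^\infty r^{d-1}e^{-\|z-r\theta_j\|^2/2}\dd r$. This quantity is strictly positive and depends smoothly, indeed real-analytically, on $(\theta_j,w_j)$ locally uniformly in $z$. Hence any evaluation of $F_\Lambda$ or $g_R$ at fixed points is continuous on $\mathcal P$. For a general "fixed continuous scheme" I would take continuity of $\Phi$ on $\mathcal P$ as the hypothesis, which is what the statement asserts.

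Now suppose $M<KD-1$ and $\Phi$ is injective. Compose with the inclusion $\R^M\hookrightarrow\R^{KD-1}$ (padding with zeros), and restrict to a coordinate chart, i.e.\ an open set $W\subset\R^{KD-1}$. This gives a continuous injective map $\tilde\Phi:W\to\R^{KD-1}$. By Brouwer's invariance of domain its image is open in $\R^{KD-1}$. But the image lies in the proper subspace $\R^M\times\{0\}$, which has empty interior, a contradiction. Hence $M\geq KD-1$. The count is carried out with $K$ fixed and known, and the observations are exact, so the bound holds even in the noiseless known-$K$ setting.

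The main obstacle is the rigor of the parameterization and of the continuity hypothesis. The correspondence between the parameter manifold and the measure class must be bijective, which is where disjointness of the patches is essential: it removes permutation ambiguity and atom collisions. The "fixed continuous scheme" must also be interpreted so that $\Phi$ is continuous on the parameter manifold. I would state this explicitly, noting that it covers pointwise tangential-score queries by the smooth dependence above. I would avoid any need for differentiability by using invariance of domain rather than a Jacobian-rank argument.
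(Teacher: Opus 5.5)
Your proposal is correct and follows essentially the same route as the paper: parameterize the labeled class by one chart coordinate per patch plus the open weight simplex, giving an open subset of $\R^{KD-1}$, pad the continuous observation map into $\R^{KD-1}$, and contradict invariance of domain because the image lies in a proper coordinate subspace. Two points the paper leaves implicit are justified in your version: that disjoint patches make the parameter-to-measure map injective, so identification is equivalent to injectivity of the observation map, and that pointwise tangential-score queries depend continuously on the parameters.
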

The directions contribute $K(D-1)$ local coordinates and the normalized
weights contribute $K-1$.  Their map to the observations is continuous, so
invariance of domain gives the bound
\citep[Section~62]{munkres2000topology}.  A full tangential vector at one fixed
shell location contains at most $D-1$ scalar components.  The proposition does
not cover adaptive observations.  For $D=2$ it reduces to $M\geq2K-1$; the
smallest uniform planar grid covered by our conditional analysis uses
$M=2K+1$.  Appendix~\ref{app:finite-spherical} gives the general proof.

\section{Shell information beyond a one-point Hessian}
\label{sec:hessian}

For unit-mass $\Lambda$, let
$m_\Lambda=\E_\Lambda\theta$, $Q_\Lambda=\E_\Lambda\theta\theta^\top$, and
$a_d=\sqrt{2}\,\Gamma((d+1)/2)/\Gamma(d/2)$.  Direct differentiation gives
\begin{equation}
 \nabla F_\Lambda(0)=dQ_\Lambda-a_d^2m_\Lambda m_\Lambda^\top-I_D.
 \label{eq:hessian}
\end{equation}
Taking the trace and using $F_\Lambda(0)=a_dm_\Lambda$ recovers the scalar
homogeneity through
$d=D+\operatorname{tr}\nabla F_\Lambda(0)+\|F_\Lambda(0)\|^2$, the centered
pointwise case of \eqref{eq:ou-score}.  The weak calibration theorem obtains the
same nuisance parameter without knowing the center or differentiating the
score.  By contrast, the full vertex Hessian still keeps only two angular
moments.  Every uniform regular
planar $q$-ray junction with $q\geq3$ has $m_\Lambda=0$,
$Q_\Lambda=I_2/2$, and, for $d=1$, $\nabla F(0)=-I_2/2$.  Yet its first
count-specific shell harmonic is nonzero.
Figure~\ref{fig:collision} compares the common vertex Hessian, the distinct
shell fields, and the inverse harmonic multipliers at the tested radii.
Appendix~\ref{app:hessian} proves \eqref{eq:hessian} and the full regular-$q$
statement.

\begin{figure}[H]
  \centering
  \includegraphics[width=0.99\linewidth]{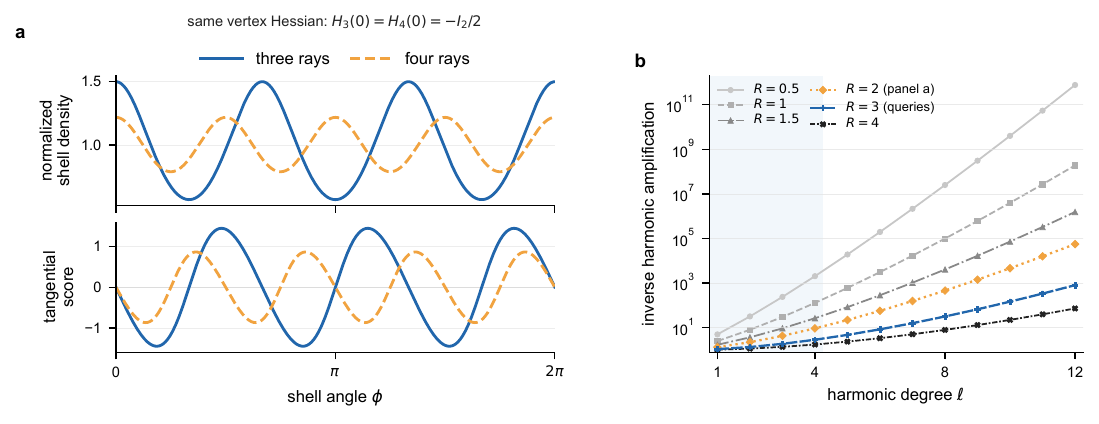}
  \caption{One-point collision and shell conditioning.  \textbf{(a)} For
  $D=2,d=1$, regular three- and four-ray measures share the vertex score
  Hessian $-I_2/2$ but have different $R=2$ shell densities and tangential
  scores.  \textbf{(b)} Inverse harmonic multipliers grow rapidly with degree
  at small radii.  Marker and line style identify radii; shading marks degrees
  $1{:}4$ used by the planar $K=4$ experiments.}
  \label{fig:collision}
\end{figure}

\section{From finite queries to conditional guarantees}
\label{sec:finite}

Exact identifiability assumes a continuous field.  Here finitely many noisy
queries produce a candidate, and predeclared margins determine what can be
certified.

\subsection{Finite-query reconstruction}

For the planar estimator, query $M>2K$ equally spaced angles
$\phi_j=2\pi j/M$.  Let $g(\phi)$ be the planar parameterization of the shell
tangential score $g_R$, so $g=(\log h)'$, and observe
$y_j=g(\phi_j)+e_j$ with
$M^{-1}\sum_j|e_j|^2\leq\eps_2^2$; thus $\eps_2$ is a root-mean-square (RMS)
observation bound.  The estimator spectrally integrates $g$, exponentiates and
normalizes $h$, and removes the known shell multipliers from modes $1{:}K$.
Toeplitz rank gives branch count, annihilating-polynomial roots give directions,
and a final nonnegative solve gives weights.

\begin{algorithm}[Finite-query one-shell estimator]
\label{alg:finitequery}
Inputs are $R$, $K$, and the $M$ shell tangential score values.
\begin{enumerate}
\item Divide every nonzero discrete Fourier transform (DFT) mode by its
frequency, then exponentiate and normalize to reconstruct the shell density.
\item Divide density modes $1{:}K$ by the known shell multipliers to obtain
angular moments.
\item Infer Toeplitz rank, extract unit-circle roots, and fit normalized
nonnegative weights.
\end{enumerate}
A numerical candidate uses a relative eigengap.  A certified output applies the
thresholds in Theorem~\ref{thm:finitequery} and stops whenever a sufficient
test fails.
\end{algorithm}

Let $A_M(g)$ denote the explicit spectral-integration alias term,
$B_{M,k}(R)$ the scalar-density alias term, and
\begin{equation}
 C_M=\left(\sum_{k\in\mathcal K_M\setminus\{0\}}k^{-2}\right)^{1/2}
 <\pi/\sqrt3,\qquad \tau_M=A_M(g)+C_M\eps_2.
\end{equation}
Appendix~\ref{app:finitequery} proves the finite-query moment bound
\begin{equation}
 \boxed{
 |\widehat m_k-m_k|\leq
 \rho_k(R)^{-1}(e^{2\tau_M}-1)
 +\frac{B_{M,k}(R)/\rho_k(R)+B_{M,0}(R)}{1-B_{M,0}(R)}},
 \quad 1\leq k\leq K,
 \label{eq:momentbound}
\end{equation}
where $\rho_k=\kappa_k/\kappa_0$ and $B_{M,0}<1$.  This separates score
noise, log integration, density sampling alias, and harmonic amplification.

\subsection{Conditional deterministic certification}

\paragraph{Idea.}
The score-error bound first becomes a uniform moment-error bound.  Eigenvalues
then certify branch count, separated polynomial roots certify directions, and a
conditioned linear solve certifies weights.  Each stage uses bounds fixed before
inspecting the field and abstains before any unverified later output.

\begin{theorem}[Finite-query certification]
\label{thm:finitequery}
Assume $M>2K$, $A_M(g)<\infty$, $B_{M,0}(R)<1$, and the stated RMS observation
error.  Let $\bar\eta$ be the maximum of the right-hand side of
\eqref{eq:momentbound} over $1\leq k\leq K$, or any other valid bound on
$\max_{k\leq K}|\widehat m_k-m_k|$.  Then
\begin{equation}
 \|\widehat T_K-T_K\|_\op\leq(K+1)\bar\eta=:E_K.
 \label{eq:toeplitz}
\end{equation}
For a declared geometry class with minimum positive Toeplitz eigenvalue
$\Gamma_K$, branch count is guaranteed if
$\Gamma_K>2E_K$: it is the number of eigenvalues of $\widehat T_K$ above
$E_K$.  This certified rule is distinct from the relative-eigengap heuristic
used only for an uncertified candidate.  An error estimate alone cannot exclude an extra
branch of arbitrarily small weight, so this class-level resolution gap is
needed for a uniform guarantee.  Declared lower bounds on angular separation
and the smallest Vandermonde singular value give the sufficient direction and
weight tests in Appendix~\ref{app:stability}.  A failed sufficient test causes
abstention; it does not invalidate the numerical candidate.  Here ``declared''
means fixed before inspecting the tested field, using domain knowledge about
the target class.  Substituting the unknown true instance into these bounds is
an oracle diagnostic, not a deployable certificate.
\end{theorem}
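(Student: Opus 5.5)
The proof has three parts: the Toeplitz perturbation bound \eqref{eq:toeplitz}, the count rule via Weyl's inequality, and a reduction of the direction and weight tests to the appendix stability lemmas.

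For \eqref{eq:toeplitz}, I index the Hermitian Toeplitz matrix as $T_K=(m_{j-k})_{j,k=0}^{K}$, with $m_{-k}=\overline{m_k}$ and $m_0=1$. Since $\Lambda$ and the reconstructed normalized density both have unit mass, $\widehat m_0=m_0=1$. The error matrix $\Delta=\widehat T_K-T_K$ is then Hermitian Toeplitz with zero diagonal, and every entry satisfies $|\Delta_{jk}|\leq\bar\eta$, because $\bar\eta$ bounds $|\widehat m_k-m_k|$ for $1\leq k\leq K$ and conjugation preserves the modulus. I then bound the operator norm by the Schur test: $\|\Delta\|_\op\leq\max_j\sum_k|\Delta_{jk}|$. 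Each row has at most $K$ nonzero entries, so this gives $K\bar\eta$, which is below $(K+1)\bar\eta$. The cruder Frobenius bound $\|\Delta\|_F\leq(K+1)\bar\eta$ also suffices. Either way $\|\Delta\|_\op\leq E_K$. This step uses only the moment bound \eqref{eq:momentbound}, or whatever valid bound is substituted for it, and the finite-query hypotheses $M>2K$, $A_M(g)<\infty$ and $B_{M,0}<1$, which are exactly what make \eqref{eq:momentbound} available.

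For the count, I use the factorization $T_K=V W V^*$. Here $V$ is the $(K+1)\times s$ Vandermonde matrix with columns $(e^{-ik\theta_j})_{k=0}^{K}$, and $W=\operatorname{diag}(w_j)$. The nodes are distinct and $s\leq K<K+1$, so $V$ has full column rank. Hence $T_K$ is positive semidefinite with exactly $s$ positive eigenvalues, and its remaining $K+1-s$ eigenvalues are zero.

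Weyl's inequality then controls the eigenvalues of $\widehat T_K$. The zero eigenvalues of $T_K$ move to eigenvalues of $\widehat T_K$ of size at most $E_K$. The positive eigenvalues are at least $\Gamma_K$, so the corresponding eigenvalues of $\widehat T_K$ are at least $\Gamma_K-E_K$. Under the hypothesis $\Gamma_K>2E_K$ this lower bound exceeds $E_K$. Therefore the number of eigenvalues of $\widehat T_K$ strictly above $E_K$ equals $s$. Here $\Gamma_K$ is the minimum positive Toeplitz eigenvalue over the declared class, so the guarantee holds uniformly for every member of the class.

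To justify that this class-level gap is genuinely needed, I use a limiting example. Take any valid measure and add an extra ray of weight $w\to0$, renormalizing to unit mass. The moments change by $O(w)$, so they stay within any fixed $\bar\eta$ of the original moments, while the count increases by one. Consequently no rule based only on the error estimate can certify the count uniformly.

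The remaining claims are reductions. I derive the direction test from the perturbation of the null vector, or of the annihilating polynomial, orthogonal to the top-$s$ eigenspace, using Davis--Kahan with gap $\Gamma_K-2E_K$. Root perturbation of that polynomial, together with the declared angular separation, then localizes each direction. The weight test follows from a least-squares perturbation bound controlled by the smallest Vandermonde singular value, in the same form as \eqref{eq:calibration-bound}.

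I expect the main obstacle to be this direction step. The root-perturbation constants must be made explicit, and they must be checkable from declared bounds alone, without reference to the unknown true instance. I would defer the precise constants to Appendix~\ref{app:stability}. The abstention semantics need no separate argument: each stage only reports an output once its sufficient condition has been verified.
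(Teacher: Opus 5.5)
For the two claims the theorem itself proves, your argument is correct and is the paper's. Entrywise moment errors bound $\|\widehat T_K-T_K\|_\op$ via the Frobenius norm by $(K+1)\bar\eta$. Weyl's inequality with $\Gamma_K>2E_K$ then puts the $s$ signal eigenvalues above $E_K$ and the $K+1-s$ null eigenvalues at or below it. Your Schur-test sharpening to $K\bar\eta$ (using $\widehat m_0=m_0=1$) is a small addition the paper does not make. So is your vanishing-weight example showing why a class gap is needed.

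Your sketch of the direction test, however, would not work as written when $s<K$. In that case the kernel of $T_K$ has dimension $K+1-s$, so there is no single null vector. Every kernel polynomial has the form $\prod_j(z-z_j)\,q(z)$, with $q$ an arbitrary polynomial of degree at most $K-s$. The smallest-eigenvalue vector of $\widehat T_K$ can therefore carry spurious roots on or near the unit circle. Moreover, $q$ can be small near a true node, which destroys the Rouch\'e lower bound. Davis--Kahan with gap $\Gamma_K-2E_K$ controls only the null subspace, not a polynomial whose roots you can localize.

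The paper avoids this by using the certified count first. It forms the $(s+1)\times(s+1)$ matrix $T_s$, whose kernel is one-dimensional and spanned by a degree-exactly-$s$ annihilating polynomial. It applies Davis--Kahan with $E_s=(s+1)\eta$ and gap $\gamma_s-E_s$, and it closes the Rouch\'e test with the root-free bound $|c_s|\geq\binom{2s}{s}^{-1/2}$. The relevant declared quantity is then a lower bound on $\gamma_s=\lambda_{\min}^+(T_s)$, which is never larger than $\gamma_K$. A declared $\Gamma_K$ therefore does not suffice, which is why Appendix~\ref{app:stability} uses the separate Gershgorin choice $(q,u)=(s,s)$ for root recovery.

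Both the statement and your proposal defer these tests to the appendix, so this does not break the theorem. You should still replace your description of the direction mechanism with this one.
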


Appendix Table~\ref{tab:certificate-stages} exposes the three sufficient tests.
The exact Rouch\'e and Vandermonde inequalities, including the $s=1$ case, are
stated and proved in Appendix~\ref{app:stability}.

The executable version derives every required lower bound from a minimum weight
and angular separation fixed before inspecting the field.  Appendix
\ref{app:stability} gives these conservative substitutions; failure makes the
procedure abstain rather than use the unknown true geometry.

\subsection{Finite-data and learned-score errors}

For $N$ independent data samples, the normalized empirical KDE score at
$x=x_0+\sigma z$ is exactly
\begin{equation}
 \sigma\widehat s_{N,\sigma}(x)=
 \frac{N^{-1}\sum_i (X_i-x)\sigma^{-1}
 e^{-\|x-X_i\|^2/(2\sigma^2)}}
 {N^{-1}\sum_i e^{-\|x-X_i\|^2/(2\sigma^2)}}.
 \label{eq:kderatio}
\end{equation}
For a general local measure, assume a Gaussian-weighted tangent approximation
at rate $\sigma^\alpha$ on the translated query neighborhoods required by the
center-error bound.  Theorem~\ref{thm:planar-tangent-bias} verifies this
condition with $\alpha=\beta$ for its planar branch class.  Also assume
$a_j\geq a_-\sigma^d$, validated calibration bounds
$\|\widehat b-b\|\leq\eps_{\rm ctr}$ and
$|\widehat d-d|\leq\eps_d$, and a validated pointwise network error
$\eps_\theta(\sigma)$.  A simultaneous
Bernstein ratio bound over the $M$ queries then gives, with probability at least
$1-\delta$,
\begin{equation}
 \boxed{
 \eps_{\rm shell}\leq R\Bigg\{C_{\rm geom}\sigma^\alpha
 +C_{\rm samp}\left[
 \sqrt{\frac{\log(M/\delta)}{Na_-\sigma^d}}
 +\frac{\log(M/\delta)}{Na_-\sigma^d}\right]
 +\sigma\eps_\theta(\sigma)+C_{\rm ctr}\eps_{\rm ctr}\Bigg\}.}
 \label{eq:endtoend}
\end{equation}
Here $C_{\rm geom}$ records the tangent approximation and density lower bound,
while $C_{\rm ctr}$ controls local score variation.  The exact sampling term is \eqref{eq:explicitkde} with
$t=\log(4M/\delta)$, $\lambda=Na_-\sigma^d$, and
$S=\max_j\|b_j/a_j\|$; the simplified display requires
$\sqrt{2t/\lambda}+t/(3\lambda)\leq1/2$.  Thus $C_{\rm samp}$ hides only
universal constants and the declared bound on $S$, not an additional power of
$N$ or $\sigma$; none of these geometry-dependent quantities is distribution-free.
On any compact declared interval for $d$, the finitely many positive shell
multipliers used by the inverse have bounded logarithmic derivatives.
Consequently, using $\widehat d$ adds at most $C_d\eps_d$ to the recovered
moment error in \eqref{eq:momentbound}.  Combining this term with
$\eps_2\leq\eps_{\rm shell}$ yields branch count, angle, and weight guarantees,
or abstention.  Balancing only tangent
bias and sampling error gives
$\sigma_{\rm opt}\asymp(Na_-)^{-1/(2\alpha+d)}$; this is estimator
calibration, not a multiscale-necessity claim.

\subsection{Stagewise stability}

\begin{proposition}[Stagewise perturbation margins]
\label{prop:stagewise}
Let $\eta$ bound the recovered moments through order $K$, and write
$\gamma_K(\Lambda)=\lambda_{\min}^{+}(T_K)$.  Correct branch count is
guaranteed whenever
\begin{equation}
 \eta<\eta_{\rm count}:=\frac{\gamma_K(\Lambda)}{2(K+1)}.
 \label{eq:countmargin}
\end{equation}
Conditional on that count, an angular tolerance $a_0$ is guaranteed if
the Rouch\'e inequality in \eqref{eq:rouche} with
$r\leq\sin(a_0/2)$ holds, and a weight tolerance follows from the
Vandermonde inequality in \eqref{eq:weightbound}.  Full geometric recovery
therefore has sufficient rank, root, and weight margins, whereas count uses
only the rank margin.  Replacing the geometry-dependent quantities by declared
class-level lower bounds turns these perturbation statements into uniform
certificates.
\end{proposition}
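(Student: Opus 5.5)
The plan is to treat Proposition~\ref{prop:stagewise} as a chain of three conditional perturbation steps. Each step reuses the operator bound \eqref{eq:toeplitz} from Theorem~\ref{thm:finitequery}, with the moment error $\bar\eta$ replaced by the generic bound $\eta$.

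\textbf{Count.} The first step is the Toeplitz perturbation. Every entry of $\widehat T_K-T_K$ is some $\widehat m_k-m_k$ or its conjugate with $|k|\le K$, and $\widehat m_0=m_0=1$ by normalization. The difference is therefore a Hermitian Toeplitz matrix of size $K+1$ whose entries are bounded by $\eta$. Its operator norm is at most its maximal absolute row sum, which is at most $(K+1)\eta=:E_K$.

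The exact matrix is $T_K=V W V^*$, with $V$ the $(K+1)\times s$ Vandermonde matrix on the nodes $e^{-i\theta_j}$ and $W=\mathrm{diag}(w_j)$. For $s\le K$ distinct nodes it has rank exactly $s$, and its nonzero eigenvalues are at least $\gamma_K(\Lambda)$. Weyl's inequality then bounds every eigenvalue of $\widehat T_K$ within $E_K$ of the corresponding eigenvalue of $T_K$. Hence $s$ eigenvalues lie above $\gamma_K-E_K$ and $K+1-s$ lie in $[-E_K,E_K]$.

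Condition \eqref{eq:countmargin} is exactly $E_K<\gamma_K/2$, which gives $\gamma_K-E_K>E_K$. Thresholding at $E_K$ therefore returns $s$. This reproduces the count clause of Theorem~\ref{thm:finitequery} with $\Gamma_K=\gamma_K(\Lambda)$, and it uses only the rank margin.

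\textbf{Directions.} Once the count is fixed, the directions come from the annihilating polynomial. This is the coefficient vector from the eigenspace of the $K+1-s$ smallest eigenvalues, or equivalently from the kernel of the $s$-truncated Toeplitz system. I would invoke the Rouch\'e inequality \eqref{eq:rouche} from Appendix~\ref{app:stability} as given.

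If \eqref{eq:rouche} holds on circles of radius $r$ around each true root $e^{i\theta_j}$, the perturbed polynomial has exactly one root in each disk. The remaining step is elementary geometry. A point on the unit circle within chordal distance $r$ of $e^{i\theta_j}$ has angular distance $a$ satisfying $2\sin(a/2)\le r$. When $r\le\sin(a_0/2)$, the angular error is at most $a_0$. If the recovered roots are projected onto the unit circle, the radial projection at most doubles the distance, and the chord bound $2\sin(a/2)\le 2r$ still yields angular error at most $a_0$.

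\textbf{Weights.} With the nodes certified, the weights solve the linear system $\widehat V w=\widehat m$. The Vandermonde inequality \eqref{eq:weightbound} bounds $\|\widehat w-w\|$ by a term of the form
\[
\sigma_{\min}(\widehat V)^{-1}\bigl(\|\widehat m-m\|+\|\widehat V-V\|\,\|w\|\bigr).
\]
Node errors enter $\|\widehat V-V\|$ through the Lipschitz bound $|e^{-ik\widehat\theta}-e^{-ik\theta}|\le k\,|\widehat\theta-\theta|$. I would cite that inequality directly, and check only that its hypothesis, namely a node error below $a_0$ together with a positive singular value, is supplied by the previous stage. This shows that full recovery requires all three margins, while count needs only \eqref{eq:countmargin}.

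\textbf{Uniform version and main obstacle.} Finally, $\gamma_K(\Lambda)$, the separation and $\sigma_{\min}(V)$ enter only as lower bounds, and every inequality above is monotone in them. Replacing them by declared class-level lower bounds therefore preserves each implication, which yields the uniform certificates.

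The main obstacle is the direction stage. The perturbation must be transferred from the Toeplitz eigenspace to the polynomial coefficients, which requires an eigenspace (Davis--Kahan) bound using the gap $\gamma_K-2E_K$. That coefficient bound must then be checked to be the quantity actually fed into \eqref{eq:rouche}. I would first verify that the appendix statement of \eqref{eq:rouche} is already phrased in terms of $\eta$, or of $E_K$ and that gap, so that this step reduces to citation.
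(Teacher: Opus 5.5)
Your proposal follows the paper's own proof: Weyl with $\|\widehat T_K-T_K\|_\op\le(K+1)\eta$ for the count, then Davis--Kahan plus the Rouch\'e inequality \eqref{eq:rouche} and radial projection for the directions, then the perturbed Vandermonde/nonnegative least-squares bound \eqref{eq:weightbound} for the weights, with declared lower bounds substituted monotonically for the uniform version. One correction to your ``main obstacle'': the paper applies Davis--Kahan not to the $(K+1-s)$-dimensional null space of $T_K$ with gap $\gamma_K-2E_K$ (which has no canonical unit kernel vector when $s<K$) but to the one-dimensional kernel of the $(s+1)\times(s+1)$ matrix $T_s$, giving $\|\widehat c-c\|_2\le\sqrt2(s+1)\eta/(\gamma_s-(s+1)\eta)$ with $\gamma_s=\lambda_{\min}^{+}(T_s)\le\gamma_K$ by Cauchy interlacing; so the root margin is genuinely separate from the count margin, and likewise $L_{K,s}a<\sigma_{\min}(V)$ is an extra weight-stage requirement rather than something the root stage supplies.
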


The proposition follows by composing Weyl rank stability, Davis--Kahan and
Rouch\'e root stability, and the perturbed nonnegative weight solve.  It gives
a concrete mechanism for a count-correct but geometry-inaccurate regime: root
separation or Vandermonde conditioning can become limiting after the Toeplitz
rank has stabilized.  Appendix~\ref{app:stability} gives the complete proof
and constants.

\section{Experiments}
\label{sec:experiments}

We evaluate each link of the recovery chain in reproducible synthetic
experiments and recompute every reported statistic from released seed-level
rows.  Before evaluation, we define planar full recovery as correct count,
maximum circular angle error at most $5^\circ$, and maximum weight error at most
$0.05$.  This descriptive criterion is separate from the certified tests, and Appendix
Tables~\ref{tab:threshold-sensitivity} and
\ref{tab:inference-roles} report tolerance sensitivity and distinguish blind
candidates, oracle diagnostics, and certified outputs.

\subsection{Calibration and recovery beyond the plane}

Table~\ref{tab:joint-recovery} evaluates score-only calibration and
arbitrary-dimensional finite-ray recovery.  Weak Gaussian tests recover center and
homogeneity for exact ray cones in $D=2,3,5$; a separate sweep adds smooth score
perturbations with declared root-mean-square magnitude.  The shell experiment
uses non-coplanar two- and three-ray measures on $\Sph^2$, integrates their
tangential score along meridians, deconvolves spherical harmonics through
degree five, and applies the moment pencil of
Theorem~\ref{thm:finite-spherical}.  Finally, the learned rows train a
three-dimensional residual multilayer perceptron (MLP) by denoising score matching
and apply the same derivative-free calibration.  Models are evaluated in
memory: no checkpoint or prediction dump is part of the experiment.

\begin{table}[t]
  \centering
  \small
  \caption{Single-scale calibration and higher-dimensional reconstruction.
  Center error is normalized by $\sigma$.  Exact-calibration rows report the
  maximum numerical quadrature error; perturbed and learned rows report the
  median.  ``Full geometry'' counts trials meeting the generated count,
  direction, and weight criteria.  For the three-dimensional score-shell rows,
  these criteria are correct count, maximum angular error $0.1^\circ$, and
  maximum weight error $10^{-3}$.  Every entry is generated from seed-level
  rows.}
  \label{tab:joint-recovery}
  \resizebox{\linewidth}{!}{\begin{tabular}{@{}llrrrr@{}}
\toprule
Task and field & $D$ & Runs & Center error & $|\widehat d-d|$ & Full geometry \\
\midrule
Exact weak calibration & 2 & 20 & $4.4\times10^{-4}$ & $8.7\times10^{-5}$ & -- \\
Exact weak calibration & 3 & 20 & $8.5\times10^{-4}$ & $2.2\times10^{-4}$ & -- \\
Exact weak calibration & 5 & 20 & $4.3\times10^{-4}$ & $9.7\times10^{-5}$ & -- \\
Score perturbation ($\mathrm{RMS}=0.01$) & 2 & 20 & $0.018$ & $0.004$ & -- \\
Score perturbation ($\mathrm{RMS}=0.01$) & 3 & 20 & $0.017$ & $0.006$ & -- \\
Score perturbation ($\mathrm{RMS}=0.01$) & 5 & 20 & $0.016$ & $0.005$ & -- \\
Exact score-shell reconstruction & 3 & 20 & -- & -- & 20/20 \\
Learned-score weak calibration & 3 & 5 & $0.061$ & $0.032$ & -- \\
\bottomrule
\end{tabular}
}
\end{table}

The exact and perturbed rows distinguish algebraic identifiability from
conditioning of the chosen test family.  The learned result shows that the
calibration observable survives ordinary score training in this controlled
three-dimensional setting.  A neural-field guarantee additionally requires a
validated external error bound.  The non-coplanar shell rows exercise the full
score-to-harmonic-to-pencil chain from score queries.

\subsection{Finite-query and empirical KDE validation}

Appendix Table~\ref{tab:certificate-execution} and
Figure~\ref{fig:kde} evaluate finite-query reconstruction, certification, and
sampling.  The finite-query matrix contains \FiniteQueryTrials{}
trials.  It varies
normalized radius, query count, score noise, seed, angular separation, minimum
weight, ray clustering, rank selection, and center offset.  At $R=3$ and
$M=32$, all exact evaluation cases recover.  Under noise, a four-ray cluster is
less stable than an isolated close pair with the same minimum separation.  This
is consistent with the full Vandermonde matrix, rather than only the closest
pair, controlling recovery.  Artificial center offsets through
$\CenterPerfectMaxOffset\sigma$ retain
\CenterPerfectSuccess{} success in the baseline matrix; at
$\CenterFirstDegradedOffset\sigma$, success falls to
\CenterFirstDegradedSuccess{} as the declared angle
tolerance is crossed.

The complete sufficient certificate is also executed under geometry bounds
fixed before perturbation.  Across \CertificateExecutionTrials{} bounded-error
trials, every positive report respects its count, direction, or weight bound;
Appendix Table~\ref{tab:certificate-execution} shows how later stages abstain
before earlier ones as error grows.

The population/KDE evaluation contains \EmpiricalKDETrials{} trials over eight
named and twelve continuously generated held-out geometries.  Here local
coverage is the expected effective number of nearby samples receiving
appreciable Gaussian weight at a query.  Population inversion succeeds on all
twenty geometries.  For local coverages at least 32,
fitted log--log slopes are $\ScoreCoverageSlope$ for normalized-score RMS and
$\MomentCoverageSlope$
for moment error, matching inverse-square-root coverage.  On held-out
geometries, blind recovery is \HeldoutSuccessLow{},
\HeldoutSuccessMedium{}, and \HeldoutSuccessHigh{} at coverages
256, 512, and 1024.  A fixed eight-template library and a one-point
Hessian summary based on principal-component analysis (PCA) do not meet the
same full-geometry criterion.  These information-limited baselines are
sanity checks for label templates and one-point information; the reconstruction
procedure itself uses a classical spectral estimator.
Under curvature, $0.1\sigma$ center error, ambient noise, and their
combination, blind recovery is
\PerturbationBlindMin{}--\PerturbationBlindMax{}.  An oracle rank gate
reports \PerturbationNonAbstaining{} candidates in this matrix, all of which
meet the full-geometry tolerance; this diagnostic uses the true synthetic
  signal gap and is not a deployable certificate.  Appendix
Figure~\ref{fig:kde} summarizes the coverage law and held-out recovery.

\paragraph{Controlled learned-score diagnostics.}
\label{sec:learned-study}
Appendix Table~\ref{tab:learned-convergence} and
Figure~\ref{fig:learned-convergence} test whether longer training improves
downstream inversion.  Across ten seeds per cell, the \ConvergenceModels{}
trajectories comprise 80
parameter-matched plain/residual models on four geometries and 30
larger-residual controls on three.  All meet the plateau rule: less than 1\%
improvement in the best validation normalized-score RMS across a ten-evaluation
window, recorded only after 20k updates.  Validation selects the plain MLP at
50k updates for every geometry.  Relative to 5k updates,
normalized-score RMS improves by \ConvergenceScoreImprovementMin{}--
\ConvergenceScoreImprovementMax{} while moment error increases by
\ConvergenceMomentIncreaseMin{}--\ConvergenceMomentIncreaseMax{}.
Full-geometry recovery is 2--4/10, and weak-Y count recovery is 6/10.

\section{Related work}
\label{sec:related}

\textbf{Score geometry.}
Smooth-support work uses scores and Jacobians to estimate normal bundles or
dimensions \citep{stanczuk2024dimension,ventura2025geometric}, build pullback
geometry \citep{diepeveen2025pullback}, or recover projections
\citep{kharitenko2026landing}.  Other analyses treat tubes, singularities,
curvature, support recovery, structured supports, and forward cones or
junctions \citep{sakamoto2024tubular,liu2025singularity,li2026geometry,
zhang2026manifold,yang2026multisubspace,mu2026arbitrary,brosse2026boundary,
chen2026guiding}, but do not invert weighted junction geometry.  We recover its
center, homogeneity, and tangent measure at one scale without a score Jacobian.

\textbf{Spectral and spherical inversion.}
Finite-rate-of-innovation, MUSIC, ESPRIT, and super-resolution recover atoms
from moments \citep{vetterli2002fri,schmidt1986music,roy1989esprit,
candes2014superresolution,moitra2015superresolution,yang2016vandermonde};
spherical deconvolution, kernels, and multivariate Prony invert transforms or
moments \citep{kerkyacharian2011spherical,simongabriel2018kernel,
kunis2016multivariate,kunis2019prony}, while directional score matching assumes
a known manifold \citep{mardia2016directional}.  These methods supply atom
recovery; our contribution is score calibration, shell integration,
all-frequency injectivity, and the finite-query score-to-moment error chain.

\textbf{Neighboring targets.}
Global schedule mixture weights \citep{dennehy2026weights} differ from our local
sector masses.  Point-cloud methods locate but do not audit score fields
\citep{vonrohrscheidt2023topological,lim2025hades}; our coverage is KDE sampling,
not the memorization mechanism in \citet{merger2026coverage}.

\section{Limitations and conclusion}
\label{sec:limitations}

Scope is local: one known noise level, one approximately homogeneous junction
window, a branch bound $K$, and convergence in the Gaussian-weighted score
ratio.  Theorem~\ref{thm:planar-tangent-bias} covers zero-thickness planar
$C^{1,\beta}$ branches, not thickness, general strata, or arbitrary-dimensional
rates; without a stable tangent measure, the target may vary with scale.
Calibration requires full rank and can be ill-conditioned near symmetry;
certification requires external error, weight, and separation bounds and
otherwise abstains.  Global search is outside scope.

From one score slice, we recover local weighted tangent geometry with exact
arbitrary-dimensional identifiability and planar certificates.  Experiments
expose gaps between score fit and geometry recovery; broader rates and
pretrained scores remain future work.

\bibliography{references}
\bibliographystyle{plainnat}

\clearpage
\appendix
\section{Single-scale center and homogeneity calibration}
\label{app:calibration}

This section proves Theorem~\ref{thm:calibration}.  Let $\nu$ be a nonzero
$d$-homogeneous measure on $\R^D$ and define its normalized Gaussian smoothing
at variance $t$ by
\begin{equation}
 q_t(z)=(2\pi t)^{-D/2}\int
 \exp\!\left(-\frac{\|z-u\|^2}{2t}\right)\dd\nu(u).
 \label{eq:scaled-smoothing}
\end{equation}
Changing variables $u=\sqrt t\,v$ and using
$\nu(\sqrt t A)=t^{d/2}\nu(A)$ gives
\begin{equation}
 q_t(z)=t^{(d-D)/2}q_1(z/\sqrt t).
 \label{eq:homogeneous-heat-scaling}
\end{equation}
The Gaussian convolution solves $\partial_tq_t=\Delta q_t/2$.  Differentiating
\eqref{eq:homogeneous-heat-scaling} at $t=1$ and equating the two expressions
for $\partial_tq_t$ yields
\begin{equation}
 \Delta q_1(z)+z^\top\nabla q_1(z)+(D-d)q_1(z)=0.
 \label{eq:ou-density}
\end{equation}
For a translated center $b$, put $q_b(y)=q_1(y-b)$ and
$u=\nabla\log q_b$.  Dividing \eqref{eq:ou-density} by $q_b$ proves the strong
score identity
\begin{equation}
 \nabla\!\cdot u+\|u\|^2+(y-b)^\top u+D-d=0,
\end{equation}
which is \eqref{eq:ou-score}.  In particular, if the center were already known,
evaluation at $y=b$ would give
$d=D+\nabla\!\cdot u(b)+\|u(b)\|^2$.

The usable calibration does not differentiate the score.  Multiply the strong
identity by a continuously differentiable test function $\psi$ that is compactly
supported, or decays sufficiently fast for the boundary term to vanish.
Integration by parts gives
\begin{align}
 &\int\left[-\nabla\psi(y)^\top u(y)
 +\psi(y)\{\|u(y)\|^2+y^\top u(y)+D\}\right]\dd y\\
 &\hspace{35mm}=
 b^\top\int\psi(y)u(y)\dd y+d\int\psi(y)\dd y.
 \label{eq:weak-ou-proof}
\end{align}
Stacking these identities proves \eqref{eq:calibration-system}.  Full column
rank of $A$ makes its solution unique.

For completeness, let $\beta=(b,d)$ and suppose
$\widehat A=A+E$ remains full column rank.  Since $r=A\beta$,
\begin{equation}
 \widehat\beta-\beta
 =\widehat A^\dagger(\widehat r-\widehat A\beta)
 =\widehat A^\dagger(e-E\beta).
\end{equation}
Weyl's inequality gives
$\sigma_{\min}(\widehat A)\geq\sigma_{\min}(A)-\|E\|$; taking norms proves
\eqref{eq:calibration-bound}.  If $\widehat u$ has uniform score error at most
$\eps$ on the test supports, the error in each response is bounded by
\begin{equation}
 \|\nabla\psi\|_1\eps
 +\|\psi\|_1\left[(2U+Y)\eps+\eps^2\right],
 \label{eq:weak-score-error}
\end{equation}
where $U$ bounds $\|u\|$ and $Y$ bounds $\|y\|$ there.  The corresponding row
error is at most $\|\psi\|_1\eps$.  Numerical quadrature and tangent-approximation
errors enter the same $E,e$ terms, so the theorem applies without changing the
linear-algebra step.

The implementation uses normalized Gaussian tests
$\psi_{z,h}=\mathcal N(z,h^2I)$.  Dividing \eqref{eq:weak-ou-proof} by their
unit integral turns each row and response into
\begin{align}
 a_{z,h}&=(\E u(Y),1),\\
 r_{z,h}&=\E\left[
 h^{-2}(Y-z)^\top u(Y)+\|u(Y)\|^2+Y^\top u(Y)+D\right],
 \qquad Y\sim\mathcal N(z,h^2I),
 \label{eq:gaussian-weak-tests}
\end{align}
which requires only score values.  Gauss--Hermite quadrature evaluates these
expectations in the reported low-dimensional experiments.

Finally, suppose the cone measure and hence $q_1$ are invariant under
translation along a nonzero vector $v$.  Then $u(y)^\top v=0$ everywhere, so
every weak row annihilates $(v,0)$ and $A$ cannot have full column rank.  The
same invariance means that $b$ and $b+tv$ describe the same smoothed density.
The lost center coordinate is therefore not statistically identifiable.  A
full line is the simplest example; a genuine junction without lineality is
full rank for generic sufficiently rich test families.

\section{Tangent-score limit and shell identity}
\label{app:tangent}

The remaining appendix gives the full proof chain behind the shell inverse,
finite-query guarantees, counterexamples, and experiments.  We begin with the
forward tangent limit only to state the exact assumptions needed by the inverse
result; later sections then treat arbitrary-dimensional finite rays, the sharp
planar specialization, Hessian collisions, stability, finite data, scope, and
experimental protocols.

Let $T_{x_0,\sigma}(x)=(x-x_0)/\sigma$ and define the rescaled measure
\begin{equation}
 \nu_\sigma=\sigma^{-d}(T_{x_0,\sigma})_\#\mu.
\end{equation}
For a fixed normalized query $z$, put
\begin{equation}
 W_z(u)=e^{-\|z-u\|^2/2},\quad
 A_\sigma(z)=\int W_z(u)\dd\nu_\sigma(u),\quad
 B_\sigma(z)=\int(u-z)W_z(u)\dd\nu_\sigma(u).
 \label{eq:AB}
\end{equation}
Changing variables in the Gaussian convolution cancels all ambient Gaussian
normalization constants from the score ratio and gives the exact identity
\begin{equation}
 \sigma s_\sigma(x_0+\sigma z)=\frac{B_\sigma(z)}{A_\sigma(z)}.
 \label{eq:exactratio}
\end{equation}

Assume $\nu_\sigma$ converges to $\nu_\Lambda$ in the two Gaussian-weighted
moments in \eqref{eq:AB}, locally uniformly over the declared query set.  Since
$A_0(z)>0$, ratio continuity yields
\begin{equation}
 \frac{B_\sigma(z)}{A_\sigma(z)}\longrightarrow
 \frac{B_0(z)}{A_0(z)}=\nabla_z\log q_\Lambda(z),
 \qquad q_\Lambda(z)=\int W_z(u)\dd\nu_\Lambda(u).
\end{equation}
This proves \eqref{eq:tangentlimit}.  Notice that weak convergence by itself
is not the stated assumption; the Gaussian-weighted numerator and denominator
are the quantities actually needed.

\subsection{Finite-noise rate for planar branch junctions}

\paragraph{Idea.}
After rescaling by $\sigma$, each branch differs from its tangent ray by
$O(\sigma^\beta)$, and its density changes at the same order.  Gaussian weights
make these local errors integrable and suppress the separated remainder
exponentially; positivity keeps the final score ratio stable.

\begin{appendixtheorem}[Finite-noise planar branch junctions]
\label{thm:planar-tangent-bias}
Let $D=2$, $d=1$, and $0<\beta\leq1$.  Suppose that, near $x_0$, the measure is
a finite sum of half-branches $\gamma_j:[0,r_0]\to\R^2$ with densities
$\rho_j$ relative to arc length, plus a finite remainder supported a positive
distance from $x_0$.  Assume
\begin{equation}
 \gamma_j(0)=x_0,\quad \gamma_j'(0)=v_j,\quad \|v_j\|=1,
 \quad \|\gamma_j'(r)-v_j\|\leq L_\gamma r^\beta,
 \quad |\rho_j(r)-\rho_j(0)|\leq L_\rho r^\beta,
 \label{eq:planar-branch-class}
\end{equation}
where the $v_j$ are distinct and $\rho_j(0)>0$.  Then, for every compact
$Q\subset\R^2$ in the normalized coordinate $z=(x-x_0)/\sigma$, there are
$C_Q<\infty$ and $\sigma_0>0$ such that
\begin{equation}
 \sup_{z\in Q}\|\sigma s_\sigma(x_0+\sigma z)-F_\Lambda(z)\|
 \leq C_Q\sigma^\beta,
 \qquad 0<\sigma\leq\sigma_0,
 \label{eq:planar-tangent-score-rate}
\end{equation}
with $\Lambda=\sum_j\rho_j(0)\delta_{v_j}$.  In particular, $C^2$ branches
with $C^1$ positive densities give an $O(\sigma)$ bias.
\end{appendixtheorem}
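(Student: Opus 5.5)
We reduce the statement to the exact ratio identity \eqref{eq:exactratio}, $\sigma s_\sigma(x_0+\sigma z)=B_\sigma(z)/A_\sigma(z)$, with $\nu_\sigma=\sigma^{-1}(T_{x_0,\sigma})_\#\mu$. The plan has three steps: bound $|A_\sigma-A_0|$ and $\|B_\sigma-B_0\|$ by $C\sigma^\beta$ uniformly on $Q$, then bound the ratio using the positive lower bound on $A_0$ over $Q$. Here $A_0,B_0$ are the moments of $\nu_\Lambda=\sum_j\rho_j(0)\,\dd r\,\delta_{v_j}$, and $B_0/A_0=F_\Lambda$.

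\textbf{Splitting the measure.} Split $\mu$ into the branch part and the remainder. The remainder lies at distance at least $\delta>0$ from $x_0$, so after rescaling its points satisfy $\|u\|\geq\delta/\sigma$. For $z\in Q\subset B(0,\rho_Q)$, the Gaussian weights $W_z(u)$ and $\|u-z\|W_z(u)$ are then at most $C e^{-\delta^2/(8\sigma^2)}$ once $\sigma$ is small. Multiplied by the prefactor $\sigma^{-1}$ and the finite remainder mass, this contribution is $o(\sigma^\beta)$. For branch $j$, parameterize by arc length $r=\sigma t$ with $t\in[0,r_0/\sigma]$. Its contribution to $A_\sigma$ is
\begin{equation*}
\int_0^{r_0/\sigma}W_z\bigl(\gamma_j(\sigma t)-x_0)/\sigma\bigr)\,\rho_j(\sigma t)\,\|\gamma_j'(\sigma t)\|\,\dd t,
\end{equation*}
and similarly for $B_\sigma$. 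Here $\|\gamma_j'\|\le 1+L_\gamma(\sigma t)^\beta$, or equal to $1$ if the branch is arc-length parameterized; either way the factor costs only $O((\sigma t)^\beta)$. The tangent model contributes $\int_0^\infty W_z(tv_j)\rho_j(0)\,\dd t$.

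\textbf{Pointwise comparison.} Integrating the Hölder bound gives $\|(\gamma_j(\sigma t)-x_0)/\sigma - tv_j\|\le L_\gamma\sigma^\beta t^{1+\beta}/(1+\beta)$. Both $W_z$ and $(u-z)W_z$ are Lipschitz with constant $C$ along the segment, and on that segment both points lie within the Gaussian envelope. Specifically, for $\sigma t\le r_1$ small, $\|\gamma_j(\sigma t)-x_0\|\geq \sigma t/2$, so $W_z$ at the curve point is bounded by $e^{-(t/2-\rho_Q)_+^2/2}$. The integrand difference is therefore bounded by
\begin{equation*}
C\sigma^\beta\,(t^{1+\beta}+t^\beta)\,(1+t)\,e^{-c(t-\rho_Q')_+^2}.
\end{equation*}
This collects the geometry error, the density Hölder error $L_\rho(\sigma t)^\beta$, and the speed factor, and it integrates to $C\sigma^\beta$. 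For the tail pieces, $t\in[r_1/\sigma,r_0/\sigma]$ and $t>r_0/\sigma$ in the model, use the Gaussian decay: there the lower bound $\|\gamma_j(r)-x_0\|\geq c_1 r_1$ holds by injectivity, since the branch does not return toward $x_0$. These pieces are then exponentially small.

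\textbf{Ratio bound.} Since $\rho_j(0)>0$ and $W_z>0$, continuity gives $A_0\geq a_Q>0$ on the compact set $Q$. For $\sigma\le\sigma_0$ this yields $A_\sigma\geq a_Q/2$. Writing
\begin{equation*}
\frac{B_\sigma}{A_\sigma}-\frac{B_0}{A_0}=\frac{B_\sigma-B_0}{A_\sigma}-\frac{B_0(A_\sigma-A_0)}{A_\sigma A_0}
\end{equation*}
and using the boundedness of $B_0$ on $Q$ gives \eqref{eq:planar-tangent-score-rate}. The $C^2$/$C^1$ case is the instance $\beta=1$.

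\textbf{Main obstacle.} The delicate step is the uniform Gaussian domination of the curve integrand at large $t$. The Hölder displacement bound $\sigma^\beta t^{1+\beta}$ grows, so a pure Lipschitz comparison fails unless both points stay in a region where $W_z$ decays. I would handle this with the cutoff at $\sigma t=r_1$, where $r_1$ is chosen so that $L_\gamma r_1^\beta\le1/2$. On $[0,r_1/\sigma]$ the curve is then bi-Lipschitz comparable to its ray, which gives the quadratic-exponential envelope for both points and makes the polynomial-times-Gaussian bound integrable. Beyond $r_1$, separation handles the rest.
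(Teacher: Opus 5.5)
Your proposal is correct and follows the paper's proof essentially step for step: the exact ratio identity, a H\"older displacement bound combined with a mean-value comparison under a Gaussian envelope that holds along the whole segment between $tv_j$ and the curve point (the paper's Lemma~\ref{lem:gaussian-envelope}), exponentially small tangent tails and far field, and ratio subtraction using $\inf_Q A_0>0$. Your cutoff at $r_1$, with a separated piece on $[r_1,r_0]$, is the same device as the paper's shrinking of $r_0$ and absorbing the discarded branch segments into $\mu_{\rm far}$; both rely on the same implicit assumption that no branch returns to $x_0$.
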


After shortening the local
branch parameterizations if necessary, write the measure as
\begin{equation}
 \int f\dd\mu
 =\sum_{j=1}^{m}\int_0^{r_0}
 f(\gamma_j(r))\rho_j(r)\|\gamma_j'(r)\|\dd r
 +\int f\dd\mu_{\rm far},
 \label{eq:local-branch-decomposition}
\end{equation}
where $\operatorname{dist}(x_0,\operatorname{supp}\mu_{\rm far})\geq\Delta>0$.
Let $w_j=\rho_j(0)$ and
\begin{equation}
 \nu_0=\sum_{j=1}^{m}w_j\int_0^\infty\delta_{uv_j}\dd u,
 \qquad \Lambda=\sum_{j=1}^{m}w_j\delta_{v_j}.
 \label{eq:planar-tangent-measure}
\end{equation}

\begin{lemma}[Uniform Gaussian envelope]
\label{lem:gaussian-envelope}
Fix $R<\infty$.  Let $v$ be a unit vector, $u\geq0$, $\|z\|\leq R$, and
$q_t=uv+t\eta$ for $0\leq t\leq1$, where $\|\eta\|\leq u/2$.  There are
$c>0$ and $C_R<\infty$ such that
\begin{align}
 W_z(q_t)+\|\nabla W_z(q_t)\|
 &\leq C_R(1+u)e^{-cu^2},\label{eq:envelope-w}\\
 \|(q_t-z)W_z(q_t)\|+
 \|\nabla_q[(q-z)W_z(q)]_{q=q_t}\|_\op
 &\leq C_R(1+u^2)e^{-cu^2}.
 \label{eq:envelope-h}
\end{align}
\end{lemma}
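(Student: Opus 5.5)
The plan is a direct computation: bound $\|q_t-z\|$ from below, then absorb all polynomial prefactors into a slightly weaker Gaussian. Since $\|\eta\|\leq u/2$ and $0\leq t\leq1$, the triangle inequality gives
\begin{equation*}
 \tfrac{u}{2}\leq\|q_t\|\leq\tfrac{3u}{2}.
\end{equation*}
With $\|z\|\leq R$, this yields $\|q_t-z\|\geq u/2-R$ and $\|q_t-z\|\leq 3u/2+R$. The upper bound is a linear function of $u$ with constants depending only on $R$.

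For the exponential factor, use the elementary inequality $(a-R)_+^2\geq a^2/2-R^2$ with $a=u/2$. Squaring the lower bound when $u\geq 2R$, and using trivially $\|q_t-z\|^2\geq0\geq u^2/8-R^2/2$ when $u<2R$, gives
\begin{equation*}
 \|q_t-z\|^2\geq\tfrac{u^2}{8}-R^2 .
\end{equation*}
Hence $W_z(q_t)=e^{-\|q_t-z\|^2/2}\leq e^{R^2/2}e^{-u^2/16}$. So $c=1/16$ works, and $e^{R^2/2}$ is absorbed into $C_R$.

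Next compute the derivatives explicitly:
\begin{equation*}
 \nabla W_z(q)=-(q-z)W_z(q),\qquad
 \nabla_q\big[(q-z)W_z(q)\big]=\big(I-(q-z)(q-z)^\top\big)W_z(q).
\end{equation*}
Their sizes are bounded by $\|q-z\|W_z$ and $(1+\|q-z\|^2)W_z$ respectively. The same holds for $\|(q-z)W_z\|$.

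Substituting $\|q_t-z\|\leq 3u/2+R\leq C(1+u)$ gives
\begin{equation*}
 W_z+\|\nabla W_z\|\leq(1+C(1+u))\,e^{R^2/2}e^{-u^2/16},
\end{equation*}
which is \eqref{eq:envelope-w}. For \eqref{eq:envelope-h}, the bound is
\begin{equation*}
 \big(C(1+u)+1+C^2(1+u)^2\big)\,e^{R^2/2}e^{-u^2/16}\leq C_R(1+u^2)e^{-u^2/16},
\end{equation*}
using $(1+u)^2\leq 2(1+u^2)$.

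There is no real obstacle here; the only care point is the lower bound on $\|q_t-z\|^2$. The hypothesis $\|\eta\|\leq u/2$ is exactly what keeps the interpolated point $q_t$ comparable in norm to $uv$. This is needed to get a Gaussian decay rate $c$ uniform in $t$, $\eta$, $v$, and $z$. Including the case $u<2R$ through the additive $-R^2$ ensures the constants hold for all $u\geq0$.
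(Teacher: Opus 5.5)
Your proof is correct and takes essentially the same route as the paper. Both arguments use the triangle-inequality bounds $u/2\le\|q_t\|\le 3u/2$, the resulting lower bound $\|q_t-z\|\ge u/2-R$, and the explicit derivative identities for $W_z$ and $(q-z)W_z$. The one cosmetic difference is that the paper splits into the cases $u\ge 4R$ and $u<4R$, whereas your single inequality $\|q_t-z\|^2\ge u^2/8-R^2$ handles small $u$ through the factor $e^{R^2/2}$ and gives the explicit rate $c=1/16$.
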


\begin{proof}
The bounds $u/2\leq\|q_t\|\leq3u/2$ imply
$\|q_t-z\|\geq u/2-R$.  For $u\geq4R$, this is at least $u/4$; for
$0\leq u<4R$, every polynomial factor is bounded by a constant depending on
$R$.  The identities
\begin{equation*}
 \nabla_q W_z(q)=-(q-z)W_z(q),\qquad
 \nabla_q[(q-z)W_z(q)]
 =[I-(q-z)(q-z)^\top]W_z(q)
\end{equation*}
then prove both displays, for example with any fixed $c\leq1/64$ after
enlarging $C_R$.
\end{proof}

\begin{proof}[Proof of Theorem~\ref{thm:planar-tangent-bias}]
Define the density including the parameterization Jacobian by
\begin{equation}
 a_j(r)=\rho_j(r)\|\gamma_j'(r)\|.
 \label{eq:dressed-density}
\end{equation}
We may decrease $r_0$ until $L_\gamma r_0^\beta\leq1/2$; the omitted branch
segments can be absorbed into $\mu_{\rm far}$ while preserving a positive
distance from $x_0$.  Since
$|\|\gamma_j'(r)\|-1|\leq L_\gamma r^\beta$, the density assumptions give
constants $L_a,a_+<\infty$ such that
\begin{equation}
 |a_j(r)-w_j|\leq L_a r^\beta,
 \qquad 0\leq a_j(r)\leq a_+.
 \label{eq:dressed-density-rate}
\end{equation}
For example, if $\bar\rho=\max_{j,r}\rho_j(r)$ and
$\rho_+=\max_jw_j$, one may take
$L_a=\tfrac32L_\rho+\rho_+L_\gamma$ and $a_+=\tfrac32\bar\rho$.

For $0\leq u\leq r_0/\sigma$, define
\begin{equation}
 q_{j,\sigma}(u)=\frac{\gamma_j(\sigma u)-x_0}{\sigma}.
\end{equation}
Integrating the derivative estimate in \eqref{eq:planar-branch-class} yields
\begin{equation}
 \|q_{j,\sigma}(u)-uv_j\|
 \leq L_0\sigma^\beta u^{1+\beta},
 \qquad L_0=\frac{L_\gamma}{1+\beta}.
 \label{eq:rescaled-position-rate}
\end{equation}
The right-hand side is at most $u/2$.  Every segment from $uv_j$ to
$q_{j,\sigma}(u)$ therefore satisfies Lemma~\ref{lem:gaussian-envelope}.

Changing variables $r=\sigma u$ in \eqref{eq:local-branch-decomposition}
gives the local contributions
\begin{align}
 A_\sigma^{\rm loc}(z)
 &=\sum_{j=1}^{m}\int_0^{r_0/\sigma}
 a_j(\sigma u)W_z(q_{j,\sigma}(u))\dd u,
 \label{eq:local-a}\\
 B_\sigma^{\rm loc}(z)
 &=\sum_{j=1}^{m}\int_0^{r_0/\sigma}
 a_j(\sigma u)(q_{j,\sigma}(u)-z)
 W_z(q_{j,\sigma}(u))\dd u.
 \label{eq:local-b}
\end{align}
The tangent quantities replace $a_j(\sigma u)$ by $w_j$,
$q_{j,\sigma}(u)$ by $uv_j$, and the upper limit by infinity.

Let $Q$ be compact and $R_Q=\sup_{z\in Q}\|z\|$.  For the denominator
integrand, add and subtract $a_j(\sigma u)W_z(uv_j)$.  The mean-value theorem,
\eqref{eq:dressed-density-rate}, \eqref{eq:rescaled-position-rate}, and
Lemma~\ref{lem:gaussian-envelope} give, uniformly over $z\in Q$,
\begin{align}
 &|a_j(\sigma u)W_z(q_{j,\sigma}(u))-w_jW_z(uv_j)|\nonumber\\
 &\quad\leq C_{R_Q}\sigma^\beta
 \left[L_a u^\beta+a_+L_0u^{1+\beta}(1+u)\right]e^{-cu^2}.
 \label{eq:a-integrand-bound}
\end{align}
If $H_z(q)=(q-z)W_z(q)$, the same argument gives
\begin{align}
 &\|a_j(\sigma u)H_z(q_{j,\sigma}(u))-w_jH_z(uv_j)\|\nonumber\\
 &\quad\leq C_{R_Q}\sigma^\beta
 \left[L_a u^\beta(1+u)
 +a_+L_0u^{1+\beta}(1+u^2)\right]e^{-cu^2}.
 \label{eq:b-integrand-bound}
\end{align}
Both right-hand sides are integrable on $[0,\infty)$, so their integrals are
$O(\sigma^\beta)$.

The tangent tails beyond $r_0/\sigma$ have the same bound.  Indeed,
$W_z(uv_j)\leq e^{-(u-R_Q)^2/2}$ and
$\|H_z(uv_j)\|\leq(u+R_Q)e^{-(u-R_Q)^2/2}$.  A Gaussian tail times any fixed
polynomial is $O(\sigma^p)$ for every $p>0$ when its lower limit is
$r_0/\sigma$.

It remains to control $\mu_{\rm far}$.  Choose
$\sigma_0\leq\Delta/(2\max\{R_Q,1\})$.  For $z\in Q$ and
$x\in\operatorname{supp}\mu_{\rm far}$,
\begin{equation*}
 \left\|\frac{x-x_0}{\sigma}-z\right\|\geq\frac{\Delta}{2\sigma}.
\end{equation*}
The far contributions to $A_\sigma$ and $B_\sigma$ are consequently bounded by
constant multiples of
\begin{equation*}
 \sigma^{-1}\mu_{\rm far}(\R^2)e^{-\Delta^2/(8\sigma^2)}
 \quad\text{and}\quad
 \sigma^{-2}\mu_{\rm far}(\R^2)e^{-\Delta^2/(8\sigma^2)},
\end{equation*}
respectively.  Both are $O(\sigma^\beta)$ because, for $a,p>0$,
\begin{equation}
 \sup_{t>0}t^{-p}e^{-a/t^2}
 =a^{-p/2}\left(\frac{p}{2e}\right)^{p/2}<\infty.
 \label{eq:exponential-absorbs-power}
\end{equation}
Combining the local, tangent-tail, and far-field estimates proves
\begin{equation}
 \sup_{z\in Q}\{|A_\sigma(z)-A_0(z)|+
 \|B_\sigma(z)-B_0(z)\|\}
 \leq C_{\rm tan,Q}\sigma^\beta.
 \label{eq:planar-weighted-rate}
\end{equation}

Let $\rho_-:=\min_jw_j>0$.  Restricting each tangent integral to
$0\leq u\leq1$ gives the explicit lower bound
\begin{equation}
 a_Q:=\inf_{z\in Q}A_0(z)
 \geq m\rho_-e^{-(R_Q+1)^2/2}>0.
 \label{eq:planar-denominator-lower}
\end{equation}
If $C_{\rm tan,Q}\sigma^\beta\leq a_Q/2$, then $A_\sigma\geq a_Q/2$.  With
$S_Q=\sup_{z\in Q}\|B_0(z)/A_0(z)\|$, direct ratio subtraction yields
\begin{equation}
 \sup_{z\in Q}\left\|\frac{B_\sigma}{A_\sigma}
 -\frac{B_0}{A_0}\right\|
 \leq\frac{2(1+S_Q)C_{\rm tan,Q}}{a_Q}\sigma^\beta.
 \label{eq:planar-ratio-rate}
\end{equation}
The exact identity \eqref{eq:exactratio} proves
\eqref{eq:planar-tangent-score-rate}.  Projecting on $z=R\omega$ multiplies
the bound by at most $R$, so the same rate holds for the shell tangential score.
Finally, $C^2$ branches and $C^1$ densities satisfy
\eqref{eq:planar-branch-class} with $\beta=1$.
\end{proof}

The constant $C_{\rm tan,Q}$ depends on the compact query radius, branch count,
local chart radius, far-field distance and mass, the curve and density
H\"older bounds, and density upper bounds.  The ratio constant additionally
depends on positive tangent mass through $a_Q$.  A minimum angular separation
does not enter the forward approximation; it enters only the conditioning of
the downstream finite-atom inverse.

\paragraph{Numerical rate check.}
Figure~\ref{fig:tangent-bias-rate} evaluates the two Gaussian-weighted moments
and their ratio for the $\beta\in\{1/2,3/4,1\}$ branch/density cases in
Theorem~\ref{thm:planar-tangent-bias}.  The fitted small-$\sigma$ slopes track
the corresponding exponents;
this sweep checks the implementation and illustrates the asymptotic regime but
is not used in the proof.
\begin{figure}[H]
  \centering
  \includegraphics[width=0.99\linewidth]{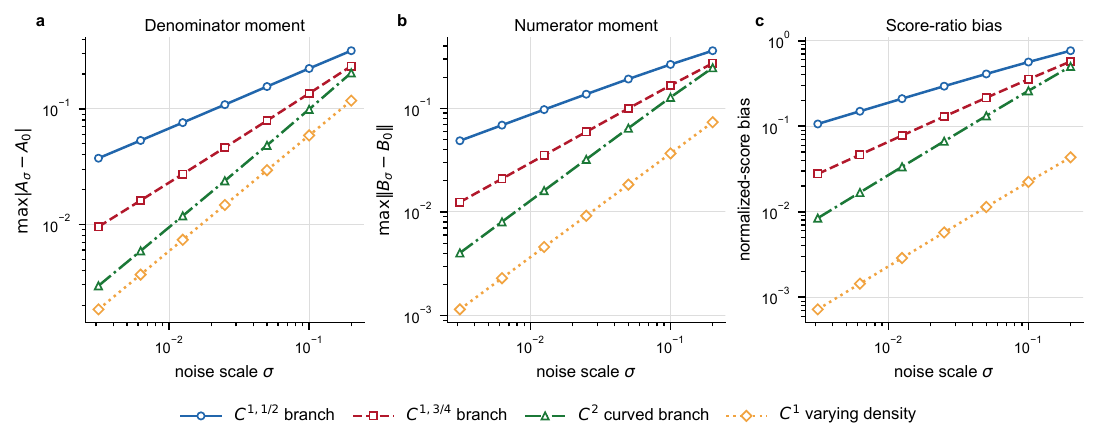}
  \caption{Finite-noise tangent-bias sweep.  The panels report the maximum
  denominator error, numerator error, and normalized-score ratio error over a
  fixed normalized query set.  Each curve is computed by Gaussian quadrature;
  the $\beta=1/2$, $\beta=3/4$, and $\beta=1$ cases exhibit their predicted
  log--log exponents.}
  \label{fig:tangent-bias-rate}
\end{figure}

\subsection{Shell identity}

For $z=R\omega$, substitute the polar form \eqref{eq:cone}:
\begin{align}
 q_\Lambda(R\omega)
 &=\int_{\Sph^{D-1}}\int_0^\infty
 e^{-(R^2+r^2-2Rr\omega^\top\theta)/2}
 r^{d-1}\dd r\,\dd\Lambda(\theta)\\
 &=e^{-R^2/2}\T_{d,R}\Lambda(\omega).
\end{align}
For any smooth scalar $f$ in a neighborhood of the sphere,
$\nabla_{\Sph^{D-1}} f(R\omega)=R P_{\omega^\perp}\nabla f(R\omega)$.
Applying this to $\log q_\Lambda$ proves \eqref{eq:shellidentity}.  The same
identity holds weakly and then smoothly after convolution when $\Lambda$ is a
finite measure rather than a density.

\section{Proof of one-shell injectivity}
\label{app:injectivity}

\begin{proof}[Proof of Theorem~\ref{thm:injective}]
Let $H=\T_{d,R}\Lambda$ and $H'=\T_{d,R}\Lambda'$.  The kernel is strictly
positive and the measures are nonzero, so both transforms are positive.
Equality of the observed fields gives
\begin{equation}
 \nabla_{\Sph^{D-1}}(\log H-\log H')=0.
\end{equation}
The sphere is connected for $D\geq2$, hence $H=cH'$ for some $c>0$.

It remains to prove that $\T_{d,R}$ has no harmonic nullspace.  Let
$Y_{\ell,m}$ be a degree-$\ell$ spherical harmonic.  The Funk--Hecke formula
for the exponential kernel gives
\begin{equation}
 \int_{\Sph^{D-1}}e^{Rr\omega^\top\theta}Y_{\ell,m}(\theta)\dd\theta
 =c_{D,\ell}(Rr)^{1-D/2}I_{\ell+D/2-1}(Rr)Y_{\ell,m}(\omega),
 \label{eq:funkhecke}
\end{equation}
where $c_{D,\ell}>0$ under the usual harmonic normalization and the modified
Bessel function $I_\nu(t)$ is strictly positive for $t>0$.  Integrating
\eqref{eq:funkhecke} against $r^{d-1}e^{-r^2/2}\dd r$ shows that every
degree-$\ell$ multiplier of $\T_{d,R}$ is strictly positive.  Therefore
$\T_{d,R}(\Lambda-c\Lambda')=0$ implies that every spherical-harmonic moment
of the finite signed measure $\Lambda-c\Lambda'$ vanishes.  Finite linear
combinations of spherical harmonics are uniformly dense in continuous
functions on the sphere, so the signed measure is zero.  Thus
$\Lambda=c\Lambda'$.  Unit mass forces $c=1$.
\end{proof}

The theorem is injective but not uniformly well-conditioned over arbitrary
angular measures.  The multipliers in \eqref{eq:funkhecke} can become small at
large $\ell$; finite-noise recovery must therefore restrict angular
complexity or explicitly pay the inverse multiplier.

\section{Finite positive rays in arbitrary dimension}
\label{app:finite-spherical}

We prove Theorem~\ref{thm:finite-spherical} and the general fixed-query lower
bound.  Let $\mathcal P_n(\Sph^{D-1})$ be the restrictions to the sphere of real
polynomials of total degree at most $n$.  Their harmonic decomposition is
\begin{equation}
 \mathcal P_n(\Sph^{D-1})
 =\bigoplus_{\ell=0}^{n}\mathcal H_\ell(\Sph^{D-1}).
 \label{eq:polynomial-harmonic-decomposition}
\end{equation}
\paragraph{Score-to-moment interface.}
Integrating the shell tangential score reconstructs the positive scalar
transform up to a constant; unit-mass normalization fixes it.  Dividing each
harmonic coefficient by the positive multiplier in \eqref{eq:funkhecke} recovers
\begin{equation}
 L(p)=\int_{\Sph^{D-1}}p(\theta)\dd\Lambda(\theta)
 \label{eq:polynomial-functional}
\end{equation}
for every $p\in\mathcal P_n$ when harmonics through degree $n$ are available.

\paragraph{Classical atom-recovery step.}
Write $\Lambda=\sum_{j=1}^{s}w_j\delta_{\theta_j}$ with $s\leq K$.  Degree-
$(s-1)$ Lagrange polynomials on the distinct support points show that evaluation
of $\mathcal P_{K-1}$ is surjective.  Thus, for a spanning vector $\Phi$ and
$V_{aj}=\phi_a(\theta_j)$, one has $\operatorname{rank}V=s$ and
\begin{align}
 H&:=L(\Phi\Phi^\top)=V\operatorname{diag}(w)V^\top,
 \qquad \operatorname{rank}H=s,\label{eq:general-moment-matrix}\\
 H_{B,r}&:=L(x_r\Phi_B\Phi_B^\top)
 =V_B\operatorname{diag}(w)\operatorname{diag}
 (\theta_{1r},\ldots,\theta_{sr})V_B^\top,
 \label{eq:shifted-moment-block}
\end{align}
where $\Phi_B$ selects any nonsingular $s\times s$ row submatrix $V_B$, and
$H_B:=L(\Phi_B\Phi_B^\top)=V_B\operatorname{diag}(w)V_B^\top$ is positive
definite.  Therefore
\begin{equation}
 H_B^{-1}H_{B,r}=V_B^{-\top}\operatorname{diag}
 (\theta_{1r},\ldots,\theta_{sr})V_B^\top.
 \label{eq:multiplication-pencil}
\end{equation}
These matrices commute, and their joint eigenvalue tuples are the directions
$\theta_j$; then $L(\Phi_B)=V_Bw$ recovers the weights.  Entries of $H$ and
$H_{B,r}$ have degree at most $2K-2$ and $2K-1$, respectively.  This standard
multivariate/spherical Prony step
\citep{kunis2016multivariate,kunis2019prony} proves the stated degree bound; the
preceding score-to-moment conversion is the paper-specific interface.

\begin{proof}[Proof of Proposition~\ref{prop:query-lower}]
Choose $K$ pairwise disjoint coordinate patches
$U_1,\ldots,U_K\subset\Sph^{D-1}$.  One direction in each patch contributes
$D-1$ Euclidean chart coordinates.  Parameterize the last normalized weight as
$w_K=1-\sum_{j<K}w_j$ and restrict the others to a nonempty open simplex.  The
resulting labeled family is an open subset of
$\R^{K(D-1)+K-1}=\R^{KD-1}$.

Any fixed collection of $M$ scalar tangential observations is a continuous map
from this open parameter set to $\R^M$.  If $M<KD-1$ and the observations
identified every measure, composing with the standard embedding
$\R^M\hookrightarrow\R^{KD-1}$ would give a continuous injection from an open
subset of $\R^{KD-1}$ into $\R^{KD-1}$ whose image lies in a lower-dimensional
coordinate subspace.  Invariance of domain says the image must be open, a
contradiction.  Therefore $M\geq KD-1$.
\end{proof}

\section{Exact recovery of finite positive planar rays}
\label{app:planar}

Identify $\Sph^1$ with angles $\phi\in[0,2\pi)$.  Let
$H(\phi)=\T_{1,R}\Lambda(\omega_\phi)>0$ and
$\bar H=(2\pi)^{-1}\int_0^{2\pi}H(\phi)\dd\phi$.  Equation
\eqref{eq:shellidentity} gives $(\log H)'$ exactly.  Integrating around the
circle determines $H$ up to a constant; the unit-mean density
$h=H/\bar H$ removes precisely the irrelevant total mass of $\Lambda$.

The circular Fourier expansion of the zonal kernel is
\begin{equation}
 K_{1,R}(\cos\phi)=\sum_{k\in\mathbb Z}\kappa_k(1,R)e^{ik\phi},\qquad
 \kappa_k(1,R)=c\int_0^\infty e^{-r^2/2}I_{|k|}(Rr)\dd r>0,
\end{equation}
with a common positive convention constant $c$.  Consequently the normalized
Fourier coefficients satisfy \eqref{eq:moments}.

\begin{proof}[Proof of Theorem~\ref{thm:planar}]
For $a=0,\ldots,K$ and $j=1,\ldots,s$, let
$V_{aj}=e^{-ia\theta_j}$.  Positivity gives
\begin{equation}
 T_K=V\operatorname{diag}(w)V^*,\qquad
 c^*T_Kc=\sum_{j=1}^{s}w_j\left|\sum_{a=0}^{K}c_ae^{ia\theta_j}\right|^2.
\end{equation}
The distinct-node Vandermonde matrix has full column rank, so
$\operatorname{rank}T_K=s$.  Because the weights are positive, a kernel vector
$c$ is equivalent to a degree-at-most-$K$ polynomial vanishing at every
$z_j=e^{-i\theta_j}$.  Conversely, $\prod_j(z-z_j)$ has degree $s\leq K$, so
the common unit-circle zeros of all kernel polynomials are exactly the nodes.
Finally, $m_0,\ldots,m_{s-1}$ form a nonsingular Vandermonde system for the
weights.
\end{proof}

\begin{proof}[Proof of Proposition~\ref{prop:sharp-moments}]
For $K\geq1$ and an angle $\alpha$, consider the normalized regular measure
\begin{equation}
 \Lambda_\alpha=\frac1K\sum_{j=0}^{K-1}
 \delta_{\alpha+2\pi j/K}.
\end{equation}
Its moments satisfy
\begin{equation}
 m_k(\Lambda_\alpha)
 =\frac{e^{-ik\alpha}}{K}\sum_{j=0}^{K-1}e^{-2\pi i k j/K}=0,
 \qquad 1\leq k<K,
\end{equation}
while $m_0=1$.  If $\alpha-\beta$ is not an integer multiple of $2\pi/K$,
then $\Lambda_\alpha\neq\Lambda_\beta$, although their moments through order
$K-1$ coincide.  Their next moments are
$m_K(\Lambda_\alpha)=e^{-iK\alpha}$ and
$m_K(\Lambda_\beta)=e^{-iK\beta}$, which are distinct.  This proves both the
failure below order $K$ and the stated sharpness.
\end{proof}

This proof also explains why positivity matters.  Without it, the quadratic
form need not be a sum of nonnegative terms, Toeplitz rank need not equal the
number of atoms, and cancellation can hide directions.

\section{Vertex Hessians and an infinite collision family}
\label{app:hessian}

Define the tilted probability law
\begin{equation}
 \Pi_z(\dd u)=q_\Lambda(z)^{-1}e^{-\|z-u\|^2/2}\nu_\Lambda(\dd u).
\end{equation}
Differentiation under the integral gives the standard Gaussian-mixture
identities
\begin{equation}
 F_\Lambda(z)=\E_{\Pi_z}[U]-z,\qquad
 \nabla F_\Lambda(z)=\operatorname{Cov}_{\Pi_z}(U)-I_D.
 \label{eq:posteriorhessian}
\end{equation}
At $z=0$, radius and angle are independent.  The radial density is
proportional to $r^{d-1}e^{-r^2/2}$, hence
\begin{equation}
 \E R=a_d=\sqrt2\frac{\Gamma((d+1)/2)}{\Gamma(d/2)},\qquad
 \E R^2=d.
\end{equation}
Therefore $\E U=a_dm_\Lambda$ and $\E UU^\top=dQ_\Lambda$.  Substitution into
\eqref{eq:posteriorhessian} proves \eqref{eq:hessian}.

\begin{proposition}[Regular-junction Hessian collision]
For every uniform regular planar $q$-ray angular measure with $q\geq3$,
$m_\Lambda=0$ and $Q_\Lambda=I_2/2$.  When $d=1$, every such junction has
$\nabla F(0)=-I_2/2$.
\end{proposition}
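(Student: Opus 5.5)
The plan is to compute $m_\Lambda$ and $Q_\Lambda$ directly for the regular measure and then substitute them into the Hessian formula \eqref{eq:hessian}, which is already proved. Write a uniform regular $q$-ray measure as
\[
\Lambda=\frac1q\sum_{j=0}^{q-1}\delta_{\theta_j},\qquad \theta_j=(\cos\phi_j,\sin\phi_j),\qquad \phi_j=\alpha+2\pi j/q .
\]
Identify $\R^2$ with $\mathbb C$. The first moment is then the complex number $q^{-1}\sum_j e^{i\phi_j}=q^{-1}e^{i\alpha}\sum_j e^{2\pi i j/q}$. This vanishes because $e^{2\pi i/q}\neq1$ for $q\geq2$, so $m_\Lambda=0$.

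For the second moment, use the double-angle form of the outer product:
\[
\theta\theta^\top=\tfrac12 I_2+\tfrac12\begin{pmatrix}\cos2\phi&\sin2\phi\\ \sin2\phi&-\cos2\phi\end{pmatrix}.
\]
The traceless part averages to the real and imaginary parts of $q^{-1}e^{2i\alpha}\sum_j e^{4\pi i j/q}$. This sum vanishes exactly when $e^{4\pi i/q}\neq1$, that is, when $q\nmid 2$. This is the only place where the hypothesis $q\geq3$ is used; for $q=2$ the measure is a line and $Q_\Lambda$ is a rank-one projector. Hence $Q_\Lambda=I_2/2$.

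Substituting into \eqref{eq:hessian} with $d=1$, the term $a_d^2m_\Lambda m_\Lambda^\top$ drops out and
\[
\nabla F(0)=1\cdot I_2/2-I_2=-I_2/2 .
\]
This holds for every $q\geq3$ and every rotation $\alpha$. There is no real obstacle beyond the root-of-unity cancellation, which requires only that the frequencies $1$ and $2$ are not multiples of $q$; this is the same mechanism as in the proof of Proposition~\ref{prop:sharp-moments}. That proposition could be invoked directly: it gives $m_1=m_2=0$ for $K=q\geq3$, and these encode exactly $m_\Lambda$ and the traceless part of $Q_\Lambda$.
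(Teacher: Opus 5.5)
Your proof is correct and follows essentially the same route as the paper: the first moment is a sum of $q$th roots of unity, the double-angle identities reduce the traceless part of $Q_\Lambda$ to the degree-two root sum, which vanishes for $q\geq3$, and substituting into \eqref{eq:hessian} with $d=1$ gives $-I_2/2$. Your explicit rotation $\alpha$ covers every regular junction rather than only the unrotated one, and your $q=2$ remark makes clear where $q\geq3$ is used.
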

\begin{proof}
Write the directions as $e^{2\pi ij/q}$.  The first moment is the sum of the
$q$th roots of unity and vanishes.  Using
$\cos^2\theta=(1+\cos2\theta)/2$,
$\sin^2\theta=(1-\cos2\theta)/2$, and
$\sin\theta\cos\theta=\sin2\theta/2$, the second moment is $I_2/2$ because
the degree-two root sum vanishes for $q\geq3$.  Equation
\eqref{eq:hessian} completes the proof.
\end{proof}

Thus a one-point Hessian cannot identify even the branch count within this
simple infinite family.  By contrast, the regular $q$-ray angular measure has
a nonzero degree-$q$ moment, which survives the nonzero shell multiplier.

\section{Finite-query moment stability}
\label{app:finitequery}

We now prove \eqref{eq:momentbound}.  With the unit-mean $h$ defined above, let
\begin{equation}
 \ell(\phi)=\log h(\phi),\qquad g(\phi)=\ell'(\phi).
\end{equation}
By construction the continuous mean of $h$ is one.  Let
$\mathcal K_M=\{-\lfloor M/2\rfloor,\ldots,\lceil M/2\rceil-1\}$ and let
$a_M(n)\in\mathcal K_M$ be the representative congruent to $n$ modulo $M$.
Set
\begin{equation}
 \beta_M(n)=
 \begin{cases}
  1/a_M(n),&a_M(n)\neq0,\\
  0,&a_M(n)=0,
 \end{cases}
\end{equation}
where the second case records that spectral integration discards the aliased
zero mode.  If $g_n$ denotes the continuous Fourier coefficient, define
\begin{equation}
 A_M(g)=\sum_{n\notin\mathcal K_M}|g_n|
 \left|\beta_M(n)-\frac1n\right|.
 \label{eq:AM}
\end{equation}

Let $\widetilde y_k=M^{-1}\sum_jy_je^{-ik\phi_j}$ be the discrete Fourier
transform (DFT).  For nonzero
$k\in\mathcal K_M$, spectral integration sets
$\widetilde\ell_k=\widetilde y_k/(ik)$ and sets the zero coefficient to zero.
The sampling identity
\begin{equation}
 \widetilde g_k=\sum_{r\in\mathbb Z}g_{k+rM}
\end{equation}
shows that the noiseless nodewise integration error is bounded by
\eqref{eq:AM}.  For the noise, Parseval and Cauchy--Schwarz give
\begin{align}
 \max_j\left|\sum_{k\in\mathcal K_M\setminus\{0\}}
 \frac{\widetilde e_k}{ik}e^{ik\phi_j}\right|
 &\leq\left(\sum_{k\neq0}k^{-2}\right)^{1/2}
 \left(\sum_k|\widetilde e_k|^2\right)^{1/2}\\
 &\leq C_M\eps_2.
\end{align}
Thus, up to one additive log-density constant,
\begin{equation}
 \|\widetilde\ell-\ell\|_{\infty,M}\leq
 \tau_M=A_M(g)+C_M\eps_2.
 \label{eq:logerr}
\end{equation}
The bound does not grow with $M$.  If $g$ has $r>1$ bounded derivatives,
integration by parts also gives the explicit regularity estimate
\begin{equation}
 A_M(g)\leq \|g^{(r)}\|_\infty
 \sum_{n\notin\mathcal K_M}|n|^{-r}
 \left|\beta_M(n)-\frac1n\right|.
\end{equation}
For the positive analytic cone kernel all such derivatives exist, and they are
uniform over a compact declared geometry class.

Exponentiate the reconstructed log density at the nodes and normalize its
discrete mean.  Write $\bar h_M=M^{-1}\sum_jh(\phi_j)$.  The correct comparison
is with $h(\phi_j)/\bar h_M$, not with the continuously normalized
$h(\phi_j)$.  The additive log constant cancels, and the remaining discrete
normalization changes by at most a factor $e^{\tau_M}$ in either direction.
Thus \eqref{eq:logerr} implies
\begin{equation}
 \max_j\left|\frac{\widetilde h_j}{h_j/\bar h_M}-1\right|
 \leq e^{2\tau_M}-1.
 \label{eq:experr}
\end{equation}

The true normalized density has Fourier coefficients
$h_k=\rho_{|k|}(R)m_k$.  Sampling aliases them:
\begin{equation}
 \frac1M\sum_jh(\phi_j)e^{-ik\phi_j}
 =\rho_km_k+\sum_{r\neq0}\rho_{|k+rM|}m_{k+rM}.
\end{equation}
Because $\Lambda$ is a unit-mass positive measure, $|m_n|\leq1$.  Define
\begin{equation}
 B_{M,k}(R)=\sum_{r\in\mathbb Z\setminus\{0\}}\rho_{|k+rM|}(R).
\end{equation}
Then $B_{M,k}$ bounds the $k$th density alias uniformly over the angular
class.  Since $\bar h_M$ is the aliased discrete zero mode, it lies in
$[1-B_{M,0},1+B_{M,0}]$.  Combining this denominator perturbation with
\eqref{eq:experr}, and then dividing the $k$th mode by $\rho_k$, yields
\begin{equation}
 |\widehat m_k-m_k|\leq
 \rho_k^{-1}(e^{2\tau_M}-1)
 +\frac{B_{M,k}/\rho_k+B_{M,0}}{1-B_{M,0}},
\end{equation}
provided $B_{M,0}<1$.  This proves \eqref{eq:momentbound}.

\section{Rank, root, weight, and center stability}
\label{app:stability}

\begin{table}[H]
  \centering
  \small
  \caption{The conditional certificate proceeds from count to directions to
  weights.  Every row uses class information fixed before the field is
  inspected.  Failure stops the procedure before the next output.}
  \label{tab:certificate-stages}
  \begin{tabular}{@{}p{0.12\linewidth}p{0.25\linewidth}p{0.27\linewidth}p{0.25\linewidth}@{}}
    \toprule
    Stage & Declared class information & Sufficient test & Output on success / failure \\
    \midrule
    Count & Minimum positive Toeplitz eigenvalue & Signal gap exceeds twice $E_K$ & Branch count / abstain \\
    Directions & Count, angular separation, and root margin & Appendix Rouch\'e inequality & Angles within the declared tolerance / no directions \\
    Weights & Direction bound and minimum Vandermonde singular value & Appendix perturbed-solve inequality & Normalized weights within tolerance / no weights \\
    \bottomrule
  \end{tabular}
\end{table}

Let $\eta=\max_{1\leq k\leq K}|\widehat m_k-m_k|$ and set
$E_K=(K+1)\eta$.  Every entry of the Hermitian Toeplitz perturbation changes
by at most $\eta$, hence its Frobenius and operator norms obey
\begin{equation}
 \|\widehat T_K-T_K\|_\op\leq\|\widehat T_K-T_K\|_F\leq E_K.
\end{equation}
If $\gamma_K(\Lambda)=\lambda_{\min}^{+}(T_K)>2E_K$, Weyl's inequality
places all $s$ positive eigenvalues above $E_K$ and all $K+1-s$ zero
eigenvalues below $E_K$ after perturbation.  Thresholding at $E_K$ therefore
recovers the true rank.  For an honest uniform certificate, the declared
class must supply $\gamma_K\geq\Gamma_K>2E_K$.

The signal gap itself exposes two sources of difficulty.  Since
$T_K=V\operatorname{diag}(w)V^*$,
\begin{equation}
 \gamma_K(\Lambda)\geq w_{\min}\sigma_{\min}(V)^2.
 \label{eq:signalbound}
\end{equation}
If the minimum circular angle separation is $\delta_0$, the off-diagonal
Vandermonde Gram entries are Dirichlet sums bounded by
$\csc(\delta_0/2)$.  For modes $0{:}q$ and at most $u$ rays, Gershgorin gives
\begin{equation}
 \underline v_{q,u}^{2}
 :=q+1-(u-1)\csc(\delta_0/2)
 \leq \sigma_{\min}(V_{0:q})^2.
 \label{eq:declared-vandermonde}
\end{equation}
Whenever the right-hand side is positive and every branch weight is at least
$w_0$, \eqref{eq:signalbound} gives
$\gamma_q\geq w_0\underline v_{q,u}^{2}$.  The executable certificate uses
$(q,u)=(K,K)$ before rank is known, $(s,s)$ for root recovery, and $(K,s)$ for
the weight solve.  It also uses
$d_{\min}\geq2\sin(\delta_0/2)$ and the universal leading-coefficient bound
below.  If any conservative lower bound is nonpositive, it abstains.  Values
computed from the true synthetic geometry are retained only as oracle
conditioning diagnostics for the separate empirical sweeps.

For directions, form the $(s+1)\times(s+1)$ Toeplitz matrix $T_s$ and let
$c$ be its unit kernel vector, interpreted as the coefficients of the monic-up-
to-scale annihilating polynomial.  Put
\begin{equation}
 E_s=(s+1)\eta,\qquad
 \zeta=\frac{\sqrt2E_s}{\gamma_s-E_s},\qquad
 \gamma_s=\lambda_{\min}^{+}(T_s).
\end{equation}
When $E_s<\gamma_s$, the Davis--Kahan theorem permits a phase alignment such
that $\|\widehat c-c\|_2\leq\zeta$.  Let
  $d_{\min}=2\sin(\delta/2)$ and let $c_s$ be the leading coefficient.  For
  $s=1$, set $d_{\min}=2$ and interpret the empty product below as one; no
  separation assumption is needed.  On the
circle of radius $r<\min\{1,d_{\min}/2\}$ around a true root $z_j$,
\begin{equation}
 |P_c(z)|\geq |c_s|r(d_{\min}-r)^{s-1},\qquad
 |P_{\widehat c}(z)-P_c(z)|
 \leq\sqrt{s+1}(1+r)^s\zeta.
\end{equation}
Therefore the Rouch\'e condition
\begin{equation}
 \boxed{\sqrt{s+1}(1+r)^s\zeta
 <|c_s|r(d_{\min}-r)^{s-1}}
 \label{eq:rouche}
\end{equation}
places one estimated root in each disjoint disk.  Radial projection to the
unit circle gives the safe angular bound $2\arcsin r$.

Let this angular bound be $a$ and put
\begin{equation}
 L_{K,s}=\sqrt{s\sum_{k=0}^{K}k^2}.
\end{equation}
The leading-coefficient bound used by the executable certificate does not need
the unknown roots.  If $P_c(z)=c_s\prod_{j=1}^s(z-z_j)$ and $\|c\|_2=1$, then
the monic coefficient of degree $a$ has magnitude at most $\binom{s}{a}$,
because every $|z_j|=1$.  Vandermonde's identity therefore gives
\begin{equation}
 |c_s|^{-2}\leq\sum_{a=0}^s\binom{s}{a}^2
 =\binom{2s}{s},
 \qquad |c_s|\geq\binom{2s}{s}^{-1/2}.
 \label{eq:leading-coefficient-bound}
\end{equation}

The perturbed Vandermonde matrix satisfies
$\|\widehat V-V\|_\op\leq L_{K,s}a$.  Comparing the nonnegative
least-squares residual at its minimizer with the residual at the true weights
gives, before unit-mass renormalization,
\begin{equation}
 \|\widetilde w-w\|_2\leq
 \frac{2\{\sqrt{K+1}\eta+L_{K,s}a\}}
 {\sigma_{\min}(V)-L_{K,s}a},
 \label{eq:weightbound}
\end{equation}
provided the denominator is positive.  Unit-mass renormalization is stable
when the right-hand side, denoted $q_w$, is below $1/\sqrt{s}$.  Indeed,
$|\mathbf1^\top\widetilde w-1|\leq\sqrt{s}q_w$ and therefore
\begin{equation}
 \left\|\frac{\widetilde w}{\mathbf1^\top\widetilde w}-w\right\|_2
 \leq \frac{(1+\sqrt{s})q_w}{1-\sqrt{s}q_w}.
 \label{eq:normalizedweightbound}
\end{equation}
Equations
\eqref{eq:signalbound}, \eqref{eq:rouche}, and \eqref{eq:weightbound} are
sufficient certification tests when their required class-level bounds are
declared in advance.  Quantities evaluated using the true geometry are oracle
diagnostics, not data-derived certificates.

\subsection{Proof of the stagewise stability proposition}

We prove Proposition~\ref{prop:stagewise}.  Weyl's inequality and
$\|\widehat T_K-T_K\|_\op\leq(K+1)\eta$ show that
$\eta<\gamma_K/\bigl(2(K+1)\bigr)$ separates the $s$ positive eigenvalues from the
perturbed null eigenvalues.  This establishes the count stage without
solving for a single root or weight.

After count is known, Davis--Kahan controls the annihilating coefficient vector
by
\begin{equation}
 \|\widehat c-c\|_2\leq
 \frac{\sqrt2(s+1)\eta}{\gamma_s-(s+1)\eta}.
\end{equation}
Substitution into \eqref{eq:rouche}, with
$r\leq\sin(a_0/2)$, gives an additional sufficient condition for the declared
angular tolerance $a_0$.  It depends on the root separation $d_{\min}$ and the
annihilating polynomial, quantities absent from the count test.  Given the
angle bound, \eqref{eq:weightbound} adds the separate requirement
$L_{K,s}a<\sigma_{\min}(V)$ and a tolerance on the renormalized weight
error.  Hence this sufficient full-geometry certificate is the intersection of
the rank, root, and weight conditions, while the count certificate uses only
the first.  Whenever either later sufficient margin is smaller than
$\eta_{\rm count}$, the bounds exhibit an interval certified for count but not
for full geometry.  Failure of a later inequality is an abstention at that
stage, not evidence that the earlier recovered rank is wrong.

Finally, condition on a validated calibration error
$\|\widehat x_0-x_0\|=\sigma\|\widehat b-b\|$.  Querying around the estimated
center translates the normalized shell by $\widehat b-b$.  The mean-value
theorem gives
\begin{equation}
 |g_{\widehat b-b}(\phi)-g_0(\phi)|
 \leq R\|\widehat b-b\|
 \sup_{\|z-R\omega_\phi\|\leq\|\widehat b-b\|}
 \|\nabla F_\Lambda(z)\|_\op.
 \label{eq:centerbound}
\end{equation}
We denote the displayed supremum by $C_{\rm ctr}$.  This term is added before
any rank, root, or weight threshold is applied.

\section{Finite-data and learned-score errors}
\label{app:finitedata}

We give the constants behind \eqref{eq:endtoend}.  Let
$\mathcal N_{\rm ctr}=\bigcup_j\{z:\|z-z_j\|\leq\eps_{\rm ctr}\}$ contain
every translated query allowed by the calibrated center-error bound.  Assume
the weighted tangent
approximation
\begin{equation}
 \sup_{z\in\mathcal N_{\rm ctr}}\left\{|A_\sigma(z)-A_0(z)|+
 \|B_\sigma(z)-B_0(z)\|\right\}
 \leq C_{\rm tan}\sigma^\alpha
 \label{eq:tangentapprox}
\end{equation}
and define
\begin{equation}
 a_*:=\inf_{z\in\mathcal N_{\rm ctr}}A_0(z)>0,\qquad
 S_0:=\sup_{z\in\mathcal N_{\rm ctr}}\|B_0(z)/A_0(z)\|.
\end{equation}
If $C_{\rm tan}\sigma^\alpha\leq a_*/2$, direct ratio subtraction yields
\begin{equation}
 \max_j\|\sigma s_\sigma(x_j)-F_\Lambda(z_j)\|
 \leq\frac{2(1+S_0)C_{\rm tan}}{a_*}\sigma^\alpha
 =:\eps_{\rm geom}.
 \label{eq:geombound}
\end{equation}

For the empirical KDE, let
\begin{equation}
 W_{ij}=e^{-\|x_j-X_i\|^2/(2\sigma^2)},\qquad
 V_{ij}=\frac{X_i-x_j}{\sigma}W_{ij},
\end{equation}
with expectations $a_j=\E W_{ij}$, $b_j=\E V_{ij}$ and
$S=\max_j\|b_j/a_j\|$.  The envelopes
\begin{equation}
 0\leq W_{ij}\leq1,\qquad
 \|V_{ij}\|\leq e^{-1/2},\qquad
 \E\|V_{ij}\|^2\leq(2/e)a_j
 \label{eq:envelopes}
\end{equation}
follow from maximizing $re^{-r^2/2}$ and using
$r^2e^{-r^2}\leq(2/e)e^{-r^2/2}$.

Centering gives
$\|V_{ij}-b_j\|\leq2e^{-1/2}$ and
$\E\|V_{ij}-b_j\|^2\leq(2/e)a_j$.  We use the Hilbert-space Bernstein
consequence of \citet[Theorem~3.4]{pinelis1994martingales}:
\begin{equation}
 \left\|\frac1N\sum_{i=1}^NY_i\right\|
 \leq\sqrt{\frac{2vt}{N}}+\frac{2Lt}{3N}
 \label{eq:hilbertbernstein}
\end{equation}
with probability at least $1-2e^{-t}$ for independent mean-zero vectors with
$\|Y_i\|\leq L$ and $\E\|Y_i\|^2\leq v$.  Set $t=\log(4M/\delta)$ and
\begin{equation}
 D_j=\sqrt{\frac{2a_jt}{N}}+\frac{t}{3N},\qquad
 Q_j=\sqrt{\frac{4a_jt}{eN}}+
 \frac{4e^{-1/2}t}{3N}.
\end{equation}
Scalar Bernstein for $W_{ij}$, vector Bernstein for $V_{ij}$, and a union
bound imply that, with probability at least $1-\delta$, simultaneously
\begin{equation}
 |\overline W_j-a_j|\leq D_j,\qquad
 \|\overline V_j-b_j\|\leq Q_j.
\end{equation}
Whenever $D_j<a_j$, another ratio subtraction gives
\begin{equation}
 \left\|\frac{\overline V_j}{\overline W_j}-\frac{b_j}{a_j}\right\|
 \leq\frac{Q_j+SD_j}{a_j-D_j}.
 \label{eq:ratioconc}
\end{equation}

Suppose $a_j\geq a_-\sigma^d$ and put
$\lambda=Na_-\sigma^d$.  If
\begin{equation}
 \sqrt{\frac{2t}{\lambda}}+\frac{t}{3\lambda}\leq\frac12,
 \label{eq:coveragecondition}
\end{equation}
then \eqref{eq:ratioconc} simplifies to
\begin{equation}
 \eps_{\rm KDE}=2\left[
 \sqrt{\frac{4t}{e\lambda}}+
 \frac{4e^{-1/2}t}{3\lambda}
 +S\left(\sqrt{\frac{2t}{\lambda}}+
 \frac{t}{3\lambda}\right)\right].
 \label{eq:explicitkde}
\end{equation}
This makes the denominator condition and the effective local coverage
explicit, rather than hiding them inside big-$O$ notation.

If the learned score has validated pointwise error
\begin{equation}
 \max_j\|s_\theta(x_j,\sigma)-\widehat s_{N,\sigma}(x_j)\|
 \leq\eps_\theta(\sigma),
\end{equation}
its normalized vector-field contribution is $\sigma\eps_\theta$.  Because
$g(\phi)=R\langle t_\phi,F_\Lambda(R\omega_\phi)\rangle$, converting a
normalized vector-field error into the scalar shell observation multiplies it
by $R$.  Adding \eqref{eq:geombound}, \eqref{eq:explicitkde}, the network term,
and \eqref{eq:centerbound} therefore proves \eqref{eq:endtoend}, with
$\eps_2\leq\eps_{\rm shell}$.  If $d$ is replaced by an estimate
$\widehat d$, fix a compact declared interval $[d_-,d_+]\subset(0,\infty)$
containing both values.  Differentiation under the radial integral shows that
each multiplier $\kappa_k(d,R)$ used by the finite inverse is continuously
differentiable in $d$.  Positivity and compactness therefore give
\begin{equation}
 \max_{1\leq k\leq K}
 \left|\log\rho_k(\widehat d,R)-\log\rho_k(d,R)\right|
 \leq C_d|\widehat d-d|.
 \label{eq:dimension-multiplier-error}
\end{equation}
Since $|m_k|\leq1$, this changes the recovered moments by at most a constant
multiple of $\eps_d$.  Inserting the combined result into
the finite-query theorem and then the three perturbation tests proves the
end-to-end guarantee with no additional probability loss.

\section{Counterexamples delimiting the observation model}
\label{app:counterexamples}

The following examples separate limitations of particular detectors from the
information available in the full score field.  They are not assumptions or
ingredients of the one-shell theorem.

\subsection{Fine-scale Hessian collapse with retained score displacement}

For an empirical Gaussian KDE, let $\pi_i(x)$ be the Gaussian posterior weight
of sample $X_i$.  Direct differentiation gives
\begin{equation}
 \widehat s_{N,\sigma}(x)
 =\frac{\sum_i\pi_i(x)X_i-x}{\sigma^2},
 \qquad
 \nabla^2\log\widehat p_{N,\sigma}(x)
 =-\frac{I}{\sigma^2}+
 \frac{\operatorname{Cov}_{\pi(x)}(X)}{\sigma^4}.
 \label{eq:empiricalidentities}
\end{equation}
If $X_{i_*}$ is the unique nearest sample to $x$, every other posterior ratio
decays as $\exp(-c_i/\sigma^2)$ for some $c_i>0$.  Hence
\begin{equation}
 \sigma^2\nabla^2\log\widehat p_{N,\sigma}(x)\longrightarrow-I,
 \qquad
 \sigma^2\widehat s_{N,\sigma}(x)\longrightarrow X_{i_*}-x.
 \label{eq:nearest}
\end{equation}
At very fine noise, a Hessian-sign detector loses posterior-covariance
information, whereas the full score retains the nearest-sample displacement.

\subsection{The crossing/gap critical scale is detector- and measure-specific}

For $t\in[-L,L]$ and half-gap $\Delta>0$, consider
\begin{equation}
 q_\pm^0(t)=(t,\pm|t|),\qquad
 q_\pm^\Delta(t)=\left(t,\pm\sqrt{t^2+\Delta^2}\right).
 \label{eq:crossgap}
\end{equation}
We compare parameter measure $\dd t$ with arc-length measure
$w_\Delta(t)\dd t$, where
\begin{equation*}
 w_\Delta(t)=\sqrt{\frac{2t^2+\Delta^2}{t^2+\Delta^2}}.
\end{equation*}
At the symmetric origin, the infinite parameter-measure control gives
\begin{equation}
 H_{\rm cross}(0)=-\frac1{2\sigma^2}I_2,\qquad
 H_{\rm gap,param}(0)=
 \begin{pmatrix}
 -\frac1{2\sigma^2}&0\\[2pt]
 0&-\frac1{2\sigma^2}+\frac{\Delta^2}{\sigma^4}
 \end{pmatrix}.
\end{equation}
Its separation-direction eigenvalue changes sign at
$\sigma_c^{\rm param}=\sqrt2\Delta$.

Arc length changes the constant.  Put $r=\Delta/\sigma$ and
\begin{equation}
 m_2(r)=\frac{\int_\R u^2
 \sqrt{(2u^2+r^2)/(u^2+r^2)}e^{-u^2}\dd u}
 {\int_\R \sqrt{(2u^2+r^2)/(u^2+r^2)}e^{-u^2}\dd u}.
\end{equation}
The dimensionless separation-direction eigenvalue is
$m_2(r)+r^2-1$.

\begin{proposition}[Unique arc-length critical scale]
Writing $s=r^2$ and $F(s)=m_2(\sqrt s)+s-1$, $F$ has one positive,
transverse root in $(\sqrt2-1,1/2)$.  Deterministic quadrature gives
\begin{equation}
 \frac{\sigma_c^{\rm arc}}{\Delta}=\ArcCriticalRatio.
\end{equation}
\end{proposition}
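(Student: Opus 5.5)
The plan is to reduce $F$ to an explicit one-variable function and argue in three steps: locate a sign change, show monotonicity (hence uniqueness and transversality), and evaluate the root numerically with a certified error bound. Write $w_s(u)=\sqrt{(2u^2+s)/(u^2+s)}$, so that $m_2(\sqrt s)=\int u^2w_s e^{-u^2}\,\dd u/\int w_s e^{-u^2}\,\dd u$. The weight $w_s$ increases from $1$ at $u=0$ to $\sqrt2$ as $u\to\infty$. It is therefore a bounded positive perturbation of the flat weight, and in the flat case $m_2=1/2$.

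\textbf{Step 1 (limits and bracketing).} As $s\to0^+$, $w_s\to\sqrt2$ pointwise for $u\neq0$, so $m_2\to1/2$ by dominated convergence and $F(0^+)=-1/2<0$. As $s\to\infty$, $w_s\to1$, so $m_2\to1/2$ and $F(s)\sim s\to+\infty$. For every $s>0$, $w_s$ is a nondecreasing function of $u^2$. By the Chebyshev (FKG) association inequality for two nondecreasing functions of $u^2$ under the Gaussian law, $m_2(\sqrt s)\geq1/2$. Hence $F(s)\geq s-1/2>0$ for $s\geq1/2$, which places every positive root below $1/2$.

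For the lower end I will prove $F(\sqrt2-1)<0$. The bound $w_s\leq\sqrt2$ gives
\[
m_2\leq\frac{\sqrt2\int u^2e^{-u^2}\,\dd u}{\int w_se^{-u^2}\,\dd u}.
\]
The denominator can be lower-bounded using $w_s\geq\sqrt{(2u^2+s)/(u^2+s)}\geq\sqrt{1+u^2/(u^2+s)}$. If this crude estimate is not sharp enough to get below $2-\sqrt2$, I will instead evaluate $m_2(\sqrt{\sqrt2-1})$ by Gauss--Hermite quadrature with an explicit remainder bound. That is legitimate here because the integrand is analytic in a strip whose width is controlled by $s$. Since $F$ is continuous, there is then a root in $(\sqrt2-1,1/2)$.

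\textbf{Step 2 (uniqueness and transversality).} This is the main obstacle. I would show $F'(s)>0$ on $(0,1/2]$, which gives both uniqueness and transversality. Differentiating under the integral,
\[
\frac{\dd m_2}{\dd s}=\operatorname{Cov}_{\pi_s}\!\bigl(u^2,\partial_s\log w_s\bigr),
\qquad
\partial_s\log w_s=\frac12\left[\frac{1}{2u^2+s}-\frac{1}{u^2+s}\right]=-\frac{u^2}{2(2u^2+s)(u^2+s)},
\]
where $\pi_s\propto w_se^{-u^2}$. The function $\partial_s\log w_s$ is not monotone in $u^2$: it vanishes at $0$ and at $\infty$. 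So the covariance has no definite sign, but its size is what matters. Since $|\partial_s\log w_s|\leq\frac{1}{2}\,\frac{u^2}{(2u^2+s)(u^2+s)}$ and $\frac{u^2}{(2u^2+s)(u^2+s)}\leq\frac{1}{(\sqrt2+1)^2 s}$ (the maximum over $u$), Cauchy--Schwarz bounds $|\dd m_2/\dd s|$ by $\operatorname{sd}_{\pi_s}(u^2)$ times half the oscillation of $\partial_s\log w_s$.

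I expect this bound to be below $1$ for $s\geq\sqrt2-1$, since $\operatorname{sd}(u^2)\approx1/\sqrt2$. Then $F'(s)=1+\dd m_2/\dd s>0$ on the bracket, and by Step 1 there are no roots outside it. If the Cauchy--Schwarz estimate falls short, the fallback is interval arithmetic on $F'$ over the bracket, using the same certified quadrature as in Step 1.

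\textbf{Step 3 (numerical value).} With uniqueness and transversality established, I will locate the root $s_*$ by bisection or Newton iteration on the quadrature-evaluated $F$, with error bars carried through the computation. Since $r=\Delta/\sigma$, we have $\sigma_c^{\rm arc}/\Delta=1/\sqrt{s_*}$; for $s_*\approx0.4435$ this gives the stated $\ArcCriticalRatio$. Transversality guarantees that the reported digits are stable under the quadrature error.
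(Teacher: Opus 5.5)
\textbf{There is a genuine gap: you do not rule out roots in $(0,\sqrt2-1)$, so uniqueness of the positive root is not proved.}

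\textbf{What is already right.} Your Step~2 matches the paper's argument. Cauchy--Schwarz plus the half-oscillation (Popoviciu) bound, with $\partial_s\log w_s\in[-(3-2\sqrt2)/(2s),0]$, gives $|\dd m_2/\dd s|\le\frac{\sqrt2-1}{4}\operatorname{sd}_{\pi_s}(u^2)$ for $s\ge\sqrt2-1$. You should replace the heuristic $\operatorname{sd}(u^2)\approx1/\sqrt2$ by the rigorous bound $\operatorname{Var}_{\pi_s}(u^2)\le\E_{\pi_s}u^4\le\sqrt2\cdot\frac34$, which follows from $1\le w_s\le\sqrt2$. This yields $F'\ge1-\frac{\sqrt2-1}{4}\sqrt{3\sqrt2/4}>0.89$. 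Your Chebyshev argument for $m_2>1/2$ is also fine; it is strict because $w_s$ is strictly increasing in $u^2$, and it rules out roots in $[1/2,\infty)$.

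\textbf{The gap below the bracket.} You establish $F<0$ only at the single point $s=\sqrt2-1$ and in the limit $s\to0^+$. You then assert that Step~1 excludes roots outside $(\sqrt2-1,1/2)$, but it does not. Two negative values leave room for an even number of roots, or a tangential root, in $(0,\sqrt2-1)$. Your derivative bound is applied only for $s\ge\sqrt2-1$, and it degrades like $1/s$, so it cannot deliver your announced $F'>0$ on all of $(0,1/2]$. Uniqueness of the positive root is exactly what the proposition asserts.

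Your crude estimate also genuinely fails at the left endpoint. Under the weight $e^{-u^2}/\sqrt\pi$ one has $\E[w_s]\approx1.166$ at $s=\sqrt2-1$. The bound then gives only $m_2\lesssim0.606$, which is above $2-\sqrt2\approx0.586$. So even the single sign $F(\sqrt2-1)<0$ would rest on certified quadrature.

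\textbf{The missing idea: the paper's integration by parts.} Write $ue^{-u^2}=-\tfrac12(e^{-u^2})'$ and use $u\,\partial_u\log w_s=su^2/((2u^2+s)(u^2+s))$. This gives
\[
 2m_2(\sqrt s)=1+\E_{\pi_s}\!\left[\frac{su^2}{(2u^2+s)(u^2+s)}\right].
\]
The bracketed function takes values in $[0,3-2\sqrt2]$, with its maximum at $u^2=s/\sqrt2$. Hence $1/2<m_2<2-\sqrt2$ for every $s>0$. Therefore $F(s)<s+1-\sqrt2\le0$ on all of $(0,\sqrt2-1]$, and $F(s)>s-1/2\ge0$ on $[1/2,\infty)$, with no numerics. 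This is exactly why $\sqrt2-1$ is the left endpoint.

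\textbf{A repair using only your own tools.} Your crude bound with $w_s\ge1$ in the denominator gives $m_2<\sqrt2/2$, hence $F<0$ for $s\le1-\sqrt2/2\approx0.293$. Your derivative bound stays below $0.16$ on $[0.293,\sqrt2-1]$, so $F$ is increasing there. Combined with a certified value $F(\sqrt2-1)<0$, this closes the gap. The integration-by-parts identity is shorter and needs no quadrature.

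Your Step~3 agrees with the paper.
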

\begin{proof}
Let $\pi_s$ have density proportional to
$\sqrt{(2t^2+s)/(t^2+s)}e^{-t^2}$ and write $m(s)=\E_s[t^2]$.
Integration by parts yields
\begin{equation}
 2m(s)=1+\E_s\frac{s t^2}{(2t^2+s)(t^2+s)},
\end{equation}
so $1/2<m(s)<2-\sqrt2$.  These bounds give opposite signs at
$s=\sqrt2-1$ and $s=1/2$.  On that interval, differentiating under the
integral and applying Popoviciu and Cauchy--Schwarz gives
\begin{equation}
 F'(s)\geq
 1-\frac{\sqrt2-1}{4}\sqrt{\frac{3\sqrt2}{4}}
 >\ArcDerivativeLower.
\end{equation}
Thus the root exists, is unique, and is transverse.  The displayed decimal is
generated by deterministic quadrature.
\end{proof}

\subsection{The coarse exponent depends on the reference measure}

Let $L=\ell\sigma$, $r=\Delta/\sigma$, and define the physical score
discrepancy on $\|x\|\leq c\sigma$ by
\begin{equation}
 D_{\sigma,L}^{\mu}(c)
 :=\sup_{\|x\|\leq c\sigma}
 \|s_{\sigma,L,\Delta}^{\mu}(x)-s_{\sigma,L,0}^{\mu}(x)\|,
\end{equation}
where $\mu$ is either parameter measure or arc length.

\begin{proposition}[Measure-dependent coarse-scale laws]
For fixed $c,\ell>0$ and $r\downarrow0$, finite positive constants satisfy
\begin{equation}
 \begin{aligned}
 D_{\sigma,\ell\sigma}^{\rm arc}(c)
 &=C_{\rm arc}(c,\ell)\frac{\Delta}{\sigma^2}
   +o\!\left(\frac{\Delta}{\sigma^2}\right),\\
 D_{\sigma,\ell\sigma}^{\rm param}(c)
 &=C_{\rm param}(c,\ell)\frac{\Delta^2}{\sigma^3}
   +o\!\left(\frac{\Delta^2}{\sigma^3}\right).
 \end{aligned}
 \label{eq:coarselaws}
\end{equation}
\end{proposition}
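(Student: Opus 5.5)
The plan is to pass to normalized coordinates and linearize in $r=\Delta/\sigma$. With $z=x/\sigma$ and integration variable $u=t/\sigma\in[-\ell,\ell]$, the physical score is $\sigma^{-1}$ times the normalized ratio $B_r(z)/A_r(z)$ of \eqref{eq:exactratio}. The two arcs are $(u,\pm\sqrt{u^2+r^2})$, and the arc-length weight is $w_r(u)=\sqrt{(2u^2+r^2)/(u^2+r^2)}$, while $w_0\equiv\sqrt2$. Hence $D^{\mu}_{\sigma,\ell\sigma}(c)=\sigma^{-1}\sup_{\|z\|\le c}\|F^{\mu}_r(z)-F^{\mu}_0(z)\|$. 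It therefore suffices to show
\[
\sup_{\|z\|\le c}\|F^{\rm arc}_r-F^{\rm arc}_0\|=C_{\rm arc}\,r+o(r),\qquad
\sup_{\|z\|\le c}\|F^{\rm param}_r-F^{\rm param}_0\|=C_{\rm param}\,r^2+o(r^2),
\]
since $r/\sigma=\Delta/\sigma^2$ and $r^2/\sigma=\Delta^2/\sigma^3$. Throughout, $z$ ranges over a compact ball and $u$ over a compact interval. Hence $W_z$ and $H_z(q)=(q-z)W_z(q)$ are smooth with uniformly bounded derivatives, and $A_0\ge a_c>0$. The ratio perturbation is $\delta F=(\delta B-F_0\,\delta A)/A_0+O(|\delta A|^2+\|\delta B\|^2)$, so everything reduces to expanding $\delta A$ and $\delta B$ uniformly in $z$.

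\textbf{Geometric displacement (both measures).} Write $\delta(u)=\sqrt{u^2+r^2}-|u|\in[0,r]$. Then $\delta(u)\le r^2/(2|u|)$ for $|u|\ge r$, so $\int\delta=O(r^2\log(1/r))$ and $\int\delta^2=O(r^3)$. Taylor expansion of the upper and lower branches together gives a first-order term $\int \delta(u)\,G_z(u)\,\dd u$, where
\[
G_z(u)=\partial_y\Phi_z(u,|u|)-\partial_y\Phi_z(u,-|u|),\qquad \Phi_z\in\{W_z,H_z\},
\]
the minus sign arising because the lower branch moves downward. The key observation is $G_z(0)=0$, and $G_z$ is Lipschitz uniformly in $z$, so $|G_z(u)|\le C|u|$. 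This kills the logarithm. Indeed $\int \delta\, G_z=r^2\int_{-\ell}^{\ell}G_z(u)/(2|u|)\,\dd u+O(r^3\log(1/r))$, where the integral converges absolutely and the correction comes from the window $|u|\lesssim r$ and from $\delta-r^2/(2|u|)$. The second-order remainder is $O(\int\delta^2)=O(r^3)$. So for parameter measure, $\delta F=r^2\mathcal L_z+o(r^2)$ uniformly, with an explicit continuous field $\mathcal L_z$. For arc length the same computation shows the displacement contributes only $O(r^2)$, because $w_r$ is bounded.

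\textbf{Weight defect (arc length only).} Substituting $u=rv$ gives
\[
\int_{-\ell}^{\ell}\bigl(w_r(u)-\sqrt2\bigr)\phi(u)\,\dd u=-\kappa r\,\phi(0)+o(r),\qquad
\kappa=\int_{\R}\Bigl(\sqrt2-\sqrt{\tfrac{2v^2+1}{v^2+1}}\Bigr)\dd v\in(0,\infty),
\]
for any uniformly Lipschitz $\phi$. The integrand is positive and $O(v^{-2})$, and the error is $O(r^2\log(1/r))$ using $|\phi(u)-\phi(0)|\le C|u|$. Applying this on both branches at the common point $0$ gives $\delta A=-2\kappa rW_z(0)+o(r)$ and $\delta B=2\kappa r\,zW_z(0)+o(r)$. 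Hence
\[
F^{\rm arc}_r(z)-F^{\rm arc}_0(z)=\frac{2\kappa r\,W_z(0)}{A_0(z)}\bigl(z+F_0(z)\bigr)+o(r)
\]
uniformly on $\|z\|\le c$. Taking the supremum of a uniformly convergent expansion then gives $C_{\rm arc}=2\kappa\sup_{\|z\|\le c}W_z(0)\|z+F_0(z)\|/A_0(z)$ and $C_{\rm param}=\sup_{\|z\|\le c}\|\mathcal L_z\|$.

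\textbf{Main obstacle: positivity of the constants.} Finiteness is immediate, so the difficulty is showing the constants are strictly positive. For arc length, $z+F_0(z)$ is the posterior mean $\E_{\Pi_z}U$ from \eqref{eq:posteriorhessian}. It vanishes at $z=0$ by symmetry, but for small $z=\epsilon e_1$ it equals $\epsilon(I+\nabla F_0(0))e_1+O(\epsilon^2)=\epsilon\,\mathrm{Cov}_{\Pi_0}(U)e_1\neq0$, since the $\Pi_0$ covariance is positive definite; so $C_{\rm arc}>0$ for every $c>0$. For the parameter measure, the plan is the same: expand $\mathcal L_z$ to first order at $z=0$ along the separating axis $e_2$ and show that coefficient is a nonzero Gaussian integral, which I would evaluate explicitly. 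If that coefficient degenerates, I would fall back to exhibiting one point $\|z\|\le c$ where the explicit one-dimensional integral $\int G_z(u)/(2|u|)\,\dd u$ has a definite sign.
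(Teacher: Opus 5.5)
Your proposal is correct and rests on the same two mechanisms as the paper's proof. The first is a reflection-symmetric displacement that enters at order $r^2$. The second is the width-$r$ arc-length boundary layer $w_r(u)-\sqrt2=h(u/r)$ with $\int h\neq0$, which enters at order $r$. Physical units then contribute the factor $\sigma^{-1}$.

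The technical route for the $r^2$ term differs. The paper sums the two branches first, so the kernel becomes $2e^{-(u-z_1)^2/2-(u^2+r^2)/2-z_2^2/2}\cosh\bigl(z_2\sqrt{u^2+r^2}\bigr)$, which is jointly analytic in $(z,r^2)$. The parameter-measure density is then $P_0+r^2J+O(r^4)$ in $C^1$, with no logarithmic bookkeeping. You instead Taylor expand each branch in $\delta(u)$ and use $G_z(0)=0$ to cancel the $r^2\log(1/r)$ term; this cancellation is the branchwise form of the same symmetry. Your route costs more estimation but gives explicit leading fields and constants. Your positivity argument for $C_{\rm arc}$, via $z+F_0(z)=\E_{\Pi_z}U$ and $\operatorname{Cov}_{\Pi_0}(U)\succ0$, actually proves what the paper only asserts. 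Indeed, the paper's $\nabla(H/P_0)$ is a nonzero constant multiple of $W_z(0)\bigl(z+F_0(z)\bigr)/A_0(z)$.

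The step you leave open, $C_{\rm param}>0$, closes directly with the paper's symmetrized kernel. Both configurations are invariant under $u_1\mapsto-u_1$ and $u_2\mapsto-u_2$, so $F_r(0)=0$ for every $r$ and hence $\mathcal L_0=0$. At $z=0$ the $\cosh$ form gives $\partial_{z_2}A_r(0)=0$ and
\[
\partial_{z_2}(F_r)_2(0)=\frac{\int_{-\ell}^{\ell}(u^2+r^2)e^{-u^2}\,\dd u}{\int_{-\ell}^{\ell}e^{-u^2}\,\dd u}-1 .
\]
This differs from its $r=0$ value by exactly $r^2$ for every $\ell$, matching the $\Delta^2/\sigma^4$ shift in the paper's infinite-window Hessian. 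By joint analyticity, $\mathcal L_z=\partial_{r^2}F_r(z)|_{r=0}$ and $\partial_{z_2}(\mathcal L_z)_2|_{z=0}=1$. Therefore $\mathcal L_{\epsilon e_2}=\epsilon e_2+O(\epsilon^2)\neq0$ for small $\epsilon\in(0,c]$, and $C_{\rm param}>0$. Differentiating your explicit integral for $\mathcal L_z$ under the integral sign gives the same value without invoking analyticity. With this step filled in, your argument is complete.
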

\begin{proof}
After reflection symmetry is used, the rescaled geometric kernel is analytic in
$r^2$.  Parameter measure therefore changes its density by
$r^2J+o_{C^1}(r^2)$.  For arc length,
\begin{equation}
 w_r(u)-\sqrt2=h(u/r),\qquad
 h(v)=\sqrt{\frac{2v^2+1}{v^2+1}}-\sqrt2\in L^1(\R),
\end{equation}
and $\int h\neq0$.  This width-$r$ boundary layer changes the density by
$rH+o_{C^1}(r)$.  A physical score is $\sigma^{-1}\nabla_z\log P$, producing
the two orders in \eqref{eq:coarselaws}.  The leading ratios $H/P_0$ and $J/P_0$
are nonconstant on coordinate lines, so both displayed constants are positive.
\end{proof}
The linear arc-length term comes from the mass element, not from branch
displacement.  Thus neither $(\Delta/\sigma)$ nor $(\Delta/\sigma)^2$ is a
measure-independent score law; physical score units add another $1/\sigma$.

\subsection{Why these facts do not prove multiscale necessity}

At the common center, $g(y)=y_2^2-y_1^2$ is zero on the crossing and equals
$\Delta^2$ on the gap.  By \eqref{eq:nearest}, a sufficiently fine raw-score
query recovers a nearest sample and distinguishes these supports.  For a class
with $\Delta\geq\Delta_{\min}>0$, one common sufficiently fine scale works.
The Hessian detector's bandwidth window is therefore not a fixed-scale lower
bound for the full collection of score queries.

Small far-field density is also irrelevant to log-score magnitude.  For a
point mass,
\begin{equation}
 p_\sigma(x)\propto e^{-\|x-y\|^2/(2\sigma^2)},\qquad
 \nabla\log p_\sigma(x)=\frac{y-x}{\sigma^2}.
\end{equation}
A deterministic check reaches
$\log_{10}p_\sigma=\FarFieldLogDensity$ while
$\|s_\sigma\|=\FarFieldScoreNorm$.  Density vanishes exponentially while the
score grows with distance.

Consequently, heterogeneous feature sizes alone do not imply multiscale
necessity under an absolute-error score oracle.  The main result instead asks
what one shell identifies.

\section{Experimental protocols and additional diagnostics}
\label{app:experiments}

\begin{table}[H]
  \centering
  \small
  \caption{Information permissions for the three reported inference roles.
  Only a certified output carries a sufficient guarantee, and only when its
  external error budget and declared class bounds are valid.}
  \label{tab:inference-roles}
  \begin{tabular}{@{}p{0.18\linewidth}p{0.31\linewidth}p{0.39\linewidth}@{}}
    \toprule
    Role & Information beyond the queried field & Interpretation \\
    \midrule
    Blind candidate & $R$, $K$, query grid, and numerical rank rule & Uncertified count, directions, and weights \\
    Oracle diagnostic & True synthetic signal gap or geometry & Controlled diagnosis only; not deployable \\
    Certified output & Valid moment-error budget and class bounds fixed in advance & Stagewise guarantee, otherwise abstention \\
    \bottomrule
  \end{tabular}
\end{table}

\subsection{Single-scale calibration and three-dimensional recovery}

The weak-calibration sweep draws twenty separated positive ray cones for each
$D\in\{2,3,5\}$.  It uses $3(D+1)$ Gaussian test functions with normalized
bandwidth $h=0.75$ and tensor Gauss--Hermite rules of orders 15, 13, and 8,
respectively.  Each exact score is also perturbed by a deterministic smooth
random field at root-mean-square magnitudes
$0.003$, $0.01$, and $0.03$.  Test centers, cone centers, directions, weights,
and perturbations are all generated from recorded integer seeds.  The table
reports exact numerical quadrature error separately from perturbation
sensitivity.

The spherical experiment alternates two and three non-coplanar rays over
twenty seeds.  At normalized radius $R=2$, it reconstructs log shell density by
24-point Gauss--Legendre integration along each north-pole meridian on a
$24\times56$ polar--azimuth quadrature grid.  A 96-point rule computes the
Gaussian--cone multipliers.  Spherical harmonics through degree five are
converted to polynomial moments, after which a pivoted moment block and three
coordinate multiplication matrices recover the joint eigenvalue tuples.  Full
geometry requires the correct count, maximum angular error at most
$0.1^\circ$, and maximum weight error at most $10^{-3}$.

The learned calibration uses a non-coplanar weighted three-ray distribution in
$\R^3$ translated away from the coordinate origin.  Each of five seeds samples
50,000 points on length-four rays and trains a four-block residual SiLU MLP of
width 192 for 8,000 AdamW updates with batches of 1,024.  Denoising score
matching draws noise levels $0.12$, $0.18$, and $0.25$; calibration is evaluated
at $\sigma=0.18$ with the same weak Gaussian tests.  The implementation records
one aggregate row per seed and discards the model immediately after evaluation.
It writes no checkpoint, score-prediction array, run manifest, or hash ledger.

\subsection{Shell feasibility and finite-query evaluation}

The deterministic shell-feasibility experiment verifies the regular
three-ray/four-ray Hessian collision, computes their shell-field separation,
and scans the inverse harmonic multipliers over normalized radius and degree.
The finite-query matrix then varies
\begin{equation}
 \begin{aligned}
 R&\in\{2,3,4\},\\
 M&\in\{12,16,24,32,48,64\},\\
 \text{score-noise standard deviation}
   &\in\{0,0.001,0.01,0.05\}.
 \end{aligned}
\end{equation}
over seeds $0{:}11$, angular separations from $10^\circ$ to $90^\circ$, and
minimum weights from 0.02 to 0.25.  It includes two-, three-, and four-ray
cases, an isolated close pair, a four-ray cluster, blind and oracle rank, and
normalized center offsets through 0.8.  These crossed factors produce
\FiniteQueryTrials{}
trials.  Exact evaluation cases at $R=3,M=32$ all recover.  The noisy results are
reported as an empirical conditioning map, not as a replacement for the
worst-case perturbation theorem.

Every numerical candidate uses $K=4$.  The blind rank rule sorts the five
Toeplitz eigenvalues in decreasing order and maximizes the adjacent relative
gap
\begin{equation}
 r_s=\frac{\max\{\lambda_s,\epsilon_{\rm eig}\}}
 {\max\{|\lambda_{s+1}|,\epsilon_{\rm eig}\}},\qquad s=1{:}4,
\end{equation}
after flooring magnitudes at
\begin{equation}
 \epsilon_{\rm eig}=\max\{10(K+1)\epsilon_{\rm mach}\lambda_{\max},10^{-15}\}.
\end{equation}
This relative-gap rule produces only an uncertified candidate.  Whenever a
valid moment-error bound $\bar\eta$ and a declared class gap are supplied, the
certified path instead sets $E_K=(K+1)\bar\eta$ and counts the eigenvalues
strictly above $E_K$.  It then verifies that the declared gap exceeds $2E_K$
and abstains if this test fails.  It never substitutes the largest internal
signal eigengap for the zero-versus-positive threshold.
For the selected rank $s$, the smallest-eigenvalue eigenvector of $T_s$
supplies the annihilating coefficients.  Reversing these coefficients exposes
the roots, which are projected radially to the unit circle.  Nonnegative least
squares on the real and imaginary parts of modes $0{:}K$ then gives the
normalized weights.  The reported total comprises
\FiniteConditioningTrials{} conditioning trials and
\FiniteLocalizationTrials{} artificial center-offset trials.

The separate certificate execution fixes its geometry class before any
perturbation: $K=4$, $w_0=0.10$, $\delta_0=80^\circ$, angle tolerance
$5^\circ$, and weight tolerance $0.05$.  It includes regular and unequal
three- and four-ray measures, all satisfying those bounds.  For each of seven
moment-error budgets and twelve seeds, every nonzero moment receives a random
complex perturbation whose magnitude is at most the stated budget.  A seed
keeps the same perturbation direction while its magnitude changes across error
budgets, producing paired error paths rather than independent trials.  The code
uses the worst-case Gershgorin bounds in
\eqref{eq:declared-vandermonde}, the coefficient bound
\eqref{eq:leading-coefficient-bound}, and bisection for the smallest radius
satisfying the Rouch\'e test.  It then applies the normalized weight-error
bound.  This produces \CertificateExecutionTrials{} trials.  Automated checks
verify that every count report is correct and every reported direction or
weight error lies below its computed bound.  The implementation assumes a
valid input error budget and standard floating-point arithmetic; its numerical
floor protects linear algebra but is not an interval-arithmetic proof.

\begin{table}[H]
  \centering
  \caption{Execution of the pre-specified count--direction--weight certificate.
  Entries are positive reports out of 48 bounded perturbations at each error
  level; unreported cases abstain at that stage.}
  \label{tab:certificate-execution}
  \begin{tabular}{rccc}
\toprule
Moment-error bound & Count & Directions & Full geometry \\
\midrule
$1e-09$ & 48/48 & 48/48 & 48/48 \\
$1e-08$ & 48/48 & 48/48 & 48/48 \\
$1e-07$ & 48/48 & 48/48 & 48/48 \\
$1e-06$ & 48/48 & 48/48 & 24/48 \\
$1e-04$ & 48/48 & 24/48 & 0/48 \\
$1e-03$ & 48/48 & 0/48 & 0/48 \\
$1e-02$ & 0/48 & 0/48 & 0/48 \\
\bottomrule
\end{tabular}

\end{table}

\begin{figure}[H]
  \centering
  \includegraphics[width=0.99\linewidth]{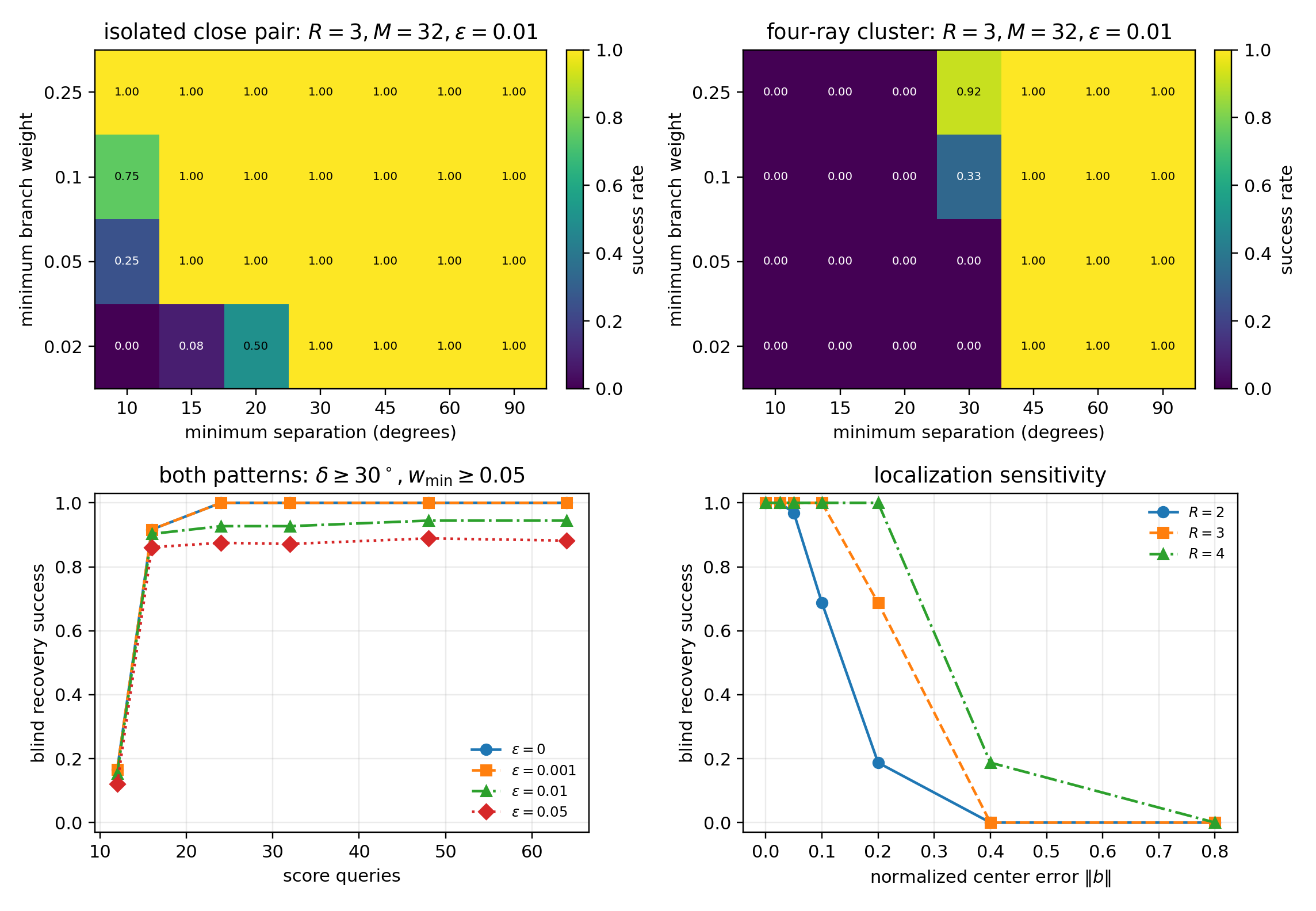}
  \caption{Finite-query recovery separates conditioning, query, noise, and
  center-offset failure modes.  Rates aggregate 12 seeds per configuration;
  success requires correct count, maximum angular error $5^\circ$, and maximum
  weight error $0.05$.  Top: an isolated close pair and a four-ray cluster at
  matched minimum separation and weight.  Bottom: query/noise sensitivity and
  center-offset sensitivity by shell radius.  Cell numbers, markers, and line
  styles make the rates readable without relying on color.}
  \label{fig:finite-query-diagnostics}
\end{figure}

\subsection{Population, KDE, held-out geometry, and perturbations}

The named geometry set contains an endpoint, a line, two corners, a regular
Y, a crossing, an unequal four-ray junction, and a clustered four-ray
junction.  Twelve held-out instances are deterministically generated with two
to four rays, continuous angles and weights, and minimum separation
$30^\circ$; they are not selected from the named template library.  Radial
Poisson coverage ranges from 16 to 1024 over seeds $0{:}31$.

For each sample, the script computes the exact population score, the exact
empirical Gaussian KDE score, blind recovery, the fixed-template
baseline, and a one-point Hessian baseline based on principal-component
analysis (PCA).  The full target is count,
angles, and weights rather than a geometry label.  In normalized coordinates,
a curved branch is parameterized as
$r\theta+(0.02/2)r^2\theta^\perp$ for $0\leq r\leq9$; ambient perturbations add
isotropic Gaussian noise of coordinate standard deviation 0.1, and center
error translates every query by $(0.1,0)$.  Additional matrices apply each
perturbation and all three together at coverages 256 and 1024.  Twice the
split-sample moment discrepancy plus the exactly computed population-to-tangent
moment bias feed an oracle rank gate, which compares
that proxy with half the true synthetic Toeplitz signal gap.  This is neither a
calibrated high-probability bound nor an implementation of the later root and
weight tests.  We therefore report its abstention and its full-recovery rate
conditional on reporting a rank.

The \EmpiricalKDETrials{} empirical trials comprise \CoverageTrials{} coverage
trials and \PerturbationTrials{} perturbation trials.  In the coverage matrix,
\CoverageNonAbstaining{} fields are non-abstaining and
\CoverageCorrectReported{} of those meet the full-geometry tolerance.  In the
perturbation matrix, the diagnostic reports a rank for
\PerturbationNonAbstaining{} fields, and all \PerturbationCorrectReported{}
meet the same tolerance.  These oracle-assisted conditional counts are kept
separate from blind numerical recovery.

\begin{figure}[H]
  \centering
  \includegraphics[width=0.98\linewidth]{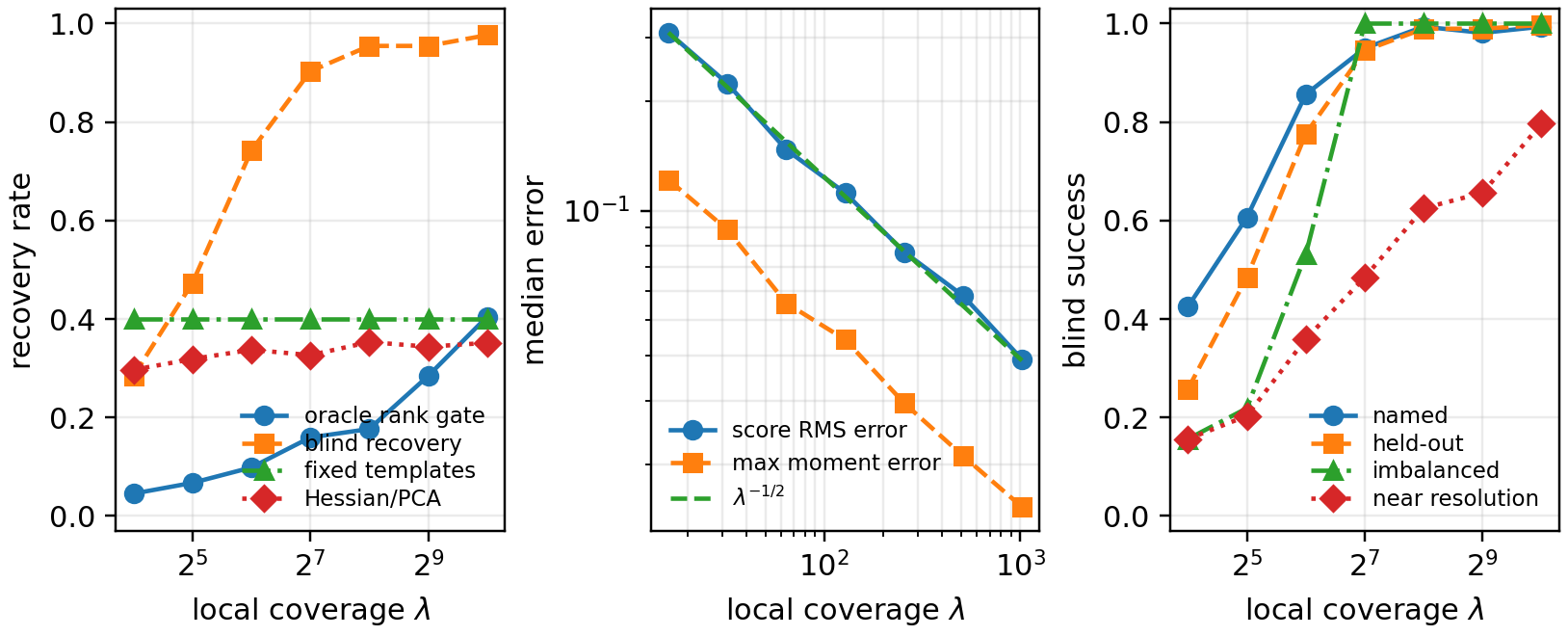}
  \caption{Empirical KDE recovery approaches the population field as local
  coverage grows.  Across \EmpiricalKDETrials{} trials, local coverage is the
  expected effective number of nearby samples receiving appreciable Gaussian
  weight at a query.  Left: oracle-rank and blind recovery, with fixed-template
  and one-point Hessian/PCA checks; the rank gate uses the true synthetic signal
  gap.  Middle: normalized-score RMS, maximum moment error, and an inverse-square-root
  reference slope.  Right: blind recovery on named, held-out, imbalanced, and
  near-resolution subsets.  Markers and line styles identify every curve
  without relying on color.}
  \label{fig:kde}
\end{figure}

\FloatBarrier

\subsection{Single-geometry training-budget sensitivity}

The sensitivity study fixes one weak four-ray geometry: angles
$(0.05,1.18,3.02,5.10)$ and weights $(0.05,0.25,0.30,0.40)$.  It crosses three
dataset sizes (512, 2,048, and 8,192), three noise scales (0.1, 0.2, and 0.4),
widths 32 and 128, \LearnedSeedCount{} seeds, batch size 512, and 1,000 updates.
We evaluate a time-conditioned network and single-noise specialists, extending
four width-128 controls to 5,000 updates.

At the standard budget, \SensitivityUndercountSettings{} configurations had
empirical KDE full success in every seed but learned undercount in
\SensitivityUndercountSeeds{} seeds.  The \SensitivityLongControls{} long
controls then restored count in several settings; full weighted geometry
remained outside tolerance in
\SensitivityLongFailureMin{}--\SensitivityLongFailureMax{} of seeds.  These
runs do not support irreversible branch loss as the cause of the earlier count
errors.  This single-geometry, single-MLP matrix measures training-budget
sensitivity; the fixed evaluation below covers multiple geometries and
parameterizations.

\begin{figure}[H]
  \centering
  \includegraphics[width=0.99\linewidth]{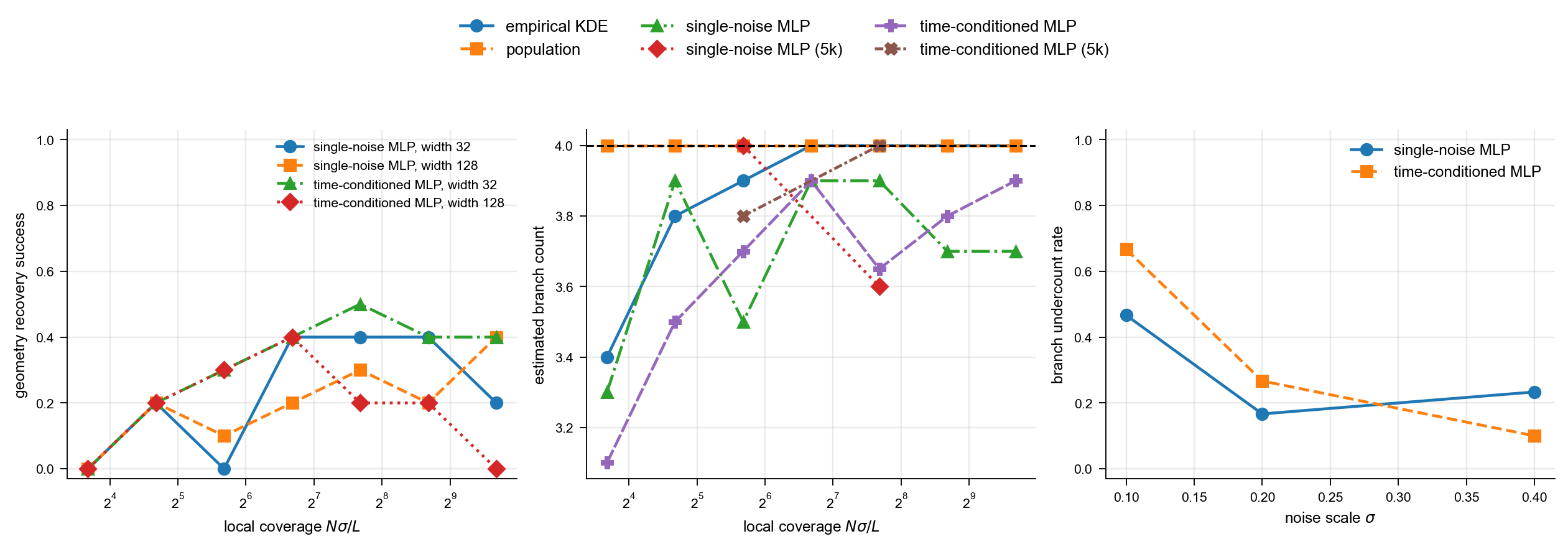}
  \caption{Longer training can restore count without full geometry.  The study
  uses \LearnedSeedCount{} seeds per configuration; its local-coverage proxy is
  $N\sigma/L$.  Left: full-geometry success versus coverage.  Middle: estimated
  count, with the true count marked by a dashed line.  Right: undercount rate
  versus noise scale.  Markers and line styles distinguish empirical,
  time-conditioned, and single-noise fields and their training budgets.  The
  external legend identifies the six middle-panel fields; the other panels use
  local legends.}
  \label{fig:training-sensitivity}
\end{figure}

\subsection{Cross-geometry evaluation}

The fixed evaluation matrix uses the following geometries:
\begin{table}[H]
\centering
\caption{Synthetic geometries used in the fixed cross-geometry learned-score
evaluation.}
\label{tab:learned-geometries}
\begin{tabular}{lll}
\toprule
Geometry & Angles (radians) & Weights\\
\midrule
regular Y & $0.20,2.2944,4.3888$ & $1/3,1/3,1/3$\\
weak Y & $0.25,2.15,4.55$ & $0.05,0.40,0.55$\\
clustered four & $0.10,0.75,1.40,4.20$ & $0.25,0.25,0.25,0.25$\\
weak four & $0.05,1.18,3.02,5.10$ & $0.05,0.25,0.30,0.40$\\
\bottomrule
\end{tabular}
\end{table}
The plain network has three hidden SiLU layers of width 128 and
\LearnedPlainParameters{}
parameters.  The residual network has four residual SiLU blocks of width 64
and \LearnedResidualParameters{} parameters.  Both are time-conditioned and trained with the same
denoising score-matching construction, optimizer, batch size 512, and
\LearnedSeedCount{} seeds indexed from zero.  A clean point first selects a branch according to its weight and a
radius uniformly from $[0,4]$.  For each training item,
\begin{equation}
 \sigma\sim\operatorname{Unif}\{0.1,0.2,0.4\},\qquad
 Z\sim\mathcal N(0,I_2),\qquad
 \widetilde X=X/\sigma+Z,
\end{equation}
and the per-coordinate denoising loss is
\begin{equation}
 \mathcal L(\theta)=\frac12\E
 \left\|f_\theta(\widetilde X,\sigma)+Z\right\|_2^2.
\end{equation}
Optimization uses AdamW with learning rate $2\times10^{-3}$, weight decay
$10^{-6}$, and no learning-rate schedule.  Dataset generation, model
initialization, and minibatch/noise draws use deterministic separately offset
random streams.  The saved final-loss diagnostic is an exponential moving
average over the last 50 updates with coefficient 0.9; it is not used as
evidence for geometry recovery.

The standard matrix has $N\in\{512,8192\}$,
$\sigma\in\{0.1,0.2,0.4\}$, and 1,000 updates.  There are
\LearnedStandardModels{} distinct
standard trained models, each evaluated at all three noise scales.  The long
matrix has $N=8192$, evaluation $\sigma=0.1$, 5,000 updates from the same
initial seed as its standard counterpart, and \LearnedLongModels{} trained
models.  Together with matched population and exact empirical KDE fields, this
yields \LearnedEvaluationRows{} saved rows and \LearnedOptimizerUpdates{}
optimizer updates.

The two complete learned-score scripts train \LearnedTotalModels{} models for
\LearnedTotalUpdates{} optimizer updates.  On the reported RTX 5070 Ti, the
training-budget sensitivity script takes \SensitivityWallMinutes{} minutes and
the cross-geometry script takes \LearnedEvaluationWallMinutes{} minutes
(\LearnedTotalWallMinutes{} minutes in total); the larger PyTorch peak CUDA
reservation is \LearnedPeakCudaReservedMiB{} MiB.  The CPU fallback is
exercised by the released pilots on the reported Ryzen 5 5600: the sensitivity
pilot takes \SensitivityCpuPilotSeconds{} seconds for
\SensitivityCpuPilotUpdates{} updates, while the cross-geometry pilot takes
\LearnedEvaluationCpuPilotSeconds{} seconds for
\LearnedEvaluationCpuPilotUpdates{} updates.
These are observed costs, not extrapolated full-CPU runtimes.  Run-metadata CSV
rows record the software versions, device, driver, workload, wall time, and both
peak CUDA allocation and reservation.

The seed-level outcomes are reported directly, without a combined study-level
decision rule.  At 5,000 updates, both parameterizations recover count in at
least \LearnedStagewiseCountFloor{} seeds but full geometry in at most
\LearnedStagewiseFullCeiling{} on regular Y, clustered four, and weak
four.  On weak Y, count is correct in \WeakYPlainLongCount{} plain MLP seeds
and \WeakYResidualLongCount{} residual MLP seeds.  From 1,000 to 5,000
updates, count improves for both parameterizations on clustered four
and weak four; regular Y is already count-correct at the shorter budget.

An oracle-assisted learned-field rank-screening budget is reported as a
deliberately conservative diagnostic.  It is
\begin{equation}
 \eta_{\rm rank}=\eta_{\rm learned\to empirical}
 +\eta_{\rm empirical\to population}
 +\eta_{\rm population\to truth},
\end{equation}
and it is compared with half the true synthetic Toeplitz signal gap.  Every
long learned field fails this inequality, so the diagnostic always abstains.
Because both the population-to-truth term and the gap use the known synthetic
geometry, this is not a deployable confidence statement and it does not execute
the root or weight tests.  It records only that this worst-case triangle budget
is too conservative to validate individual learned reconstructions; it does
not replace the directly evaluated count, angle, and weight outcomes.

Those outcomes are consistent with the decomposition in
Proposition~\ref{prop:stagewise}.
Clustered four reaches \ClusterPlainLongCount{} and
\ClusterResidualLongCount{} count-correct seeds in the plain and residual
parameterizations while its median angular or weight error crosses the declared
full-geometry tolerance.  Weak four reaches \WeakFourPlainLongCount{}
count-correct seeds in the plain network and \WeakFourResidualLongCount{} in
the residual network;
its median angular error is within the declared tolerance, but
its median weight error remains outside it.  Weak-Y count recovery varies
across seeds under this budget.  Across the tested geometries, correct count
recovery can coexist with angular or weight errors above tolerance.

\begin{table}[H]
  \centering
  \caption{Fixed learned-score evaluation (successes out of
  \LearnedSeedCount{} seeds).  ``Emp.'' is exact empirical KDE full recovery;
  count and full columns show learned fields at 1k$\to$5k updates.  Angle
  (degrees) and weight columns are medians over count-correct 5k seeds only.}
  \label{tab:learned}
  \resizebox{\linewidth}{!}{\begin{tabular}{llrccrr}
\toprule
Geometry & Network & Emp. & Count 1k$\to$5k & Full 1k$\to$5k & Angle 5k & Weight 5k \\
\midrule
clustered four & plain & 4/5 & 3/5$\to$5/5 & 0/5$\to$1/5 & 6.501 & 0.055 \\
clustered four & residual & 4/5 & 3/5$\to$5/5 & 0/5$\to$0/5 & 5.344 & 0.068 \\
regular Y & plain & 5/5 & 5/5$\to$5/5 & 2/5$\to$2/5 & 1.853 & 0.065 \\
regular Y & residual & 5/5 & 5/5$\to$5/5 & 2/5$\to$1/5 & 2.221 & 0.092 \\
weak four & plain & 5/5 & 3/5$\to$5/5 & 1/5$\to$0/5 & 2.505 & 0.099 \\
weak four & residual & 5/5 & 2/5$\to$4/5 & 0/5$\to$0/5 & 3.083 & 0.111 \\
weak Y & plain & 5/5 & 1/5$\to$3/5 & 1/5$\to$2/5 & 1.836 & 0.048 \\
weak Y & residual & 5/5 & 0/5$\to$2/5 & 0/5$\to$0/5 & 2.217 & 0.072 \\
\bottomrule
\end{tabular}
}
\end{table}

\begin{figure}[H]
  \centering
  \includegraphics[width=0.99\linewidth]{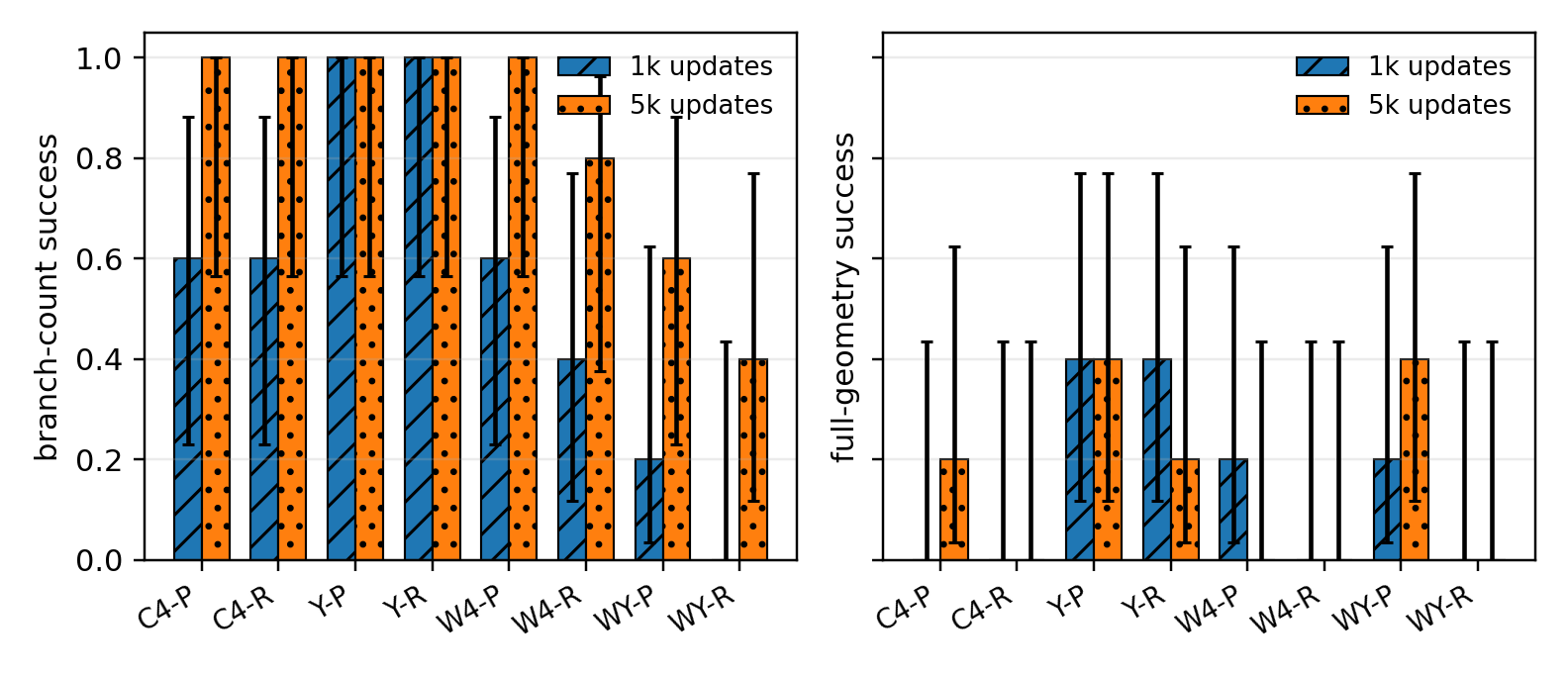}
  \caption{Fixed evaluation across four geometries and two parameter-matched
  multilayer-perceptron (MLP) parameterizations.  Left: count recovery; right:
  full-geometry recovery.
  Hatching distinguishes 1k from 5k updates, and error bars are 95\% Wilson
  intervals over \LearnedSeedCount{} seeds.  C4, Y, W4, and WY denote clustered
  four, regular Y, weak four, and weak Y; P and R denote plain and residual MLPs.}
  \label{fig:learned}
\end{figure}

\subsection{Train-to-convergence evaluation}

The 1k-to-5k comparison above asks whether additional optimization stabilizes
count recovery under the fixed inverse.  The long-horizon study instead asks
whether further improvements in normalized-score RMS also improve recovered
moments and full geometry.

The convergence study retains the four geometries in
Table~\ref{tab:learned-geometries}, fixes $N=8192$, and uses ten seeds per
architecture--geometry cell.  Plain and residual MLPs are parameter matched as
above.  A larger residual MLP with 132,866 parameters provides a capacity
control on regular Y, weak Y, and clustered four.  Every model trains
through 50,000 optimizer updates, with evaluation every 500 updates and saved
checkpoints at 1,000, 5,000, 10,000, 20,000, and 50,000 updates.  Each
trajectory uses one sampled training set.  Validation and test evaluations use
disjoint rotated shell grids against the analytic population score; empirical
score diagnostics reuse the sampled training set.

The plateau diagnostic is the first evaluation at which the best validation
normalized-score RMS improves by less than 1\% across ten consecutive
evaluations.  It is recorded only after 20,000 updates, and all models continue
to 50,000 updates.  The reported checkpoint minimizes mean validation
normalized-score RMS over the ten seeds and three trained noise scales among
the saved checkpoints at or beyond 20,000 updates.  Neither moment error nor a
geometry-recovery outcome participates in selection.

All \ConvergenceModels{} trajectories satisfy the plateau rule.  For the main
plain/residual comparison, validation selects the plain MLP at 50,000 updates
for every geometry.  From 5,000 updates to that checkpoint, normalized-score
RMS improves in every geometry, whereas maximum moment error increases in every
geometry.  The larger residual control also has higher validation score error
than the selected plain model on each of its three geometries.  Within this
protocol, the downstream errors were not resolved by training through 50,000
updates or by the tested roughly fourfold parameter increase; other schedules
and architectures remain untested.

\begin{table}[H]
  \centering
  \small
  \caption{Validation-selected train-to-convergence results at $\sigma=0.1$
  (ten seeds per row).  ``Score RMS'' is the root-mean-square error of the
  normalized score against its population value; moment error is the maximum
  error through order $K$.  Full recovery
  requires correct count, maximum angle error at most $5^\circ$, and maximum
  weight error at most $0.05$.}
  \label{tab:learned-convergence}
  \resizebox{\linewidth}{!}{\begin{tabular}{llrrrrr}
\toprule
Geometry & Arch. & Updates & Score RMS & Moment err. & Count & Full \\
\midrule
clustered four & plain & 50,000 & 0.0824 & 0.105 & 10/10 & 2/10 \\
regular Y & plain & 50,000 & 0.0993 & 0.133 & 10/10 & 2/10 \\
weak four & plain & 50,000 & 0.104 & 0.132 & 10/10 & 4/10 \\
weak Y & plain & 50,000 & 0.12 & 0.165 & 6/10 & 2/10 \\
\bottomrule
\end{tabular}
}
\end{table}

\begin{figure}[H]
  \centering
  \includegraphics[width=0.99\linewidth]{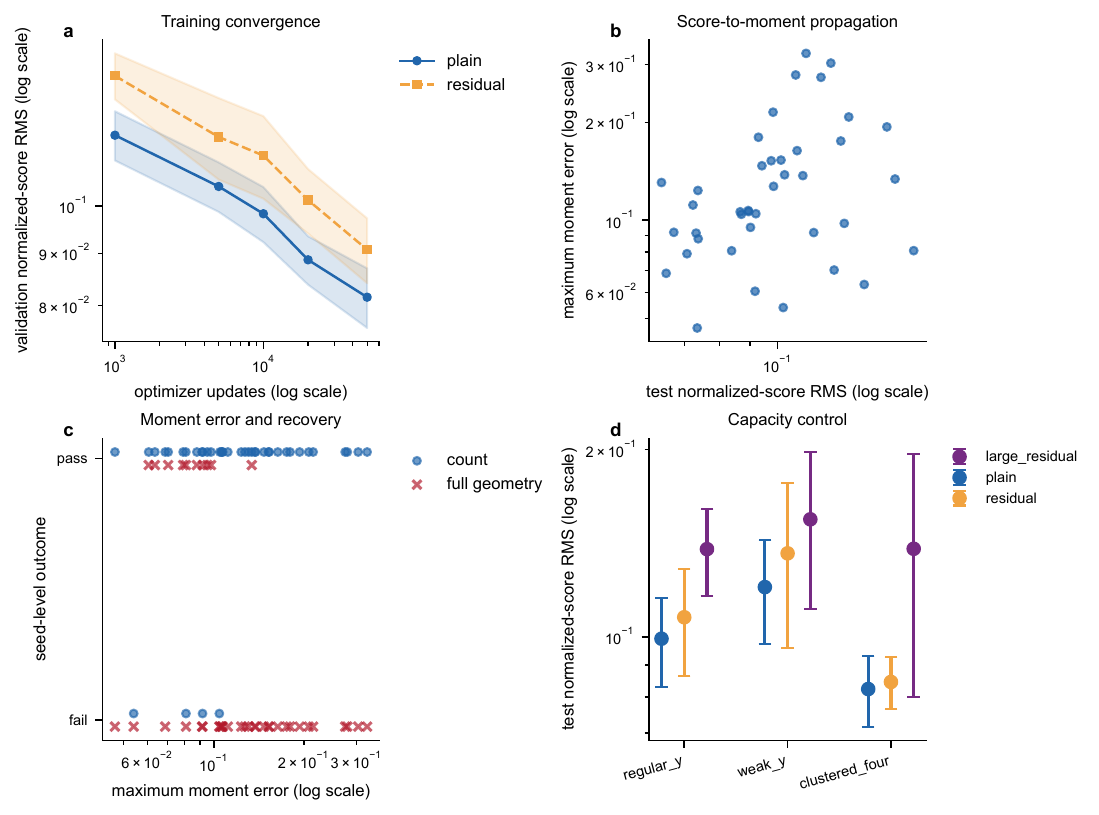}
  \caption{Train-to-convergence learned-score diagnostics.  (a) Validation
  normalized-score RMS across optimizer updates.  (b) Test normalized-score
  RMS versus maximum angular-moment error at saved checkpoints.  (c) Seed-level
  count and full-geometry outcomes versus moment error.  (d) Validation-selected
  comparison of parameter-matched plain/residual MLPs with a larger residual
  MLP on three geometries.  Curves and intervals aggregate ten seeds; checkpoint
  selection uses validation score error only.}
  \label{fig:learned-convergence}
\end{figure}

\subsection{Tolerance sensitivity from the released rows}

The main evaluation declares $5^\circ$ maximum angular error and $0.05$
maximum weight error.  To expose the dependence on that choice without
retraining or adding trials, we reaggregate the same \LearnedLongModels{}
long-run learned fields over a grid of angle and weight tolerances.

\begin{table}[H]
  \centering
  \caption{Pooled full-geometry successes over the released long-run learned
  fields.  Rows give the maximum angle error and columns the maximum weight
  error; each entry is successes out of \LearnedLongModels{} fields.  These
  fields span four geometries, two parameterizations, and
  \LearnedSeedCount{} seeds, so the pooled entries are descriptive
  reaggregations rather than additional independent replications.}
  \label{tab:threshold-sensitivity}
  \begin{tabular}{lccc}
\toprule
Angle tolerance & $0.02$ & $0.05$ & $0.10$ \\
\midrule
$2^\circ$ & 0/40 & 2/40 & 7/40 \\
$5^\circ$ & 1/40 & 6/40 & 19/40 \\
$10^\circ$ & 1/40 & 8/40 & 23/40 \\
\bottomrule
\end{tabular}

\end{table}

The changes across columns show that the absolute full-recovery fraction is
sensitive to the weight tolerance.  Failures remain even at the most
permissive displayed pair, while the stricter pairs expose different root and
weight bottlenecks.  We therefore retain the separate count, angle, and weight
measurements as the primary evidence rather than treating any one threshold as
a universal success definition.

\section{Scope of the results}
\label{app:scope}

Together, the proofs, crossing/gap derivations, and counterexamples establish
the following scope:
\begin{itemize}
\item \textbf{Proved:} weak single-scale recovery of center and homogeneity
under a full-rank test matrix; exact one-shell injectivity; constructive
at-most-$K$ finite-ray recovery in arbitrary dimension from moments through
degree $2K-1$; the $KD-1$ fixed scalar-query lower bound; sharp degree-$K$
planar recovery; finite-query moment, count, direction, and weight tests; and
the finite-data local-coverage bound under the stated assumptions.  For finite
planar $C^{1,\beta}$ junctions with $C^{0,\beta}$ positive densities, the
Gaussian-weighted tangent moments and normalized score converge at rate
$O(\sigma^\beta)$ on fixed compact normalized query sets.
\item \textbf{Observed synthetically:} weak calibration in $D=2,3,5$ under
exact, perturbed, and GPU-trained three-dimensional score fields;
non-coplanar three-dimensional score-shell recovery; held-out weighted-geometry
recovery beyond fixed templates; inverse-square-root coverage scaling; valid
execution of the complete bounded-error certificate; and the separation
between count and full-geometry fidelity across four planar geometries and two
MLP parameterizations.  The train-to-convergence extension further observes
that, across four tested geometries, the lower validation-selected
normalized-score RMS at 50k updates can coexist with higher angular-moment error
than at 5k updates.  The oracle rank gate on learned
fields is reported separately from these metrics.
\item \textbf{Ruled out under the stated score-oracle formulation:}
heterogeneous feature sizes alone implying multiscale necessity, universal
fixed-scale failure, irreversible branch loss, and far-field density decay
implying a weak log-score.  We also do not claim universality across diffusion
architectures or real data.
\end{itemize}

All reported numbers, tables, and figures are generated from executable code;
the supporting-material README lists the environment, commands, and output
files.  Seeded neural optimization may vary across hardware, so the released
raw rows can be reaggregated without retraining.

\end{document}